\newtheoremstyle{named}{}{}{\itshape}{}{\bfseries}{.}{.5em}{#1 \thmnote{#3}}
\theoremstyle{named}
\newtheorem*{namedtheorem}{Theorem}
\newcommand{\ba}{\bm{a}}
\newcommand{\bc}{\bm{c}}
\newcommand{\bd}{\bm{d}}
\newcommand{\bE}{\bm{E}}
\newcommand{\bq}{\bm{q}}
\newcommand{\bp}{\bm{p}}
\newcommand{\bu}{\bm{u}}
\newcommand{\bw}{\bm{w}}
\newcommand{\bv}{\bm{v}}
\newcommand{\bx}{\bm{x}}
\newcommand{\bg}{\bm{g}}
\newcommand{\bX}{\bm{X}}
\newcommand{\bQ}{\bm{Q}}
\newcommand{\bQlambda}{\bm{Q}_{\lambda}}
\newcommand{\bz}{\bm{z}}
\newcommand{\bs}{\bm{s}}
\newcommand{\by}{\bm{y}}
\newcommand{\be}{\bm{e}}
\newcommand{\balpha}{\bm{\alpha}}
\newcommand{\btheta}{\bm{\theta}}
\newcommand{\bI}{\bm{I}}
\newcommand{\bA}{\bm{A}}
\newcommand{\bB}{\bm{B}}
\newcommand{\bbeta}{\bm{\beta}}
\newcommand{\bgamma}{\bm{\gamma}}
\newcommand{\hatbgamma}{\hat{\bm{\gamma}}}
\newcommand{\bnu}{\bm{\nu}}
\newcommand{\bTheta}{\bm{\Theta}}
\newcommand{\bXi}{\bm{\Xi}}
\newcommand{\bbR}{\mathbb{R}}
\newcommand{\calD}{\mathcal{D}}
\newcommand{\calF}{\mathcal{F}}
\newcommand{\calJ}{\mathcal{J}}
\newcommand{\calL}{\mathcal{L}}
\newcommand{\calN}{\mathcal{N}}
\newcommand{\calQ}{\mathcal{Q}}
\newcommand{\calW}{\mathcal{W}}
\DeclareMathOperator*{\argmax}{\arg\!\max}
\DeclareMathOperator*{\argmin}{\arg\!\min}
\def\1{\bm{1}}
\def\rvI{{\mathbf{I}}}
\DeclareMathAlphabet{\mathsfit}{\encodingdefault}{\sfdefault}{m}{sl}
\SetMathAlphabet{\mathsfit}{bold}{\encodingdefault}{\sfdefault}{bx}{n}
\def\gD{{\mathcal{D}}}
\def\gJ{{\mathcal{J}}}
\def\gL{{\mathcal{L}}}
\def\gLridge{{\mathcal{L}_{\rm ridge}}}
\def\gS{{\mathcal{S}}}
\def\ourmethod{BnBTree}
\def\vert#1{\lvert #1 \rvert}
\def\Vert#1{\lVert #1 \rVert}
\definecolor{predcolor}{gray}{0.95}
\definecolor{scorecolor}{gray}{0.95}
\definecolor{riskcolor}{gray}{0.95}
\definecolor{transparentcolor}{gray}{0.95}
\definecolor{ForestGreen}{rgb}{0.13, 0.55, 0.13} 
\newcommand{\eqcomment}[1]{\tag*{\textit{\# \textcolor{ForestGreen}{#1}}}}
\newcommand{\specialQuad}{\ \ }
\def\ourmethod{OKRidge}
\def\ourmethodExpand{ExpandSuppBy1}
\def\Wtmp{\mathcal{W}_{\textrm{tmp}}}
\theoremstyle{plain}
\newtheorem{theorem}{Theorem}[section]
\newtheorem{lemma}[theorem]{Lemma}
\theoremstyle{definition}
\newtheorem{definition}[theorem]{Definition}
\theoremstyle{remark}
\title{\ourmethod: Scalable Optimal k-Sparse \\ Ridge Regression}
\author{%
  Jiachang Liu, Sam Rosen, Chudi Zhong, Cynthia Rudin \\
  Duke University \\
  \texttt{\{jiachang.liu, sam.rosen, chudi.zhong\}@duke.edu}, \\
  \texttt{cynthia@cs.duke.edu}
}
\begin{document}

\maketitle


\doparttoc 
\faketableofcontents 

\begin{abstract}
We consider an important problem in scientific discovery, namely identifying sparse governing equations for nonlinear dynamical systems.
This involves solving sparse ridge regression problems to provable optimality in order to determine which terms drive the underlying dynamics. 
We propose a fast algorithm, \ourmethod{}, for sparse ridge regression, using a novel lower bound calculation involving, first, a saddle point formulation, and from there, either solving (i) a linear system or (ii) using an ADMM-based approach, where the proximal operators can be efficiently evaluated by solving another linear system and an isotonic regression problem. 
We also propose a method to warm-start our solver, which leverages a beam search. 
Experimentally, our methods attain provable optimality with run times that are orders of magnitude faster than those of the existing MIP formulations solved by the commercial solver Gurobi.
\end{abstract}


\section{Introduction}

We are interested in identifying sparse and interpretable governing differential equations arising from nonlinear dynamical systems. These are scientific machine learning problems whose solution involves sparse linear regression. Specifically, these problems require the \textit{exact} solution of sparse regression problems, with the most basic being sparse ridge regression:
\begin{align}
\label{eq:intro_sparse_ridge_regression}
    \min_{\bbeta} \Vert{\by - \bX \bbeta}_2^2 + \lambda_2 \Vert{\bbeta}_2^2 \; \text{ subject to } \; \Vert{\bbeta}_0 \leq k,
\end{align}
where $k$ specifies the number of nonzero coefficients for the model. This formulation is general, but in the case of nonlinear dynamical systems, the outcome $y$ is a derivative (usually time or spatial) of each dimension $x$. Here, we assume that the practitioner has included the true variables, along with many other possibilities, and is looking to determine which terms (which transformations of the variables) are real and which are not.
This problem is NP-hard \cite{doi:10.1137/S0097539792240406}, and is more challenging in the presence of highly correlated features.
Selection of correct features is vital in this context, as many solutions may give good results on training data, but will quickly deviate from the true dynamics when extrapolating past the observed data due to the chaotic nature of complex dynamical systems.

Both heuristic and optimal algorithms have been proposed to solve these problems.
Heuristic methods include greedy sequential adding of features \cite{sr3, ssr, stridge, 5895106} or ensemble \cite{estlsq} methods.
These methods are fast, but often get stuck in local minima, and there is no way to assess solution quality due to the lack of a lower bound on performance.
Optimal methods provide an alternative, but are slow since they must prove optimality.
MIOSR~\cite{Bertsimas2023}, a mixed-integer programming (MIP) approach, has been able to certify optimality of solutions given enough time.
Slow solvers cause difficulty in performing cross-validation on $\lambda_2$ (the $\ell_2$ regularization coefficient) and $k$ (sparsity level).

We aim to solve sparse ridge regression to certifiable optimality, 
but in a fraction of the run time. We present a fast branch-and-bound (BnB) formulation, \ourmethod{}.
A crucial challenge is obtaining a tight and feasible lower bound for each node in the BnB tree.
It is possible to calculate the lower bound via the SOS$1$~\cite{Bertsimas2023}, big-M~\cite{bertsimas2016best}, or the perspective formulations (also known as the rotated second-order cone constraints)~\cite{frangioni2006perspective, atamturk2020safe, xie2020scalable};
the mixed-integer problems can then be solved by a MIP solver.
However, these formulations do not consider the special mathematical structure of the regression problem.
To calculate a lower bound more efficiently, we first propose a new saddle point formulation for the relaxed sparse ridge regression problem.
Based on the new saddle point formulation, we propose two novel methods to calculate the lower bound.
The first method is extremely efficient and relies on solving only a linear system of equations.
The second method is based on ADMM and can  tighten the lower bound given by the first method.
Together, these methods give us a tight lower bound, used to prune nodes and provide a small optimality gap.
Additionally, we propose a method based on beam-search \cite{wiseman2016sequence} to get a near-optimal solution quickly, which can be a starting point for both our algorithm and other MIP formulations.
Unlike previous methods, our method uses a dynamic programming approach so that previous solutions in the BnB tree can be used while exploring the current node, giving a massive speedup.
In summary, our contributions are:

\noindent(1) We develop a highly efficient customized branch-and-bound framework for achieving optimality in $k$-sparse ridge regression, using a novel lower bound calculation and heuristic search.\\
(2) To compute the lower bound, we introduce a new saddle point formulation, from which we derive two efficient methods (one based on solving a linear system and the other on ADMM). \\
(3) Our warm-start method is based on beam-search and implemented in a dynamic programming fashion, avoiding redundant calculations.
We prove that our warm-start method is an approximation algorithm with an exponential factor tighter than previous work. 

On benchmarks, \ourmethod{} certifies optimality orders of magnitude faster than the commercial solver Gurobi.
For dynamical systems, our method outperforms the state-of-the-art certifiable method by finding superior solutions, particularly in high-dimensional feature spaces.

\section{Preliminary: Dual Formulation via the Perspective Function}
There is an extensive literature on this topic, and a longer review of related work is in Appendix~\ref{appendix:related_work}.
If we ignore the constant term $\by^T \by$, we can rewrite the loss objective in Equation~\eqref{eq:intro_sparse_ridge_regression} as:
\begin{align}
    \label{eq:ridge_regression_loss}
    \calL_{\text{ridge}}(\bbeta) := \bbeta^T \bX^T \bX \bbeta - 2 \by^T \bX \bbeta + \lambda_2 \sum_{j=1}^p \beta_j^2,
\end{align}
with $p$ as the number of features.
We are interested in the following optimization problem:
\begin{align}
\label{problem:original_sparsity_constrained_ridge_regression}
    \min_{\bbeta}  \calL_{\text{ridge}}(\bbeta) \ \ \text{s.t.} \ \ (1-z_j) \beta_j = 0, \ \ \sum_{j=1}^p z_j \leq k, \ \ z_j \in \{0, 1\} ,
\end{align}
where $k$ is the number of nonzero coefficients.
With the sparsity constraint, the problem is NP-hard.
The constraint $(1-z_j) \beta_j$ in Problem~\eqref{problem:original_sparsity_constrained_ridge_regression} can be reformulated with the SOS1, big-M, or the perspective formulation (with quadratic cone constraints), which then can be solved by a MIP solver.
Since commercial solvers do not exploit the special structure of the problem, we develop a customized branch-and-bound framework.

For any function $f(a)$, the perspective function is $g(a, b):= b f(\frac{a}{b})$ for the domain  $b>0$~\cite{frangioni2006perspective, gunluk2010perspective, combettes2018perspective} and $g(a, b) = 0$ otherwise.
Applying to $f(a)=a^2$, we obtain another function $g(a, b) = \frac{a^2}{b}$.
As shown by~\cite{atamturk2020safe}, replacing the loss term $\beta_j^2$ and constraint $(1-z_j)\beta_j = 0$ with the perspective formula $\beta_j^2 / z_j$ in Problem~\eqref{problem:original_sparsity_constrained_ridge_regression} would not change the optimal solution.
By the Fenchel conjugate~\cite{atamturk2020safe}, $g(\cdot, \cdot)$ can be rewritten as $g(a, b) = \max_c a c - \frac{c^2}{4} b$.
If we define a new perspective loss as:
\begin{equation}
    \label{eq:ridge_regression_fenchel_loss}
    \calL_{\text{ridge}}^{\text{Fenchel}}(\bbeta, \bz, \bc) := \bbeta^T \bX^T \bX \bbeta - 2 \by^T \bX \bbeta + \lambda_2 \sum_{j=1}^p \left(\beta_j c_j - \frac{c_j^2}{4} z_j\right), 
\end{equation}
then we can reformulate Problem~\eqref{problem:original_sparsity_constrained_ridge_regression} as:
\begin{align}
\label{problem:primal_Fenchel_ridge_regression_mip}
    \min_{\bbeta, \bz} &\max_{\bc}  \calL_{\text{ridge}}^{\text{Fenchel}}  (\bbeta, \bz, \bc) \specialQuad \text{s.t.}  \specialQuad \sum_{j=1}^p z_j \leq k, \specialQuad z_j \in \{0, 1\}.
\end{align}
If we relax the binary constraint $ 
\{0, 1\}$ to the interval $[0, 1]$ and swap $\max$ and $\min$ (no duality gap, as pointed by~\cite{atamturk2020safe}), we obtain the dual formulation for the convex relaxation of Problem~\eqref{problem:primal_Fenchel_ridge_regression_mip}:
\begin{align}
    \label{problem:dual_Fenchel_ridge_regression}
    \max_{\bc} & \min_{\bbeta, \bz} \calL_{\text{ridge}}^{\text{Fenchel}}  (\bbeta, \bz, \bc) \specialQuad \text{s.t.} \specialQuad \sum_{j=1}^p z_j \leq k, \specialQuad z_j \in [0, 1].
\end{align}
While~\cite{atamturk2020safe} uses the perspective formulation for safe feature screening, we use it to calculate a lower bound for Problem~\eqref{problem:original_sparsity_constrained_ridge_regression}.
However, directly solving the maxmin problem is computationally challenging.
In Section~\ref{sec:lower_bound}, we propose two methods to achieve this in an efficient way.

\section{Methodology}

We propose a custom BnB framework to solve Problem~\eqref{problem:original_sparsity_constrained_ridge_regression}.
We have 3 steps to process each node in the BnB tree.
First, we calculate a lower bound of the node, using two algorithms proposed in the next subsection.
If the lower bound exceeds or equals the current best solution, we have proven that it does not lead to any optimal solution, so we prune the node.
Otherwise, we go to Step 2, where we perform beam-search to find a near-optimal solution.
In Step 3, we use the solution from Step 2 and propose a branching strategy to create new nodes in the BnB tree.
We continue until reaching the optimality gap tolerance.
Below, we elaborate on each step.
In Appendix~\ref{appendix:algorithmic_charts}, we provide visual illustrations of BnB and beam search as well as the complete pseudocodes of our algorithms.

\subsection{Lower Bound Calculation}
\label{sec:lower_bound}

\subsubsection*{Tight Saddle Point Formulation}

We first rewrite Equation~\eqref{eq:ridge_regression_loss} with a new hyperparameter $\lambda$:
\begin{align}
    \label{eq:ridge_regression_loss_lambda}
     \calL_{\text{ridge}-\lambda}(\bbeta, \bz) & := \bbeta^T \bQ_{\lambda} \bbeta - 2 \by^T \bX \bbeta + (\lambda_2 + \lambda) \sum_{j=1}^p \beta_j^2,
\end{align}
where $\bQlambda := \bX^T \bX - \lambda \bI$.
We restrict $\lambda \in [0, \lambda_{\min}(\bX^T \bX)]$, where $\lambda_{\min}(\cdot)$ denotes the minimum eigenvalue of a matrix.
We see that $\bQlambda$ is positive semidefinite, so the first term remains convex.
This trick is related to the optimal perspective formulation~\cite{zheng2014improving, dong2015regularization, han2022equivalence}, but we set the diagonal matrix $\textrm{diag}(\bd)$ in~\cite{dong2015regularization} to be $\lambda \bI$.
We call this trick the eigen-perspective formulation.
The optimal perspective formulation requires solving semidefinite programming (SDP) problems, which have been shown not scalable to high dimensions~\cite{dong2015regularization}, and MI-SDP is not supported by Gurobi.

Solving Problem~\eqref{problem:original_sparsity_constrained_ridge_regression} is equivalent to solving the following problem:
\begin{align}
\label{problem:primal_perspective_ridge_regression_lambda}
    \min_{\bbeta, \bz}\  \calL_{\text{ridge}-\lambda}(\bbeta, \bz) \specialQuad \text{s.t.} \ \ (1-z_j) \beta_j = 0,  \specialQuad \sum_{j=1}^p z_j \leq k, \specialQuad z_j \in \{0, 1\}.
\end{align}
We get a continuous relaxation of Problem~\eqref{problem:original_sparsity_constrained_ridge_regression} if we relax $\{0, 1\}$ to $[0, 1]$.

We can now define a new loss analogous to the loss defined in  Equation~\eqref{eq:ridge_regression_fenchel_loss}:
\begin{align}
    \label{eq:ridge_regression_fenchel_loss_lambda}
    \calL_{\text{ridge}-\lambda}^{\text{Fenchel}}(\bbeta, \bz, \bc) & := \bbeta^T \bQlambda \bbeta - 2 \by^T \bX \bbeta + (\lambda_2 + \lambda) \sum_{j=1}^p (\beta_j c_j - \frac{c_j^2}{4} z_j).
\end{align}
Then, the dual formulation analogous to Problem~\eqref{problem:dual_Fenchel_ridge_regression} is:
\begin{align}
\label{problem:dual_Fenchel_ridge_regression_lambda}
    \max_{\bc} & \min_{\bbeta, \bz} \calL_{\text{ridge}-\lambda}^{\text{Fenchel}}  (\bbeta, \bz, \bc) \specialQuad \text{s.t.}  \specialQuad \sum_{j=1}^p z_j \leq k, \specialQuad z_j \in [0, 1].
\end{align}
Solving Problem~\eqref{problem:dual_Fenchel_ridge_regression_lambda} provides us with a lower bound to Problem~\eqref{problem:primal_perspective_ridge_regression_lambda}.
More importantly, this lower bound becomes tighter as $\lambda$ increases. This novel formulation is the starting point for our work.

We next propose a reparametrization trick to simplify the optimization problem above.
For the inner optimization problem in Problem \eqref{problem:dual_Fenchel_ridge_regression_lambda}, given any $\bc$, the optimality condition for $\bbeta$ is (take the gradient with respect to $\bbeta$ and set the gradient to $\mathbf{0}$):
\begin{align}
    \bc &= \frac{2}{\lambda_2 + \lambda} (\bX^T \by - \bQlambda\bbeta). \label{eq:c_beta_optimality_condition_lambda}
\end{align}
Inspired by this optimality condition, we have the following theorem:
\begin{theorem}
\label{theorem:reparametrization_reduction}
    If we reparameterize $\bc= \frac{2}{\lambda_2 + \lambda} (\bX^T \by - \bQlambda \bgamma)$ with a new parameter $\bgamma$, then Problem~\eqref{problem:dual_Fenchel_ridge_regression_lambda} is equivalent to the following saddle point optimization problem:
    \begin{flalign}
        & &\max_{\bgamma} \min_{\bz} \calL_{\text{ridge}-\lambda}^{\text{saddle}}  (\bgamma, \bz) \specialQuad \text{s.t.}  \specialQuad \sum_{j=1}^p z_j \leq k, \specialQuad z_j \in [0, 1], \qquad \qquad \quad && \label{problem:dual_saddle_ridge_regression_lambda} \\
        &\text{where} &
        \calL_{\text{ridge}-\lambda}^{\text{saddle}}(\bgamma, \bz) := -\bgamma^T\bQlambda \bgamma -\frac{1}{\lambda_2 + \lambda}  (\bX^T \by - \bQlambda \bgamma)^T diag(\bz) (\bX^T \by - \bQlambda \bgamma) ,&& \label{eq:ridge_regression_saddle_loss_lambda}
\end{flalign}
and $diag(\bz)$ is a diagonal matrix with $\bz$ on the diagonal.
\end{theorem}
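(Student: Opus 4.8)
The plan is to perform the inner minimization in Problem~\eqref{problem:dual_Fenchel_ridge_regression_lambda} in closed form and check that, after the stated change of variables, exactly Problem~\eqref{problem:dual_saddle_ridge_regression_lambda} remains. The first observation is that in $\calL_{\text{ridge}-\lambda}^{\text{Fenchel}}(\bbeta,\bz,\bc)$ the variables $\bbeta$ and $\bz$ decouple: the $\bbeta$-dependent part is the convex quadratic $q_{\bc}(\bbeta) := \bbeta^T\bQlambda\bbeta - 2\by^T\bX\bbeta + (\lambda_2+\lambda)\bc^T\bbeta$, the only $\bz$-dependent part is the linear term $-\tfrac{\lambda_2+\lambda}{4}\sum_j c_j^2 z_j$, and the constraints $\sum_j z_j\le k$, $z_j\in[0,1]$ involve $\bz$ only. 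Hence $\min_{\bbeta,\bz}\calL_{\text{ridge}-\lambda}^{\text{Fenchel}} = \big(\min_{\bbeta} q_{\bc}(\bbeta)\big) + \min_{\bz \text{ feasible}}\big(-\tfrac{\lambda_2+\lambda}{4}\sum_j c_j^2 z_j\big)$.

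Next I would analyze $\min_{\bbeta} q_{\bc}(\bbeta)$. Since $\bQlambda\succeq 0$, this infimum is finite if and only if the stationarity equation $2\bQlambda\bbeta = 2\bX^T\by - (\lambda_2+\lambda)\bc$ is solvable, i.e. iff $\bX^T\by - \tfrac{\lambda_2+\lambda}{2}\bc \in \mathrm{range}(\bQlambda)$; for any other $\bc$ the infimum is $-\infty$, so the outer $\max_{\bc}$ never selects it. The key point is that the proposed reparametrization covers precisely this admissible set: as $\bgamma$ ranges over $\bbR^p$, the vector $\bc(\bgamma):=\tfrac{2}{\lambda_2+\lambda}(\bX^T\by-\bQlambda\bgamma)$ ranges over exactly $\{\bc : \bX^T\by-\tfrac{\lambda_2+\lambda}{2}\bc\in\mathrm{range}(\bQlambda)\}$, because $\{\bQlambda\bgamma:\bgamma\in\bbR^p\}=\mathrm{range}(\bQlambda)$. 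Thus restricting the outer maximization to $\bc=\bc(\bgamma)$ is without loss of generality, and, by construction of the stationarity equation, $\bbeta=\bgamma$ is a global minimizer of the convex function $q_{\bc(\bgamma)}$.

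Then I would substitute $\bbeta=\bgamma$ and simplify. Using $(\lambda_2+\lambda)\bc(\bgamma)^T\bgamma = 2(\bX^T\by-\bQlambda\bgamma)^T\bgamma = 2\by^T\bX\bgamma - 2\bgamma^T\bQlambda\bgamma$, the cross terms in $q_{\bc(\bgamma)}(\bgamma)$ cancel and $\min_{\bbeta} q_{\bc(\bgamma)}(\bbeta) = -\bgamma^T\bQlambda\bgamma$. Similarly $c_j(\bgamma)^2 = \tfrac{4}{(\lambda_2+\lambda)^2}(\bX^T\by-\bQlambda\bgamma)_j^2$, so the $\bz$-term becomes $-\tfrac{1}{\lambda_2+\lambda}(\bX^T\by-\bQlambda\bgamma)^T\diag(\bz)(\bX^T\by-\bQlambda\bgamma)$. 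Adding the two pieces gives exactly $\calL_{\text{ridge}-\lambda}^{\text{saddle}}(\bgamma,\bz)$, and since the $\bz$-feasible set is untouched, $\max_{\bc}\min_{\bbeta,\bz}\calL_{\text{ridge}-\lambda}^{\text{Fenchel}} = \max_{\bgamma}\min_{\bz}\calL_{\text{ridge}-\lambda}^{\text{saddle}}$ with identical optimal $\bz$, which is the claimed equivalence.

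The main obstacle is purely that $\bQlambda$ is only positive \emph{semi}definite — it is genuinely singular at the boundary case $\lambda=\lambda_{\min}(\bX^T\bX)$ — so one cannot just invert it to obtain a bijection $\bc\leftrightarrow\bgamma$. Two points need care: (i) rigorously justifying (via the finiteness-of-infimum argument above) that discarding the $\bc$'s for which $\min_{\bbeta} q_{\bc}=-\infty$ does not change the outer maximum, and (ii) noting that $\bgamma\mapsto\bc(\bgamma)$ is not injective when $\bQlambda$ has a nontrivial null space, which is nonetheless harmless because $\calL_{\text{ridge}-\lambda}^{\text{saddle}}(\bgamma,\bz)$ depends on $\bgamma$ only through $\bQlambda\bgamma$ and $\bgamma^T\bQlambda\bgamma$ and is therefore constant on each coset $\bgamma+\mathrm{null}(\bQlambda)$, consistent with $\bc(\bgamma)$ being well-defined there. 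When $\lambda<\lambda_{\min}(\bX^T\bX)$ one has $\bQlambda\succ0$ and the argument collapses to an invertible linear change of variables.
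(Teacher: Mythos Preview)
Your proposal is correct and follows essentially the same route as the paper. The paper's proof first isolates as a lemma the fact that the optimal $\bc$ must lie in the image of the reparametrization (proved by contradiction via rank--nullity, showing otherwise the inner $\bbeta$-minimum is $-\infty$), then substitutes $\bbeta=\bgamma$ and simplifies; your argument packages the same content as the cleaner characterization ``$\min_{\bbeta} q_{\bc}(\bbeta)>-\infty$ iff $\bX^T\by-\tfrac{\lambda_2+\lambda}{2}\bc\in\mathrm{range}(\bQlambda)$,'' and your observation (ii) that $\calL_{\text{ridge}-\lambda}^{\text{saddle}}$ is constant on cosets of $\mathrm{null}(\bQlambda)$ is a nice explicit treatment of the non-injectivity that the paper leaves implicit.
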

To our knowledge, this is the first time this formulation is given.
Solving the saddle point formulation to optimality in Problem~\eqref{problem:dual_saddle_ridge_regression_lambda} gives us a tight lower bound.
However, this is computationally hard.

Our insight is that we can solve Problem~\eqref{problem:dual_saddle_ridge_regression_lambda} approximately while still obtaining a feasible lower bound.
Let us define a new function $h(\bgamma)$ as  short-hand for the inner minimization in Problem~\eqref{problem:dual_saddle_ridge_regression_lambda}:
\begin{align}
\label{eq:h(gamma)_dual_saddle_inner_minimization_lambda}
    h(\bgamma) = &\min_{\bz} \calL_{\text{ridge}-\lambda}^{\text{saddle}}  (\bgamma, \bz) \specialQuad \text{ s.t. } \specialQuad \sum_{j=1}^p z_j \leq k, \specialQuad z_j \in [0, 1].
\end{align}
For any arbitrary $\bgamma \in \bbR^p$, $h(\bgamma)$ is a valid lower bound for Problem~\eqref{problem:original_sparsity_constrained_ridge_regression}.
We should choose $\bgamma$ such that this lower bound $h(\bgamma)$ is tight.
Below, we provide two efficient methods to calculate such a $\bgamma$.

\subsubsection*{Fast Lower Bound Calculation}

First, we provide a fast way to choose $\bgamma$.
The choice of $\bgamma$ is motivated by the following theorem:

\begin{theorem} \label{theorem:lowerBound_h(gamma)_reparametrization_lambda}
    The function $h(\bgamma)$ defined in Equation~\eqref{eq:h(gamma)_dual_saddle_inner_minimization_lambda} is lower bounded by
    \begin{align}
        h(\bgamma) \geq& -\bgamma^T \bQlambda \bgamma -\frac{1}{\lambda_2 + \lambda}\Vert{\bX^T \by - \bQlambda \bgamma}_2^2. \label{ineq:h(gamma)_reparametrization_lower_bound_lambda}
    \end{align}
    Furthermore, the right-hand size of Equation~\eqref{ineq:h(gamma)_reparametrization_lower_bound_lambda} is maximized if $\bgamma = \hat{\bgamma} = \argmin_{\balpha} \calL_{\rm ridge} (\balpha)$, where in this case, $h(\bgamma)$ evaluated at $\hat{\bgamma}$ becomes
    \begin{align}
        h(\hat{\bgamma}) = \gLridge(\hat{\bgamma}) + (\lambda_2 + \lambda) \text{SumBottom}_{p-k}(\{\hat{\gamma}_j^2\}) , \label{eq:h(gamma)_decomposed_as_ridge_plus_regret_lambda}
    \end{align}
    where $\text{SumBottom}_{p-k}(\cdot)$ denotes the summation of the smallest $p-k$ terms of a given set.
\end{theorem}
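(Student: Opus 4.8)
The plan is to first evaluate $h(\bgamma)$ in closed form and then optimize the resulting expression. Introduce the shorthand $\bv := \bX^T\by - \bQlambda\bgamma$, so that $\calL_{\text{ridge}-\lambda}^{\text{saddle}}(\bgamma,\bz) = -\bgamma^T\bQlambda\bgamma - \tfrac{1}{\lambda_2+\lambda}\sum_{j=1}^p z_j v_j^2$. The inner problem in \eqref{eq:h(gamma)_dual_saddle_inner_minimization_lambda} is linear in $\bz$ with the first term constant; since $\tfrac{1}{\lambda_2+\lambda}>0$ and every $v_j^2\ge 0$, minimizing over $\{\bz:\ \sum_j z_j\le k,\ z_j\in[0,1]\}$ is a fractional-knapsack problem whose optimum puts $z_j=1$ on the $k$ indices with the largest $v_j^2$ and $z_j=0$ otherwise. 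Hence, writing $\text{SumBottom}_{p-k}$ for the sum of the smallest $p-k$ terms, $h(\bgamma) = -\bgamma^T\bQlambda\bgamma - \tfrac{1}{\lambda_2+\lambda}\big(\Vert{\bv}_2^2 - \text{SumBottom}_{p-k}(\{v_j^2\})\big)$; in other words $h(\bgamma)$ equals the right-hand side $g(\bgamma)$ of \eqref{ineq:h(gamma)_reparametrization_lower_bound_lambda} plus the nonnegative quantity $\tfrac{1}{\lambda_2+\lambda}\text{SumBottom}_{p-k}(\{v_j^2\})$. Since that quantity is $\ge 0$, inequality \eqref{ineq:h(gamma)_reparametrization_lower_bound_lambda} follows immediately.

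Next I would maximize $g(\bgamma) = -\bgamma^T\bQlambda\bgamma - \tfrac{1}{\lambda_2+\lambda}\Vert{\bX^T\by - \bQlambda\bgamma}_2^2$ over $\bgamma\in\bbR^p$. Expanding the square, the quadratic part of $g$ is $-\bgamma^T\bQlambda\bgamma - \tfrac{1}{\lambda_2+\lambda}\bgamma^T\bQlambda^2\bgamma$, so the Hessian is $-2\bQlambda - \tfrac{2}{\lambda_2+\lambda}\bQlambda^2$, which is negative semidefinite because $\bQlambda\succeq\vzero$ for $\lambda\le\lambda_{\min}(\bX^T\bX)$ and $\lambda_2+\lambda>0$; thus $g$ is concave. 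Setting $\nabla g=\vzero$ and collecting terms (using $\bQlambda=\bX^T\bX-\lambda\bI$) gives the stationarity condition $\bQlambda\big[(\bX^T\bX+\lambda_2\bI)\bgamma - \bX^T\by\big]=\vzero$, which is satisfied by $\hat{\bgamma}=(\bX^T\bX+\lambda_2\bI)^{-1}\bX^T\by=\argmin_{\balpha}\calL_{\text{ridge}}(\balpha)$; by concavity $\hat{\bgamma}$ is a global maximizer, and since we only need that $\hat{\bgamma}$ attains the maximum (not that it is the unique critical point) this also covers the degenerate case $\lambda=\lambda_{\min}(\bX^T\bX)$ where $\bQlambda$ is singular.

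Finally I would substitute $\hat{\bgamma}$ into the closed form from the first step. The ridge normal equations give $\bX^T\by - \bX^T\bX\hat{\bgamma} = \lambda_2\hat{\bgamma}$, hence $\bv=\bX^T\by-\bQlambda\hat{\bgamma}=(\lambda_2+\lambda)\hat{\bgamma}$ and $v_j^2=(\lambda_2+\lambda)^2\hat{\gamma}_j^2$. Therefore $\tfrac{1}{\lambda_2+\lambda}\text{SumBottom}_{p-k}(\{v_j^2\})=(\lambda_2+\lambda)\,\text{SumBottom}_{p-k}(\{\hat{\gamma}_j^2\})$, while $g(\hat{\bgamma}) = -\hat{\bgamma}^T\bQlambda\hat{\bgamma} - (\lambda_2+\lambda)\Vert{\hat{\bgamma}}_2^2$; expanding $\bQlambda=\bX^T\bX-\lambda\bI$ and using $\hat{\bgamma}^T\bX^T\bX\hat{\bgamma}+\lambda_2\Vert{\hat{\bgamma}}_2^2=\hat{\bgamma}^T\bX^T\by$ (again from the normal equations) shows $g(\hat{\bgamma})=\calL_{\text{ridge}}(\hat{\bgamma})=\gLridge(\hat{\bgamma})$. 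Adding the two pieces gives \eqref{eq:h(gamma)_decomposed_as_ridge_plus_regret_lambda}. I expect the main obstacle to be the algebraic bookkeeping in the stationarity step — getting the common factor $\bQlambda$ to come out cleanly so that the ridge normal equations are exposed — together with the short verification that $g(\hat{\bgamma})=\calL_{\text{ridge}}(\hat{\bgamma})$; the rest is routine.
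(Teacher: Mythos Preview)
Your proof is correct and follows essentially the same route as the paper: both solve the inner minimization over $\bz$ (you via the fractional-knapsack observation, the paper by relaxing to $z_j=1$), take the gradient of the right-hand side to expose the factor $\bQlambda[(\bX^T\bX+\lambda_2\bI)\bgamma-\bX^T\by]$, and then use the ridge normal equations to simplify $\bv=(\lambda_2+\lambda)\hat{\bgamma}$ and identify $g(\hat{\bgamma})=\calL_{\text{ridge}}(\hat{\bgamma})$. The only organizational difference is that you compute the closed form for $h(\bgamma)$ once at the start and reuse it, whereas the paper derives the inequality and the formula for $h(\hat{\bgamma})$ in two separate passes.
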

Here we provide an intuitive explanation of why $h(\hat{\bgamma})$ is a valid lower bound.
Note that the ridge regression loss is strongly convex.
Assuming that the strongly convex parameter is $\mu$ (see Appendix \ref{appendix:background_concepts}), by the strong convexity property, we have that for any $\bgamma \in \bbR^p$,
\begin{align}
    \label{ineq:ridge_strong_convexity}
    \gLridge(\bgamma) \geq & \gLridge(\hat{\bgamma}) + \nabla \gLridge(\hat{\bgamma})^T (\bgamma - \hat{\bgamma}) + \frac{\mu}{2} \Vert{\bgamma - \hat{\bgamma}}_2^2.
\end{align}
Because $\hat{\bgamma}$ minimizes $\gLridge(\cdot)$, we have $\nabla \gLridge(\hat{\bgamma}) = \mathbf{0}$.
For the $k$-sparse vector $\bgamma$ with $\Vert{\bgamma}_0 \leq k$, the minimum for the right-hand side of Inequality~\eqref{ineq:ridge_strong_convexity} can be achieved if $\gamma_j = \hat{\gamma}_j$ for the top $k$ terms of $\hat{\gamma}_l^2$'s. This ensures the bound applies for all $k$-sparse $\bgamma$.
Thus, the $k$-sparse ridge regression loss is lower-bounded by
\begin{align*}
    \gLridge(\bgamma) \geq & \gLridge(\hat{\bgamma})+ \frac{\mu}{2} \textit{SumBottom}_{p-k} (\{ \hat{\gamma}_j^2 \}) 
\end{align*}
for $\bgamma \in \bbR^p$ with $\Vert{\bgamma}_0 \leq k$.
For ridge regression, the strong convexity $\mu$ parameter can be chosen from $[0, 2(\lambda_2 + \lambda_{\min} (\bX^T \bX))]$.
If we let $\mu = 2 (\lambda_2 + \lambda)$, we obtain $h(\hat{\bgamma})$ in Theorem~\ref{theorem:lowerBound_h(gamma)_reparametrization_lambda}.

The lower bound $h(\hat{\bgamma})$ can be calculated extremely efficiently by solving the ridge regression problem (solving the linear system $(\bX^T \bX + \lambda_2 \bI)\bgamma = \bX^T \by$ for $\bgamma$) and adding the extra $p-k$ terms.
However, this bound is not the tightest we can achieve.
In the next subsection, we discuss how to apply ADMM to maximize $h(\bgamma)$ further based on Equation~\eqref{eq:h(gamma)_dual_saddle_inner_minimization_lambda}.

\subsubsection*{Tight Lower Bound via ADMM}
Let us define $\bp := \bX^T \by - \bQlambda \bgamma$. Starting from Problem~\eqref{problem:dual_saddle_ridge_regression_lambda}, if we minimize $\bz$ in the inner optimization under the constraints $\sum_{j=1}^p z_j \leq k$ and $z_j \in [0, 1]$ for $\forall j$, we have $z_j = 1$ for the top $k$ terms of $p_j^2$ and $z_j = 0$ otherwise.
Then, Problem~\eqref{problem:dual_saddle_ridge_regression_lambda} can be reformulated as follows:
\begin{align}
\label{problem:ADMM_objective}
    -\min_{\bgamma} \left(F(\bgamma) + G(\bp) \right)
    \specialQuad \text{s.t.}  \specialQuad  \bQlambda \bgamma + \bp = \bX^T \by,
\end{align}
where $F(\bgamma):=\bgamma^T \bQlambda \bgamma$ and $ G(\bp):=\frac{1}{\lambda_2 + \lambda} \textit{SumTop}_k (\{p_j^2\})$.
The solution to this problem is a dense vector that can be used to provide a lower bound on the original $k$-sparse problem.
This problem can be solved by the alternating direction method of multipliers (ADMM)~\cite{boyd2011distributed}.
Here, we apply the iterative algorithm with the scaled dual variable $\bq$~\cite{giselsson2016linear}:
\begin{align}
    \bgamma^{t+1} &= \argmin_{\bgamma} F(\bgamma) + \frac{\rho}{2} \Vert{\bQlambda \bgamma + \bp^t - \bX^T \by + \bq^t}_2^2 \label{eq:ADMM_x_update}\\
    \btheta^{t+1} &= 2 \alpha \bQlambda \bgamma^{t+1} - (1-2 \alpha) (\bp^t - \bX^T \by) \label{eq:ADMM_xA_update}\\
    \bp^{t+1} &= \argmin_{\bp} G(\bp) + \frac{\rho}{2} \Vert{\btheta^{t+1} + \bp - \bX^T \by + \bq^t}_2^2 \label{eq:ADMM_y_update}\\
    \bq^{t+1} &= \bq^t + \btheta^{t+1} + \bp^{t+1} - \bX^T \by, \label{eq:ADMM_z_update}
\end{align}
where $\alpha$ is the relaxation factor, and $\rho$ is the step size.

It is known that ADMM suffers from slow convergence when the step size is not properly chosen.
According to ~\cite{giselsson2016linear}, to ensure the optimized linear convergence rate bound factor, we can pick $\alpha=1$ and $\rho=\frac{2}{\sqrt{\lambda_{\max}(\bQlambda) \lambda_{\min > 0}(\bQlambda)}}$\footnote{~\cite{giselsson2016linear} also considers matrix preconditioning when computing the step size, but this is computationally expensive when the number of features is large, so we ignore matrix rescaling by letting $\bE$ be the identity matrix in Section VI Subsection A of~\cite{giselsson2016linear}.}, where $\lambda_{\max}(\cdot)$ denotes the largest eigenvalue of a matrix, and $\lambda_{\min > 0}(\cdot)$ denotes the smallest positive eigenvalue of a matrix.

Having settled the choices for the relaxation factor $\alpha$ and the step size $\rho$, we are left with the task of solving Equation~\eqref{eq:ADMM_x_update} and Equation~\eqref{eq:ADMM_y_update} (also known as evaluating the proximal operators~\cite{parikh2014proximal}).
Interestingly, Equation~\eqref{eq:ADMM_x_update} can be evaluated by solving a linear system while Equation~\eqref{eq:ADMM_y_update} can be evaluated by recasting the problem as an isotonic regression problem.

\begin{theorem} \label{theorem:ADMM_proximal_operator_evaluation}
    Let $F(\bgamma) = \bgamma^T \bQlambda \bgamma$ and $G(\bp) = \frac{1}{\lambda_2 + \lambda} \text{SumTop}_k (\{p_j^2\})$.
    Then the solution for the problem $\bgamma^{t+1} = \argmin_{\bgamma} F(\bgamma) + \frac{\rho}{2} \Vert{\bQlambda \bgamma + \bp^t - \bX^T \by + \bq^t}_2^2$
    is
    \begin{equation}
    \bgamma^{t+1} = \left(\frac{2}{\rho} \bI + \bQlambda\right)^{-1} \left(\bX^T \by - \bp^t - \bq^t\right).
    \end{equation}
    Furthermore, let $\ba = \bX^T \by - \btheta^{t+1} - \bq^t$ and $\calJ$ be the indices of the top k terms of $\{\vert{a_j}\}$.
    The solution for the problem $\bp^{t+1} = \argmin_{\bp} G(\bp) + \frac{\rho}{2} \Vert{\btheta^{t+1} + \bp - \bX^T \by + \bq^t}_2^2$
    is $p^{t+1}_j = \text{sign}(a_j) \cdot \hat{v}_j$,
    \begin{flalign}
        & \text{where}
        &\hat{\bv} &= \argmin_{\bv} \sum_{j=1}^p w_j (v_j - b_j)^2 \specialQuad \text{s.t.} \specialQuad v_i \geq v_l \; \text{ if } \; \vert{a_i} \geq \vert{a_l} && \label{problem:isotonic_formulation}\\
        & &w_j &= \begin{cases}
            1 &\text{ if } j \notin \calJ \\ 1 +
            \frac{2}{\rho (\lambda_2 + \lambda)} &\text{ otherwise}
        \end{cases}, \qquad b_j = \frac{\vert{a_j}}{w_j}. && \nonumber
    \end{flalign}
    Problem~\eqref{problem:isotonic_formulation} is an isotonic regression problem and can be efficiently solved in linear time~\cite{best1990active, busing2022monotone}.

\end{theorem}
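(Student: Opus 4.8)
\emph{Overview.} The plan is to treat the two claims of the theorem independently: the closed form for the $\bgamma$-update is pure first-order optimality for a convex quadratic, while the $\bp$-update requires a reduction that turns the $\text{SumTop}_k$ penalty into a penalty on a \emph{fixed} index set, after which it becomes a weighted isotonic regression.

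\emph{The $\bgamma$-update.} This is the unconstrained minimization of $J(\bgamma) := \bgamma^T\bQlambda\bgamma + \tfrac{\rho}{2}\Vert{\bQlambda\bgamma + \bp^t - \bX^T\by + \bq^t}_2^2$, which is convex because $\bQlambda$ is positive semidefinite. I would set $\nabla J(\bgamma) = 2\bQlambda\bgamma + \rho\,\bQlambda(\bQlambda\bgamma + \bp^t - \bX^T\by + \bq^t) = \mathbf{0}$ and verify that the candidate $\bgamma^{t+1} = \big(\tfrac{2}{\rho}\bI + \bQlambda\big)^{-1}(\bX^T\by - \bp^t - \bq^t)$ is a root: its defining identity, multiplied by $\rho$, reads $(2\bI + \rho\bQlambda)\bgamma^{t+1} = \rho(\bX^T\by - \bp^t - \bq^t)$, i.e. $2\bgamma^{t+1} = -\rho(\bQlambda\bgamma^{t+1} + \bp^t - \bX^T\by + \bq^t)$, and substituting this into $\nabla J(\bgamma^{t+1})$ makes it vanish. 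The inverse exists since $\tfrac{2}{\rho}\bI + \bQlambda$ is positive definite. A stationary point of a convex function is a global minimizer, so the claimed formula is correct; this part is routine.

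\emph{The $\bp$-update, reduction to a fixed support.} Set $\ba := \bX^T\by - \btheta^{t+1} - \bq^t$, so the objective is $G(\bp) + \tfrac{\rho}{2}\Vert{\bp - \ba}_2^2$ with $G(\bp) = \kappa\,\text{SumTop}_k(\{p_j^2\})$, $\kappa := 1/(\lambda_2+\lambda)$. Since $G$ depends on $\bp$ only through the multiset $\{p_j^2\}$, while for fixed $|p_j|$ the term $(p_j-a_j)^2$ is smallest when $p_j a_j \ge 0$, there is a minimizer of the form $p_j = \text{sign}(a_j)\,v_j$ with $v_j \ge 0$; in these variables the problem is $\min_{\bv\ge\mathbf{0}}\ \kappa\,\text{SumTop}_k(\{v_j^2\}) + \tfrac{\rho}{2}\sum_j (v_j - |a_j|)^2$. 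I would then run an exchange argument: if at an optimum $|a_i| \ge |a_l|$ but $v_i < v_l$, swapping the values $v_i$ and $v_l$ leaves $\text{SumTop}_k(\{v_j^2\})$ unchanged (it only sees the multiset) and decreases $\tfrac{\rho}{2}\sum_j(v_j-|a_j|)^2$ by $\rho\,(v_l - v_i)(|a_i| - |a_l|) \ge 0$; hence an optimal $\bv$ can be taken ordered consistently with $\{|a_j|\}$. Imposing that order, the $k$ largest entries of $\{v_j^2\}$ are exactly the coordinates in $\calJ$ (the top-$k$ indices of $\{|a_j|\}$), so $\text{SumTop}_k(\{v_j^2\}) = \sum_{j\in\calJ} v_j^2$ on the feasible set.

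\emph{The $\bp$-update, completing the square.} With $\text{SumTop}_k$ replaced by $\sum_{j\in\calJ}v_j^2$, the objective separates over coordinates: coordinate $j\notin\calJ$ contributes $\tfrac{\rho}{2}(v_j-|a_j|)^2$, and coordinate $j\in\calJ$ contributes $(\kappa + \tfrac{\rho}{2})v_j^2 - \rho|a_j|v_j$ up to a constant. Completing the square in each coordinate rewrites the whole objective, up to an additive constant, as $\tfrac{\rho}{2}\sum_j w_j (v_j - b_j)^2$: matching the quadratic coefficient forces $\tfrac{\rho}{2}w_j = \kappa + \tfrac{\rho}{2}$ for $j\in\calJ$ and $\tfrac{\rho}{2}w_j = \tfrac{\rho}{2}$ otherwise, i.e. exactly the weights in the theorem ($w_j = 1 + \tfrac{2}{\rho(\lambda_2+\lambda)}$ on $\calJ$), and matching the linear coefficient gives $b_j = |a_j|/w_j$. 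Minimizing $\sum_j w_j(v_j-b_j)^2$ under $v_i \ge v_l$ whenever $|a_i|\ge|a_l|$ is weighted isotonic regression on a chain, solvable in linear time by pool-adjacent-violators~\cite{best1990active, busing2022monotone}; the constraint $\bv\ge\mathbf{0}$ used above is automatic, since all targets $b_j$ are nonnegative and an isotonic fit lies within the range of its targets. Undoing the sign substitution gives $p^{t+1}_j = \text{sign}(a_j)\,\hat v_j$, as claimed.

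\emph{Main obstacle.} Everything except the support-reduction step is mechanical. The crux is the third paragraph: showing, via the sign alignment together with the elementary rearrangement inequality behind $(v_l-v_i)(|a_i|-|a_l|)\ge 0$, that an optimal $\bv$ may be taken nonnegative and sorted like $\{|a_j|\}$. This is what collapses the combinatorial $\text{SumTop}_k$ selection onto the \emph{fixed} index set $\calJ$ and lets the remaining problem be read off as a single weighted isotonic regression.
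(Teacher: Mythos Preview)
Your proof is correct and follows essentially the same route as the paper: first-order conditions for the $\bgamma$-step, then for the $\bp$-step the reduction via sign alignment and a monotonicity argument to a fixed top-$k$ support, followed by completing the square to obtain the weighted isotonic regression. The only difference is that where the paper invokes Proposition~3.1 of \cite{eriksson2015k} for the sign-preservation and ordering properties of the optimum, you supply direct elementary arguments (the sign-alignment observation and the swap/rearrangement inequality), making your version self-contained.
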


\subsection{Beam-Search as a Heuristic}

After finishing the lower bound calculation in Section~\ref{sec:lower_bound}, we next explain how to quickly reduce the upper bound in the BnB tree.
We discuss how to add features, keep good solutions, and use dynamic programming to improve efficiency.
Lastly, we give a theoretical guarantee on the quality of our solution.

Starting from the vector $\mathbf{0}$, we add one coordinate at a time into our support until we reach a solution with support size $k$.
At each iteration, we pick the coordinate that results in the largest decrease in the ridge regression loss while keeping coefficients in the existing support fixed:
\begin{align}
    j^* \in \argmin_{j}\ \min_{\alpha} \calL_{\text{ridge}} (\bbeta + \alpha \be_j) \quad \Longleftrightarrow \quad j^*  \in \argmax_j\ \frac{(\nabla_j \calL_{\text{ridge}}(\bbeta))^2} {\Vert{\bX_{:j}}_2^2 + \lambda_2}, \label{eq:decrease_in_ridge_regression_loss_as_function_of_gradient}
\end{align}
where $\bX_{:j}$ denotes the $j$-th column of $\bX$, and the right-hand side uses an analytical solution for the line-search for $\alpha$.
This is similar to the sparse-simplex algorithm~\cite{beck2013sparsity}.
However, after adding a feature, we adjust the coefficients restricted on the new support by minimizing the ridge regression loss.

The above idea does not handle highly correlated features well.
Once a feature is added, it cannot be removed~\cite{zhang2011adaptive}.
To alleviate this problem, we use beam-search~\cite{wiseman2016sequence, liufasterrisk}, keeping the best $B$ solutions at each stage of support expansion:
\begin{align}
    j^* \in \arg \text{BottomB}_j (\min_{\alpha} \gLridge(\bbeta + \alpha \be_j)),
\end{align}
where $j^* \in \arg \text{BottomB}_j$ means $j^*$ belongs to the set of solutions whose loss is one of the B smallest losses.
Afterwards, we finetune the solution on the newly expanded support and choose the best $\text{B}$ solutions for the next stage of support expansion.
A visual illustration of beam search can be found in Figure~\ref{fig:BeamSearch_diagram} in Appendix~\ref{appendix:algorithmic_charts}, which also contains the detailed algorithm.

Although many methods have been proposed for sparse ridge regression, none of them have been designed with the BnB tree structure in mind.
Our approach is to take advantage of the search history of past nodes to speed up the search process for a current node.
To achieve this, we follow a dynamic programming approach by saving the solutions of already explored support sets.
Therefore, whenever we need to adjust coefficients on the new support during beam search, we can simply retrieve the coefficients from the history if a support has been explored in the past.
Essentially, we trade memory space for computational efficiency.

\subsubsection{Provable Guarantee} Lastly, using similar methods to ~\cite{elenberg2018restricted}, we quantify the gap between our found heuristic solution $\hat{\bbeta}$ and the optimal solution $\bbeta^*$ in Theorem \ref{theorem:upperBound_beamsearch}. Compared with Theorem 5 in ~\cite{elenberg2018restricted}, we improve the factor in the exponent from $\frac{m_{2k}}{M_{2k}}$ to $\frac{m_{2k}}{M_{1}}$ (since $M_1 \leq M_{2k}$, where $M_{1}$ and $M_{2k}$ are defined in \cite{elenberg2018restricted}).
\begin{theorem} \label{theorem:upperBound_beamsearch}
Let us define a $k$-sparse vector pair domain to be $\Omega_k := \{(\bx, \by) \in \bbR^p \times \bbR^p: \Vert{\bx}_0 \leq k, \Vert{\by}_0 \leq k, \Vert{\bx - \by}_0 \leq k\}$.
Any $M_1$ satisfying $f(\by) \leq f(\bx) + \nabla f(\bx)^T (\by - \bx) + \frac{M_1}{2} \Vert{\by - \bx}^2_2$ for all $(\bx, \by) \in \Omega_1$ is called a restricted smooth parameter with support size 1, and any $m_{2k}$ satisfying  $f(\by) \geq f(\bx) + \nabla f(\bx)^T (\by - \bx) + \frac{m_{2k}}{2} \Vert{\by - \bx}^2_2$ for all $(\bx, \by) \in \Omega_{2k}$ is called a restricted strongly convex parameter with support size $2k$.
If $\hat{\bbeta}$ is our heuristic solution by the beam-search method, and $\bbeta^{*}$ is the optimal solution, then: 
\begin{equation}
\label{eq:upperBound_beamsearch_main}
    \gL_{\text{ridge}}(\bbeta^{*})  \leq \gL_{\text{ridge}}(\hat{\bbeta}) \leq (1-e^{-m_{2k}/M_1})  \gL_{\text{ridge}}(\bbeta^{*}). 
\end{equation}
\end{theorem}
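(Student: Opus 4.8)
I would recast $k$-sparse ridge regression as the maximization of a monotone, weakly submodular set function and then run the classical greedy recursion for such functions, being careful to exploit that each beam-search increment moves along a \emph{single} coordinate and that $\gLridge$ is quadratic; this is exactly what replaces $M_{2k}$ by $M_1$ in the exponent relative to \cite{elenberg2018restricted}. For $S\subseteq\{1,\dots,p\}$ define $F(S):=-\min\{\gLridge(\bbeta):\mathrm{supp}(\bbeta)\subseteq S\}$, and let $\bbeta^{(S)}$ denote its minimizer on support $S$. Because the constant $\by^{\top}\by$ was dropped, $\gLridge(\vzero)=0$, so $F(\emptyset)=0$; and $F$ is monotone nondecreasing since enlarging $S$ enlarges the feasible set of the inner minimization. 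Since the beam-search subroutine refits the coefficients on whichever support it keeps, its output satisfies $\gLridge(\hat{\bbeta})=-F(\hat S)$ for the returned size-$k$ support $\hat S$, and likewise $\gLridge(\bbeta^{*})=-F(S^{*})$ with $|S^{*}|\le k$. The left inequality in \eqref{eq:upperBound_beamsearch_main} is then immediate ($\hat{\bbeta}$ is a feasible $k$-sparse point and $\bbeta^{*}$ is optimal), and the right inequality is equivalent to $F(\hat S)\ge(1-e^{-m_{2k}/M_1})\,F(S^{*})$, which is what I prove.

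Next I would establish a two-sided estimate of the marginal gains of $F$. Fix $L$ with $|L|<k$; first-order optimality of $\bbeta^{(L)}$ gives $\nabla_j\gLridge(\bbeta^{(L)})=0$ for $j\in L$. For a single coordinate $j\notin L$, the curvature of the quadratic $\gLridge$ along $\be_j$ from \emph{any} base point equals $\nabla^2_{jj}\gLridge=2(\Vert{\bX_{:j}}_2^2+\lambda_2)\le M_1$ (the last inequality is the $\Omega_1$-definition of $M_1$ applied to $(\vzero,t\be_j)$), so the exact line search in \eqref{eq:decrease_in_ridge_regression_loss_as_function_of_gradient} yields
\begin{equation*}
F(L\cup\{j\})-F(L)\ \ge\ \frac{\big(\nabla_j\gLridge(\bbeta^{(L)})\big)^2}{2M_1}.
\end{equation*}
For the opposite bound, take $S$ disjoint from $L$ with $|L\cup S|\le 2k$; applying the restricted strong convexity on $\Omega_{2k}$ to the pair $(\bbeta^{(L)},\bbeta^{(L\cup S)})$, using that $\nabla\gLridge(\bbeta^{(L)})$ vanishes on $L$, and maximizing the resulting quadratic over the coordinates indexed by $S$, gives
\begin{equation*}
F(L\cup S)-F(L)\ \le\ \frac{1}{2m_{2k}}\sum_{j\in S}\big(\nabla_j\gLridge(\bbeta^{(L)})\big)^2.
\end{equation*}
Dividing these shows the submodularity ratio of $F$ over the sets that occur (all of size at most $2k$) is at least $m_{2k}/M_1$, i.e. $\sum_{j\in S}\big(F(L\cup\{j\})-F(L)\big)\ge\frac{m_{2k}}{M_1}\big(F(L\cup S)-F(L)\big)$.

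With the ratio in hand I would run the recursion directly on the beam. Let $a_i:=\max\{F(T):T\text{ a support of size }i\text{ retained at stage }i\text{ of beam search}\}$, with maximizer $T_i$. Beam search extends each retained support by every single coordinate, so $a_{i+1}\ge\max_j F(T_i\cup\{j\})$ regardless of which $B$ extensions survive the pruning. Since picking the best coordinate dominates the average over $j\in S^{*}\setminus T_i$, the ratio bound together with monotonicity ($F(T_i\cup S^{*})\ge F(S^{*})$) gives
\begin{equation*}
a_{i+1}-a_i\ \ge\ \frac{m_{2k}}{kM_1}\big(F(S^{*})-a_i\big).
\end{equation*}
Unrolling from $a_0=F(\emptyset)=0$ gives $F(S^{*})-a_k\le\big(1-\frac{m_{2k}}{kM_1}\big)^k F(S^{*})\le e^{-m_{2k}/M_1}F(S^{*})$. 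Since beam search returns the size-$k$ support of smallest refitted loss, $a_k=F(\hat S)=-\gLridge(\hat{\bbeta})$, and rearranging $F(\hat S)\ge(1-e^{-m_{2k}/M_1})F(S^{*})$ gives the right-hand inequality of \eqref{eq:upperBound_beamsearch_main}.

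The main obstacle is the support-size bookkeeping behind the ratio bound: one must see that the \emph{numerator} needs only the one-coordinate smoothness constant $M_1$ — which is legitimate because $\gLridge$ is quadratic, so the line-search curvature along $\be_j$ from the non-sparse point $\bbeta^{(L)}$ is the same constant $\nabla^2_{jj}\gLridge\le M_1$ it would be from a $1$-sparse point — whereas the \emph{denominator} genuinely requires the $\Omega_{2k}$ strong-convexity constant, since $\bbeta^{(L\cup S)}-\bbeta^{(L)}$ may be $2k$-sparse; this asymmetry is precisely the source of the improvement over \cite{elenberg2018restricted}. A minor point to confirm is that the refitting step makes $\gLridge(\hat{\bbeta})=-F(\hat S)$ exactly, and that a beam of width $B\ge1$ suffices for the recursion above (which already goes through for $B=1$, i.e. pure greedy).
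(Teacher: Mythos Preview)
Your approach is essentially the paper's: both bound the per-step gain via the exact one-coordinate line search (this is where $M_1$ enters, since $\gLridge$ is quadratic and the diagonal Hessian entry $2(\Vert{\bX_{:j}}_2^2+\lambda_2)\le M_1$), bound the remaining gap via restricted strong convexity on $\Omega_{2k}$, and unroll the recursion $(F(S^*)-a_{i+1})\le(1-\tfrac{m_{2k}}{kM_1})(F(S^*)-a_i)$. The paper works directly with $\gLridge(\bbeta^t)$ rather than in set-function/submodularity-ratio language, but the ingredients and the key observation that quadraticity lets the numerator use $M_1$ instead of $M_{2k}$ are identical.

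One step is overstated. The claim $a_{i+1}\ge\max_j F(T_i\cup\{j\})$ is not guaranteed: \texttt{ExpandSuppBy1} generates only the top-$B$ extensions of $T_i$ ranked by the \emph{line-search} decrease $\tfrac{(\nabla_j\gLridge(\bbeta^{(T_i)}))^2}{4(\Vert{\bX_{:j}}_2^2+\lambda_2)}$, and after refitting the ordering can change, so the coordinate maximizing $F(T_i\cup\{j\})$ may never be tried. What does hold (and is exactly what the paper uses) is the weaker
\[
a_{i+1}-a_i\ \ge\ \max_j \frac{(\nabla_j\gLridge(\bbeta^{(T_i)}))^2}{4(\Vert{\bX_{:j}}_2^2+\lambda_2)}\ \ge\ \max_j\frac{(\nabla_j\gLridge(\bbeta^{(T_i)}))^2}{2M_1},
\]
since the line-search-best coordinate is always among the top-$B$ and refitting only lowers the loss. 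Combining this directly with your second display (the $m_{2k}$ bound) --- bypassing the intermediate ratio inequality on $F(L\cup\{j\})-F(L)$ --- yields the same recursion and closes the proof.
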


\subsection{Branching and Queuing}

\textbf{Branching:} The most common branching techniques include most-infeasible branching and strong branching~\cite{applegate1998solution, achterberg2005branching, bonami2011more, belotti2013mixed}.
However, these two techniques require having fractional values for the binary variables $z_j$'s, which we do not compute in our framework.
Instead, we propose a new branching strategy based on our heuristic solution $\hat{\bbeta}$:
we branch on the coordinate whose coefficient, if set to $0$, would result in the largest increase in the ridge regression loss $\calL_{\text{ridge}}$ (See Appendix~\ref{appendix:algorithmic_charts} for details):
\begin{align}
    j^* = \argmax_{j} \calL_{\text{ridge}}(\hat{\bbeta} - \hat{\beta}_j \be_j).
\end{align}
The intuition is that the coordinate with the largest increase in $\calL_{\text{ridge}}$ potentially plays a significant role, so we want to fix such a coordinate as early as possible in the BnB tree.

\noindent\textbf{Queuing:} Besides the branching strategy, we need a queue to pick a node to explore among newly created nodes.
Here, we use a breadth-first approach, evaluating nodes in the order they are created.

\begin{figure}
    \centering
    \includegraphics[width=1.0\textwidth]{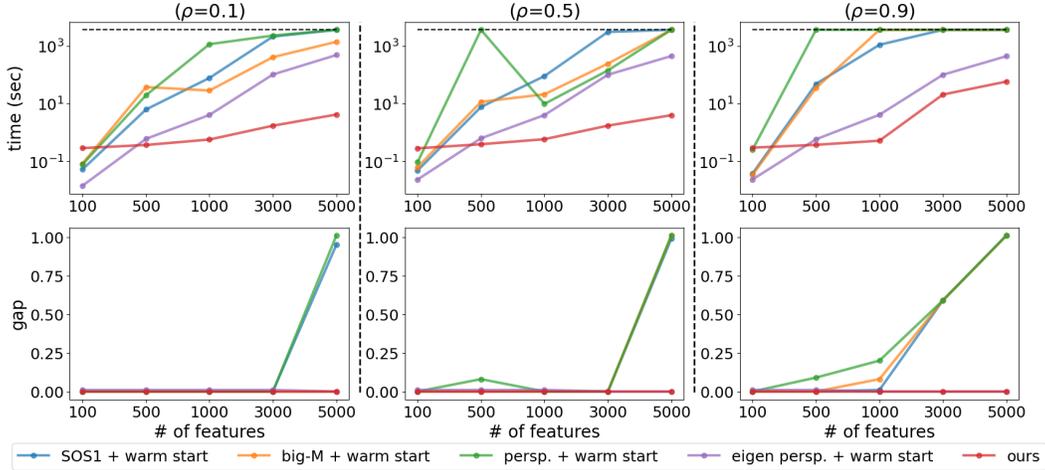}
    \caption{Comparison of running time (top row) and optimality gap (bottom row) between our method and baselines, varying the number of features, for three correlation levels $\rho=0.1, 0.5, 0.9$ ($n=100000, k=10$). Time is on the log scale. Our method is generally orders of magnitude faster than other approaches. Our method achieves the smallest optimality gap, especially when the feature correlation $\rho$ is high.}
    \label{fig:new_exp1_time_gap_warmstart}
\end{figure}
\begin{figure}[H]
    \centering
    \includegraphics[width=1.0\textwidth]{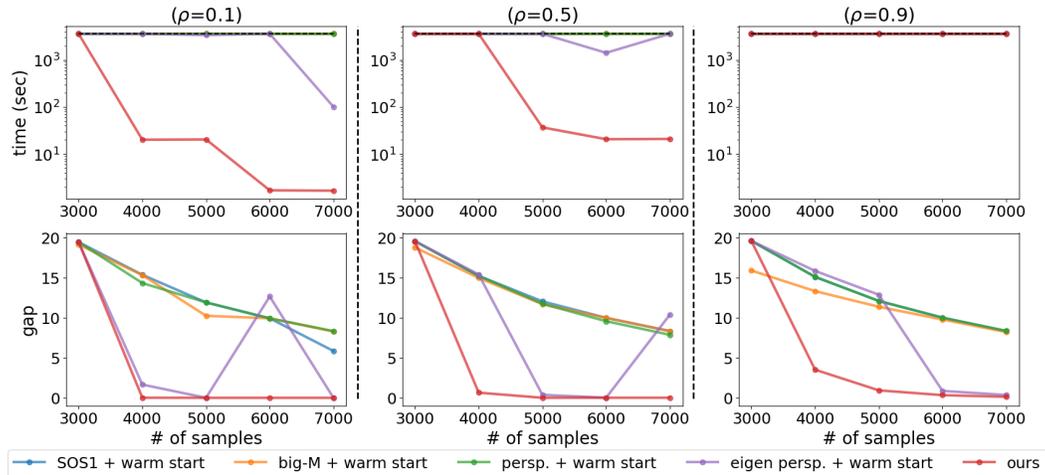}
    \caption{Comparison of running time (top row) and optimality gap (bottom row) between our method and baselines, varying sample sizes, for three correlation levels $\rho=0.1, 0.5, 0.9$ ($p=3000, k=10$). Time is on the log scale.  When $\rho=0.1$ and $\rho=0.5$, \ourmethod{} is generally orders of magnitude faster than other approaches. In the case $\rho=0.9$, we achieve the smallest optimality gap as shown in the bottom row.}
    \label{fig:new_exp2_time_gap_warmstart}
\end{figure}

\section{Experiments}
We test the effectiveness of our \ourmethod{} on synthetic benchmarks and sparse identification of nonlinear dynamical systems (SINDy)\cite{sindy}.
Our main focus is: assessing how well our proposed lower bound calculation speeds up certification (Section \ref{subsec:experimentLowerBound}), and evaluating solution quality of \ourmethod{} on challenging applications (Section \ref{subsec:experimentSolutionQuality}). 
Additional extensive experiments are in Appendix~\ref{app:more_exp_results_synthetic_benchmark} and ~\ref{app:more_exp_results_dynamical_systems}.
Our algorithms are written in Python.
Any improvements we see over commercial MIP solvers, which are coded in C/C++, are solely due to our specialized algorithms.

\subsection{Assessing How Well Our Proposed Lower Bound Calculation Speeds Up Certification}\label{subsec:experimentLowerBound}
Here, we demonstrate the speed of \ourmethod{} for certifying optimality compared to existing MIPs solved by Gurobi~\cite{gurobi}.
We set a 1-hour time limit and an optimality gap of relative tolerance $10^{-4}$.

We use a value of $0.001$ for $\lambda_2$.
Our 4 baselines include MIPs with SOS1, big-M ($M=50$ to prevent cutting off optimal solutions), perspective~\cite{atamturk2020safe}, and eigen-perspective formulations ($\lambda = \lambda_{\text{min}}(\bX^T \bX)$)~\cite{dong2015regularization}.
In the main text, we use plots to present the results.
In Appendix~\ref{app:more_exp_results_synthetic_benchmark}, we present the results in tables.
Additionally, in Appendix~\ref{app:more_exp_results_synthetic_benchmark}, we conduct perturbation studies on $\lambda_2$ ($\lambda_2 = 0.1$ and $\lambda_2 = 10$) and $M$ ($M = 20$ and $M=5$).
Finally, still in Appendix~\ref{app:more_exp_results_synthetic_benchmark}, we also compare \ourmethod{} with other MIPs including the MOSEK solver~\cite{aps2022mosek}, SubsetSelectionCIO~\cite{bertsimas2020sparse}, and L0BNB~\cite{hazimeh2022sparse}.

Similar to the data generation process in \cite{bertsimas2020sparse, benchopt},  we first sample $x_i \in \mathbb{R}^p$ from a Gaussian distribution $\mathcal{N}(\bm{0}, \Sigma)$ with mean 0 and covariance matrix $\Sigma$, where $\Sigma_{ij} = \rho^{|i-j|}$.
Variable $\rho$ controls the feature correlation.
Then, we create the coefficient vector $\bm{\beta}^*$ with $k$ nonzero entries, where $\bm{\beta}^*_j=1$ if $j \text{ mod } (p/k)=0$.
Next, we construct the prediction $y_i=x_i^T \bm{\beta}^* + \epsilon_i$, where $\bm{\epsilon}_i \overset{i.i.d.}{\sim} \mathcal{N}(0, \frac{\|X\bm{\beta}^*\|^2_2}{\textrm{SNR}})$.
SNR stands for signal-to-noise ratio (SNR), and we choose SNR to be $5$ in all our experiments.

In the first setting, we fix the number of samples with $n=100000$ and vary the number of features $p \in \{100, 500, 1000, 3000, 5000\}$ and correlation levels $\rho \in \{0.1, 0.5, 0.9\}$ (See Appendix~\ref{app:more_exp_results_synthetic_benchmark} for $\rho=0.3$ and $\rho=0.7$).
We warm-started the MIP solvers by our beam-search solutions.
The results can be seen in Figure~\ref{fig:new_exp1_time_gap_warmstart}.
From both figures, we see that \textbf{\ourmethod{} outperforms all existing MIPs solved by Gurobi, usually by orders of magnitude.}

\begin{figure}
    \centering
    \includegraphics[width=.95\textwidth]{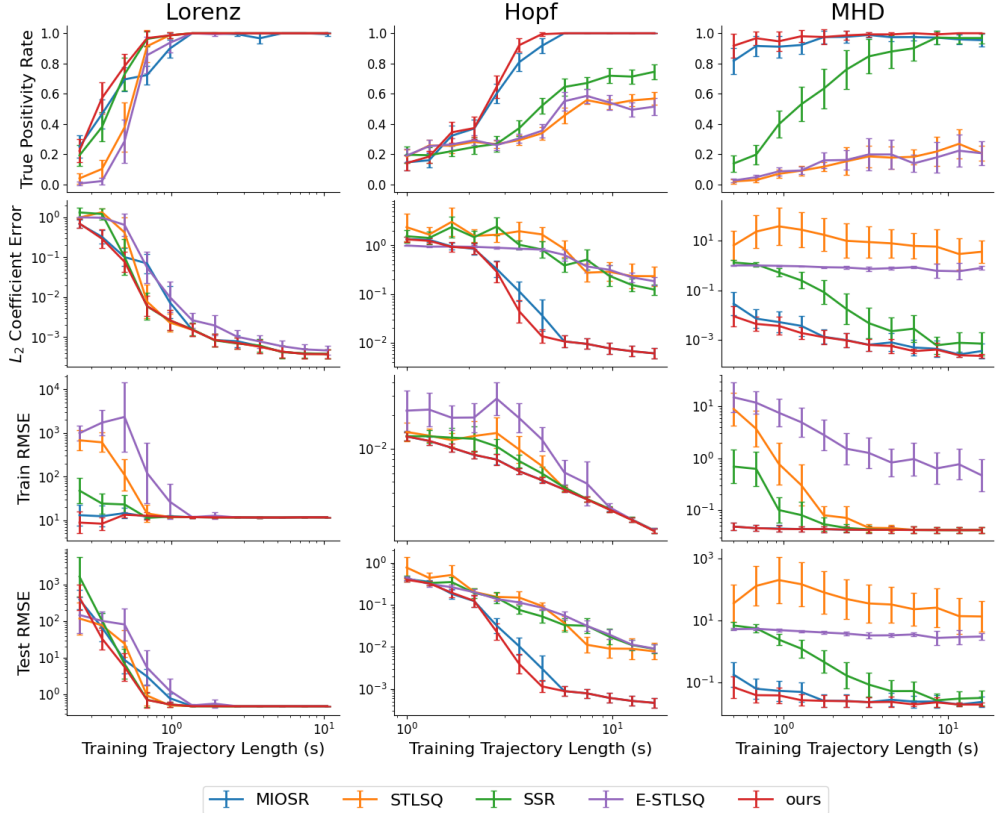}
    \caption{Results on discovering sparse differential equations. On various metrics, \ourmethod{} outperforms all other methods, including MIOSR which uses a commercial (proprietary) MIP solver.}
    \label{fig:differential_AllMethods}
\end{figure}


In the second setting, 
we fix the number of features to $p=3000$ and vary the number of samples $n \in \{3000, 4000, 5000, 6000, 7000\}$ and the correlation levels $\rho \in \{0.1, 0.5, 0.9\}$ (see Appendix~\ref{app:more_exp_results_synthetic_benchmark} for $\rho=0.3$ and $\rho=0.7$).
As in the first setting, we also warm-started the MIP solvers by our beam-search solutions.
The results are in Figure~\ref{fig:new_exp2_time_gap_warmstart}. 
When $n$ is close to $p$ or the correlation is high ($\rho=0.9$), no methods can finish within the 1-hour time limit, but \textbf{\ourmethod{} prunes the search space well and achieves the smallest optimality gap.
When $n$ becomes larger in the case of $\rho=0.1$ and $\rho=0.5$, \ourmethod{} runs orders of magnitude faster than all baselines}.

\subsection{Evaluating Solution Quality of \ourmethod{} on Challenging Applications}\label{subsec:experimentSolutionQuality}

On previous synthetic benchmarks, many heuristics (including our beam search method) can find the optimal solution without branch-and-bound.
In this subsection, we work on more challenging scenarios (sparse identification of differential equations). 
We replicate the experiments in \cite{Bertsimas2023} using three dynamical systems from the PySINDy library~\cite{pysindy1, pysindy2}: Lorenz System, Hopf Bifurcation, and magnetohydrodynamical (MHD) model \cite{mhd}.  
The Lorenz System is a 3-D system with the nonlinear differential equations:
\begin{align*}
    dx/dt = -\sigma x + \sigma y, \quad\quad\quad dy/dt = \rho x - y - x z, \quad\quad\quad dz/dt = xy - \beta z
\end{align*}
where we use standard parameters $\sigma=10, \beta=8/3, \rho = 28$.
The true sparsities for each dimension are $(2, 3, 2)$.
The Hopf Bifurcation is a 2-D system with nonlinear differential equations:
\begin{align*}
    dx/dt = \mu x + \omega y - A x^3 - A x y^2, \quad \quad \quad dy/dt = -\omega x + \mu y -  A x^2 y - A y^3
\end{align*}
where we use the standard parameters $\mu = -0.05, \omega=1, A = 1$. 
The true sparsities for each dimension are $(4, 4)$.
Finally, the MHD is a 6-D system with the nonlinear differential equations:
\begin{align*}
\begin{matrix}
&dV_1/dt = 4V_2V_3 - 4B_2B_3, &dV_2/dt = -7V_1V_3 + 7B_1B_2, &dV_3/dt = 3V_1V_2 - 3B_1B_2,\\
&dB_1/dt = 2B_3V_2 - 2V_3B_2, &dB_2/dt = 5V_3B_1 - 5B_3V_1, &dB_3/dt = 9V_1B_2 - 9B_1V_2.
\end{matrix}
\end{align*}
The true sparsities for each dimension are $(2, 2, 2, 2, 2, 2)$.

We use all monomial features (candidate functions) up to 5th order interactions.
This results in 56 functions for the Lorentz System, 21 for Hopf Bifurcation, and 462 for MHD.
Due to the high-order interaction terms, features are highly correlated, resulting in poor performance of heuristic methods.

\subsubsection{Baselines and Experimental Setup} 
In addition to MIOSR (which relies on the SOS1 formulation), we also compare with three common baselines in the SINDy literature: STLSQ~\cite{stridge}, SSR~\cite{ssr}, and E-STLSQ~\cite{estlsq}.
The baseline SR3~\cite{sr3} is not included since the previous literature~\cite{Bertsimas2023} shows it performs poorly.
We compare \ourmethod{} with other baselines using the SINDy library~\cite{pysindy1, pysindy2}. 
We follow the experimental setups in~\cite{Bertsimas2023} for model selection, hyperparameter choices, and evaluation metrics (please see Appendix~\ref{appendix:more_experimental_setup_details} for details).
In Appendix~\ref{app:more_exp_results_dynamical_systems}, we provide additional experiments on Gurobi with different MIP formulations and comparing with more heuristic baselines.

\begin{wrapfigure}{r}{0.41\textwidth}
    \centering
    \vspace{-5mm}
    \includegraphics[width=0.39\textwidth]{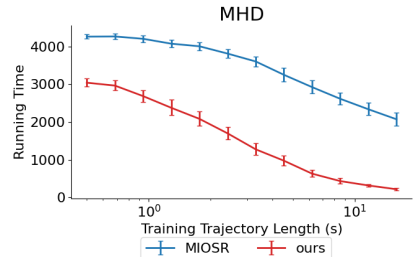}
    \caption{Running time comparison between \ourmethod{} and MIOSR on the MHD system with 462 candidate functions. \ourmethod{} is significantly faster than the previous state of the art.}
    \label{fig:MHD_running_time}
\end{wrapfigure}

\subsubsection{Results} 
Figure~\ref{fig:differential_AllMethods} displays the results.
\textbf{\ourmethod{} (\textcolor{red}{red} curves)} \textbf{outperforms all baselines, including MIOSR (\textcolor{blue}{blue} curves), across evaluation metrics.}
On the Lorenz System, all methods recover the true feature support when the training trajectory is long enough.
When the training trajectory length is short, i.e., the left part of each subplot, (or equivalently, when the number of samples is small), 
\ourmethod{} performs uniformly better than all other baselines.
On the Hopf Bifurcation, all heuristic methods fail to recover the true support, resulting in poor performance.
On the final MHD, \ourmethod{} maintains the top performance and outperforms MIOSR on the true positivity rate.
This demonstrates the effectiveness of \ourmethod{}, which incurs lower runtimes  and yields better metric scores under high-dimensional settings.
The highest runtimes are incurred for the MHD (with 462 candidate functions/features), which are shown in Figure \ref{fig:MHD_running_time}.

\paragraph*{Limitations of \ourmethod{}}
When the feature dimension is low (under 100s), Gurobi can solve the problem to optimality faster than \ourmethod{}.
This is observed on the synthetic benchmarks ($p=100$) and also on the Hopf Bifurcation ($p=21$).
Since Gurobi is a commercial proprietary solver, we cannot inspect the details of its sophisticated implementation. 
Gurobi may resort to an enumeration/brute-force approach, which could be faster than spending time to calculate lower bounds in the BnB tree.
This being said, \ourmethod{} is still competitive with Gurobi in the low-dimensional setting, and \ourmethod{} scales favorably in high-dimensional settings.


\section{Conclusion}
We presented a method for optimal sparse ridge regression that leverages a novel tight lower bound on the objective. We showed that the method is both faster and more accurate than existing approaches for learning differential equations -- a key problem in scientific discovery. This tool (unlike its main competitor) does not require proprietary software with expensive licenses and can have a significant impact on various regression applications. 

\section*{Code Availability}
Implementations of \ourmethod{} discussed in this paper are available at \url{https://github.com/jiachangliu/OKRidge}. 

\section*{Acknowledgements}
The authors gratefully acknowledge funding support from grants NSF IIS-2130250, NSF-NRT DGE-2022040, NSF OAC-1835782, DOE DE-SC0023194, and NIH/NIDA R01 DA054994.
The authors would also like to thank the anonymous reviewers for their insightful comments.

\bibliography{sparse_regression}

\begin{thebibliography}{10}

\bibitem{achterberg2005branching}
T.~Achterberg, T.~Koch, and A.~Martin.
\newblock Branching rules revisited.
\newblock {\em Operations Research Letters}, 33(1):42--54, 2005.

\bibitem{applegate1998solution}
D.~Applegate, R.~Bixby, V.~Chv{\'a}tal, and W.~Cook.
\newblock On the solution of traveling salesman problems.
\newblock {\em Documenta Mathematica}, pages 645--656, 1998.

\bibitem{aps2022mosek}
M.~ApS.
\newblock Mosek optimizer {API} for python.
\newblock {\em Version}, 9(17):6--4, 2022.

\bibitem{atamturk2020safe}
A.~Atamturk and A.~G{\'o}mez.
\newblock Safe screening rules for l0-regression from perspective relaxations.
\newblock In {\em International Conference on Machine Learning}, pages 421--430. PMLR, 2020.

\bibitem{atamturk2021sparse}
A.~Atamt{\"u}rk, A.~G{\'o}mez, and S.~Han.
\newblock Sparse and smooth signal estimation: Convexification of l0-formulations.
\newblock {\em Journal of Machine Learning Research}, 22:52--1, 2021.

\bibitem{beck2013sparsity}
A.~Beck and Y.~C. Eldar.
\newblock Sparsity constrained nonlinear optimization: Optimality conditions and algorithms.
\newblock {\em SIAM Journal on Optimization}, 23(3):1480--1509, 2013.

\bibitem{belotti2013mixed}
P.~Belotti, C.~Kirches, S.~Leyffer, J.~Linderoth, J.~Luedtke, and A.~Mahajan.
\newblock Mixed-integer nonlinear optimization.
\newblock {\em Acta Numerica}, 22:1--131, 2013.

\bibitem{bertsekas2009convex}
D.~Bertsekas.
\newblock {\em Convex optimization theory}, volume~1.
\newblock Athena Scientific, 2009.

\bibitem{Bertsimas2023}
D.~Bertsimas and W.~Gurnee.
\newblock Learning sparse nonlinear dynamics via mixed-integer optimization.
\newblock {\em Nonlinear Dynamics}, Jan 2023.

\bibitem{bertsimas2016best}
D.~Bertsimas, A.~King, and R.~Mazumder.
\newblock Best subset selection via a modern optimization lens.
\newblock {\em The Annals of Statistics}, 44(2):813--852, 2016.

\bibitem{bertsimas2020sparse}
D.~Bertsimas, J.~Pauphilet, and B.~Van~Parys.
\newblock Sparse regression: Scalable algorithms and empirical performance.
\newblock {\em Statistical Science}, 35(4):555--578, 2020.

\bibitem{best1990active}
M.~J. Best and N.~Chakravarti.
\newblock Active set algorithms for isotonic regression; a unifying framework.
\newblock {\em Mathematical Programming}, 47(1-3):425--439, 1990.

\bibitem{blumensath2008gradient}
T.~Blumensath and M.~E. Davies.
\newblock Gradient pursuits.
\newblock {\em IEEE Transactions on Signal Processing}, 56(6):2370--2382, 2008.

\bibitem{blumensath2009iterative}
T.~Blumensath and M.~E. Davies.
\newblock Iterative hard thresholding for compressed sensing.
\newblock {\em Applied and Computational Harmonic Analysis}, 27(3):265--274, 2009.

\bibitem{bonami2011more}
P.~Bonami, J.~Lee, S.~Leyffer, and A.~W{\"a}chter.
\newblock More branch-and-bound experiments in convex nonlinear integer programming.
\newblock {\em Preprint ANL/MCS-P1949-0911, Argonne National Laboratory, Mathematics and Computer Science Division}, 91, 2011.

\bibitem{ssr}
L.~Boninsegna, F.~N\"{u}ske, and C.~Clementi.
\newblock Sparse learning of stochastic dynamical equations.
\newblock {\em The Journal of Chemical Physics}, 148(24):241723, June 2018.

\bibitem{boyd2011distributed}
S.~Boyd, N.~Parikh, E.~Chu, B.~Peleato, J.~Eckstein, et~al.
\newblock Distributed optimization and statistical learning via the alternating direction method of multipliers.
\newblock {\em Foundations and Trends in Machine Learning}, 3(1):1--122, 2011.

\bibitem{boyd2003subgradient}
S.~Boyd, L.~Xiao, and A.~Mutapcic.
\newblock Subgradient methods.
\newblock {\em lecture notes of EE392o, Stanford University, Autumn Quarter}, 2004:2004--2005, 2003.

\bibitem{sindy}
S.~L. Brunton, J.~L. Proctor, and J.~N. Kutz.
\newblock Discovering governing equations from data by sparse identification of nonlinear dynamical systems.
\newblock {\em Proceedings of the National Academy of Sciences}, 113(15):3932--3937, 2016.

\bibitem{bubeck2015convex}
S.~Bubeck et~al.
\newblock Convex optimization: Algorithms and complexity.
\newblock {\em Foundations and Trends in Machine Learning}, 8(3-4):231--357, 2015.

\bibitem{busing2022monotone}
F.~M. Busing.
\newblock Monotone regression: A simple and fast $o(n)$ {PAVA} implementation.
\newblock {\em Journal of Statistical Software}, 102:1--25, 2022.

\bibitem{5895106}
T.~T. Cai and L.~Wang.
\newblock Orthogonal matching pursuit for sparse signal recovery with noise.
\newblock {\em IEEE Transactions on Information Theory}, 57(7):4680--4688, 2011.

\bibitem{camerini1975improving}
P.~M. Camerini, L.~Fratta, and F.~Maffioli.
\newblock On improving relaxation methods by modified gradient techniques.
\newblock In {\em Nondifferentiable Optimization}, pages 26--34. Springer Berlin Heidelberg, 1975.

\bibitem{mhd}
V.~{Carbone} and P.~{Veltri}.
\newblock {Relaxation processes in magnetohydrodynamics - A triad-interaction model}.
\newblock {\em Astronomy and Astrophysics}, 259(1):359--372, June 1992.

\bibitem{sr3}
K.~Champion, P.~Zheng, A.~Y. Aravkin, S.~L. Brunton, and J.~N. Kutz.
\newblock A unified sparse optimization framework to learn parsimonious physics-informed models from data.
\newblock {\em IEEE Access}, 8:169259--169271, 2020.

\bibitem{combettes2018perspective}
P.~L. Combettes.
\newblock Perspective functions: Properties, constructions, and examples.
\newblock {\em Set-Valued and Variational Analysis}, 26(2):247--264, 2018.

\bibitem{pysindy1}
B.~de~Silva, K.~Champion, M.~Quade, J.-C. Loiseau, J.~Kutz, and S.~Brunton.
\newblock Pysindy: A python package for the sparse identification of nonlinear dynamical systems from data.
\newblock {\em Journal of Open Source Software}, 5(49):2104, 2020.

\bibitem{dong2015regularization}
H.~Dong, K.~Chen, and J.~Linderoth.
\newblock Regularization vs. relaxation: A conic optimization perspective of statistical variable selection.
\newblock {\em arXiv preprint arXiv:1510.06083}, 2015.

\bibitem{elenberg2018restricted}
E.~R. Elenberg, R.~Khanna, A.~G. Dimakis, and S.~Negahban.
\newblock Restricted strong convexity implies weak submodularity.
\newblock {\em The Annals of Statistics}, 46(6B):3539--3568, 2018.

\bibitem{eriksson2015k}
A.~Eriksson, T.~Thanh~Pham, T.-J. Chin, and I.~Reid.
\newblock The k-support norm and convex envelopes of cardinality and rank.
\newblock In {\em Proceedings of the IEEE Conference on Computer Vision and Pattern Recognition}, pages 3349--3357, 2015.

\bibitem{estlsq}
U.~Fasel, J.~N. Kutz, B.~W. Brunton, and S.~L. Brunton.
\newblock Ensemble-{SINDy}: Robust sparse model discovery in the low-data, high-noise limit, with active learning and control.
\newblock {\em Proceedings of the Royal Society A: Mathematical, Physical and Engineering Sciences}, 478(2260), Apr. 2022.

\bibitem{frangioni2006perspective}
A.~Frangioni and C.~Gentile.
\newblock Perspective cuts for a class of convex 0--1 mixed integer programs.
\newblock {\em Mathematical Programming}, 106(2):225--236, 2006.

\bibitem{giselsson2016linear}
P.~Giselsson and S.~Boyd.
\newblock Linear convergence and metric selection for douglas-rachford splitting and admm.
\newblock {\em IEEE Transactions on Automatic Control}, 62(2):532--544, 2016.

\bibitem{gunluk2010perspective}
O.~G{\"u}nl{\"u}k and J.~Linderoth.
\newblock Perspective reformulations of mixed integer nonlinear programs with indicator variables.
\newblock {\em Mathematical Programming}, 124(1):183--205, 2010.

\bibitem{gurobi}
{Gurobi Optimization, LLC}.
\newblock {Gurobi Optimizer Reference Manual}, 2023.

\bibitem{haemers1995interlacing}
W.~H. Haemers.
\newblock Interlacing eigenvalues and graphs.
\newblock {\em Linear Algebra and its Applications}, 226:593--616, 1995.

\bibitem{han2022equivalence}
S.~Han, A.~G{\'o}mez, and A.~Atamt{\"u}rk.
\newblock The equivalence of optimal perspective formulation and {S}hor's {SDP} for quadratic programs with indicator variables.
\newblock {\em Operations Research Letters}, 50(2):195--198, 2022.

\bibitem{hazimeh2020fast}
H.~Hazimeh and R.~Mazumder.
\newblock Fast best subset selection: Coordinate descent and local combinatorial optimization algorithms.
\newblock {\em Operations Research}, 68(5):1517--1537, 2020.

\bibitem{hazimeh2022sparse}
H.~Hazimeh, R.~Mazumder, and A.~Saab.
\newblock Sparse regression at scale: Branch-and-bound rooted in first-order optimization.
\newblock {\em Mathematical Programming}, 196(1):347--388, 2022.

\bibitem{jain2014iterative}
P.~Jain, A.~Tewari, and P.~Kar.
\newblock On iterative hard thresholding methods for high-dimensional $m$-estimation.
\newblock {\em Advances in Neural Information Processing Systems}, 27, 2014.

\bibitem{sindy-pi}
K.~Kaheman, J.~N. Kutz, and S.~L. Brunton.
\newblock {SINDy}-{PI}: a robust algorithm for parallel implicit sparse identification of nonlinear dynamics.
\newblock {\em Proceedings of the Royal Society A: Mathematical, Physical and Engineering Sciences}, 476(2242), Oct. 2020.

\bibitem{pysindy2}
A.~A. Kaptanoglu, B.~M. de~Silva, U.~Fasel, K.~Kaheman, A.~J. Goldschmidt, J.~Callaham, C.~B. Delahunt, Z.~G. Nicolaou, K.~Champion, J.-C. Loiseau, J.~N. Kutz, and S.~L. Brunton.
\newblock Pysindy: A comprehensive python package for robust sparse system identification.
\newblock {\em Journal of Open Source Software}, 7(69):3994, 2022.

\bibitem{liufasterrisk}
J.~Liu, C.~Zhong, B.~Li, M.~Seltzer, and C.~Rudin.
\newblock {FasterRisk}: Fast and accurate interpretable risk scores.
\newblock In {\em Advances in Neural Information Processing Systems}, 2022.

\bibitem{LiuEtAl2022}
J.~Liu, C.~Zhong, M.~Seltzer, and C.~Rudin.
\newblock Fast sparse classification for generalized linear and additive models.
\newblock In {\em Proceedings of Artificial Intelligence and Statistics (AISTATS)}, 2022.

\bibitem{Mangan2017}
N.~M. Mangan, J.~N. Kutz, S.~L. Brunton, and J.~L. Proctor.
\newblock Model selection for dynamical systems via sparse regression and information criteria.
\newblock {\em Proceedings of the Royal Society A: Mathematical, Physical and Engineering Sciences}, 473(2204):20170009, Aug. 2017.

\bibitem{weaksindypde}
D.~A. Messenger and D.~M. Bortz.
\newblock Weak sindy for partial differential equations.
\newblock {\em Journal of Computational Physics}, 443:110525, 2021.

\bibitem{weak-sindy}
D.~A. Messenger and D.~M. Bortz.
\newblock Weak sindy: Galerkin-based data-driven model selection.
\newblock {\em Multiscale Modeling \& Simulation}, 19(3):1474--1497, 2021.

\bibitem{benchopt}
T.~Moreau, M.~Massias, A.~Gramfort, P.~Ablin, P.-A. Bannier, B.~Charlier, M.~Dagréou, T.~Dupré~la Tour, G.~Durif, C.~F.~Dantas, Q.~Klopfenstein, J.~Larsson, E.~Lai, T.~Lefort, B.~Malézieux, B.~Moufad, B.~T.~Nguyen, A.~Rakotomamonjy, Z.~Ramzi, J.~Salmon, and S.~Vaiter.
\newblock Benchopt: Reproducible, efficient and collaborative optimization benchmarks.
\newblock In {\em Advances in Neural Information Processing Systems}, 2022.

\bibitem{doi:10.1137/S0097539792240406}
B.~K. Natarajan.
\newblock Sparse approximate solutions to linear systems.
\newblock {\em SIAM Journal on Computing}, 24(2):227--234, 1995.

\bibitem{needell2009cosamp}
D.~Needell and J.~A. Tropp.
\newblock Cosamp: Iterative signal recovery from incomplete and inaccurate samples.
\newblock {\em Applied and Computational Harmonic Analysis}, 26(3):301--321, 2009.

\bibitem{needell2010signal}
D.~Needell and R.~Vershynin.
\newblock Signal recovery from incomplete and inaccurate measurements via regularized orthogonal matching pursuit.
\newblock {\em IEEE Journal of Selected Topics in Signal Processing}, 4(2):310--316, 2010.

\bibitem{parikh2014proximal}
N.~Parikh, S.~Boyd, et~al.
\newblock Proximal algorithms.
\newblock {\em Foundations and Trends in Optimization}, 1(3):127--239, 2014.

\bibitem{pilanci2015sparse}
M.~Pilanci, M.~J. Wainwright, and L.~El~Ghaoui.
\newblock Sparse learning via boolean relaxations.
\newblock {\em Mathematical Programming}, 151(1):63--87, 2015.

\bibitem{stridge}
S.~H. Rudy, S.~L. Brunton, J.~L. Proctor, and J.~N. Kutz.
\newblock Data-driven discovery of partial differential equations.
\newblock {\em Science Advances}, 3(4):e1602614, 2017.

\bibitem{sander2023fast}
M.~E. Sander, J.~Puigcerver, J.~Djolonga, G.~Peyr{\'e}, and M.~Blondel.
\newblock Fast, differentiable and sparse top-k: a convex analysis perspective.
\newblock In {\em International Conference on Machine Learning}, pages 29919--29936. PMLR, 2023.

\bibitem{1337101}
J.~Tropp.
\newblock Greed is good: algorithmic results for sparse approximation.
\newblock {\em IEEE Transactions on Information Theory}, 50(10):2231--2242, 2004.

\bibitem{vreugdenhil2021principal}
R.~Vreugdenhil, V.~A. Nguyen, A.~Eftekhari, and P.~M. Esfahani.
\newblock Principal component hierarchy for sparse quadratic programs.
\newblock In {\em International Conference on Machine Learning}, pages 10607--10616. PMLR, 2021.

\bibitem{wiseman2016sequence}
S.~Wiseman and A.~M. Rush.
\newblock Sequence-to-sequence learning as beam-search optimization.
\newblock In {\em Proceedings of the 2016 Conference on Empirical Methods in Natural Language Processing}, pages 1296--1306, Austin, Texas, Nov. 2016. Association for Computational Linguistics.

\bibitem{xie2020scalable}
W.~Xie and X.~Deng.
\newblock Scalable algorithms for the sparse ridge regression.
\newblock {\em SIAM Journal on Optimization}, 30(4):3359--3386, 2020.

\bibitem{yuan2020block}
G.~Yuan, L.~Shen, and W.-S. Zheng.
\newblock A block decomposition algorithm for sparse optimization.
\newblock In {\em Proceedings of the 26th ACM SIGKDD International Conference on Knowledge Discovery \& Data Mining}, pages 275--285, 2020.

\bibitem{zhang2011adaptive}
T.~Zhang.
\newblock Adaptive forward-backward greedy algorithm for learning sparse representations.
\newblock {\em IEEE Transactions on Information Theory}, 57(7):4689--4708, 2011.

\bibitem{zheng2014improving}
X.~Zheng, X.~Sun, and D.~Li.
\newblock Improving the performance of {MIQP} solvers for quadratic programs with cardinality and minimum threshold constraints: A semidefinite program approach.
\newblock {\em INFORMS Journal on Computing}, 26(4):690--703, 2014.

\bibitem{zhu2020polynomial}
J.~Zhu, C.~Wen, J.~Zhu, H.~Zhang, and X.~Wang.
\newblock A polynomial algorithm for best-subset selection problem.
\newblock {\em Proceedings of the National Academy of Sciences}, 117(52):33117--33123, 2020.

\end{thebibliography}
\bibliographystyle{abbrv}

\appendix

\clearpage

\onecolumn
\newpage
\addcontentsline{toc}{section}{Appendix} 
\part{Appendix to \ourmethod{} Scalable Optimal k-Sparse Ridge Regression} 
\parttoc 


\clearpage
\section{Related Work}
\label{appendix:related_work}

\textbf{Sparse Identification of Nonlinear Dynamical Systems:}
The Sparse Identification of Nonlinear Dynamical Systems (SINDy) framework ~\cite{sindy} has been widely adopted for discovering dynamical systems from observed data. The basic framework consists of approximating derivatives, picking a library of features based on positional data, and performing sparse regression on the resulting design matrix. SINDy has expanded to solving PDEs ~\cite{stridge, weaksindypde} or implicit equations ~\cite{sindy-pi}, noisy or low data settings ~\cite{estlsq, weak-sindy}, and constrained problems ~\cite{sr3, Bertsimas2023}. 
The basic framework can be summarized as a standard regression problem:
\begin{align}
    \dot \bX = \bTheta(\bX)  \bXi,
\end{align}
where the goal is to find $\bXi = [\bbeta_1,\ldots,\bbeta_d]$, with $d$ being the number of dimensions of the dynamical system, $\bX$ being the observed data, and $\bTheta$ is a map from observed data to candidate functions. The problem is typically solved independently across dimensions, yielding $d$ sparse regression problems. 

Many methods in the SINDy framework solve regression problems with a greedy backwards selection approach. These methods have been shown to outperform LASSO \cite{sr3} and Orthogonal Matching Pursuit \cite{ssr} in learning dynamical systems. The backwards selection approach typically solves a non-sparse regression problem, and then either removes all small coefficients below a threshold, or remove one feature at a time until a desired sparsity level is reached \cite{stridge}. Although these methods have performed well with reasonable run times, they struggle to identify true dynamics as the number of candidate features increases.

Advances in MIP formulations have led to solving SINDy problems with greater success and ability to verify optimality \cite{Bertsimas2023}; however, they tend not to scale as well. The method we present also verifies optimality and is much faster than the MIP formula, particularly for larger number of features $p$.\vspace{5pt}

\noindent \textbf{Heuristic Methods:}
Greedy methods aim to solve the problem to near-optimality, with few guarantees.
One direction is greedy pursuit (i.e., forward selection), where one coordinate at a time is added until the required support size is reached~\cite{1337101, blumensath2008gradient, needell2009cosamp, needell2010signal, 5895106, elenberg2018restricted}.
Another direction is iterative thresholding, where gradient descent steps alternate with projection steps to satisfy the support size requirement~\cite{blumensath2009iterative, jain2014iterative, bertsimas2020sparse, xie2020scalable, vreugdenhil2021principal}.
Other approaches include randomized rounding on the solution of the boolean-relaxed problem~\cite{pilanci2015sparse}, swapping features~\cite{zhang2011adaptive, beck2013sparsity, hazimeh2020fast, zhu2020polynomial, LiuEtAl2022}, or solving a smaller problem optimally on a working set~\cite{yuan2020block}.
These heuristic solutions may greatly improve the computational speed of MIP solvers when used as warm starts. Typically, the heuristic algorithm starts from scratch each time that it runs, which is slow when running it repeatedly throughout the BnB tree.
Our insight is that the search history of heuristic methods in previously solved nodes can be used to speed up the heuristic method in the current node.\vspace*{5pt}

\noindent \textbf{Optimal Methods and Lower Bound Calculation:}
In order to certify optimality for this NP-hard problem, we need to perform branch-and-bound and calculate the lower bound for each node in the BnB tree.
For the lower bound calculation, early works include the SOS1 formulation or the big-M formulation~\cite{bertsimas2016best, Bertsimas2023}.
Both formulations can be implemented in a commercial solver.
However, the SOS1 formulation is not scalable to high dimensions, while the big-M formulation is sensitive to the hyperparameter ``M'' used to balance scalability and correctness.
To circumvent this problem, SubsetSelectionCIO~\cite{bertsimas2020sparse} formulate the least-squares term using the Fenchedl duality with dual variables.
After the formulation, SubsetSelectionCIO applies callbacks to add cutting planes to get a lower bound.
Although \ourmethod{} also applies the Fenchel duality, we apply the Fenchel duality on the $\ell_2$ regularization term.
We are solving the problem in the feature space, while SubsetSelectionCIO solves the problem in the sample space.
As pointed in~\cite{hazimeh2022sparse}, the branch-and-cut method in SubsetSelectionCIO~\cite{bertsimas2020sparse} runs slow when the $\ell_2$ regularization is small.
Recently, the perspective formulation~\cite{frangioni2006perspective, gunluk2010perspective, zheng2014improving, dong2015regularization, atamturk2020safe, xie2020scalable, atamturk2021sparse, han2022equivalence} of the $\ell_2$ term has also been used. Through convex relaxation, the lower bound can be obtained by solving a quadratically constrained quadratic program (QCQP) with rotated second-order cone constraints.
However, the conic formulation is still computationally intensive for large-scale datasets and has difficulty attaining optimal solutions.
Our work builds upon the perspective formulation, but we propose an efficient way to calculate the lower bound through Fenchel duality.
Another line of works focuses on optimal perspective formulation~\cite{zheng2014improving, dong2015regularization, han2022equivalence}.
This requires solving a semidefinite programming (SDP) problem at each node, which has been shown not scalable to high dimensions~\cite{dong2015regularization}.
What is more, MI-SDP is not supported by Gurobi.
Our work is related to the optimal perspective formulation, but we set the diagonal matrix $\textrm{diag}(\bd)= \lambda_{\rm{min}}(\bX^T \bX)\bI$.
We call this the eigen-perspective formulation.
Although this is not the optimal choice, it is a good approximation and is supported by Gurobi through the QCQP formulation as discussed above.
Lastly, there is a recent work called l0bnb~\cite{hazimeh2022sparse}, which also implements a customized branch-and-bound framework without the commercial solvers.
However, l0bnb is solving the $l_0$-regularized problem, which is not the same as the $\ell_0$-constrained problem.
To get a solution with the specified sparsity, l0bnb needs to solve the problem multiple times with different $\ell_0$ regularizations.


\clearpage
\section{Background Concepts}
\label{appendix:background_concepts}

\begin{definition}[\textbf{Strong Convexity}~\cite{bubeck2015convex}]
\label{def:strong_convexity}

A function $f: \mathbb{R}^p \rightarrow \mathbb{R}$ is strongly convex with parameter $\alpha > 0$ if it satisfies the following subgradient inequality:
\begin{align}
    f(\by) \geq f(\bx) + \nabla f(\bx)^T (\by -\bx) + \frac{\alpha}{2} \Vert{\by - \bx}_2^2
\end{align}
$\text{for } \bx, \by \in \bbR^p$. If the gradient $\nabla f$ does not exist but the subgradient exists, then we have
\begin{align}
    f(\by) \geq f(\bx) + \bg^T (\by -\bx) + \frac{\alpha}{2} \Vert{\by - \bx}_2^2
\end{align}
$\text{for } \bx, \by \in \bbR^p$, where $\bg \in \partial f$ and $\partial f$ is the subdifferential of $f$.
\end{definition}

\begin{definition}[\textbf{Smoothness}~\cite{bubeck2015convex}]
\label{def:smoothness}

A continuously differentiable function $f: \mathbb{R}^p \rightarrow \mathbb{R}$ is smooth with parameter $\beta > 0$ if the gradeint $\nabla f$ is $\beta$-Lipschitz.
Mathematically, this means that:
\begin{align}
    \Vert{\nabla f(\bx) - \nabla f(\by)}_2 \leq \beta \Vert{\bx - \by}_2
\end{align}
$\text{for } \bx, \by \in \bbR^p$.
\end{definition}

\begin{lemma}[\textbf{Smoothness Property}~\cite{bubeck2015convex}]
\label{lemma:smoothness_property}

Let $f: \bbR^p \rightarrow \bbR$ be a $\beta$-smooth function. Then, we have the following sandwich property ($f(\by)$ is ``sandwiched'' between two quadratic functions):
\begin{align}
    f(\bx) + \nabla f(\bx)^T (\by-\bx) + \frac{\beta}{2} \Vert{\by - \bx}_2^2 \geq f(\by) \geq f(\bx) + \nabla f(\bx)^T (\by-\bx) - \frac{\beta}{2} \Vert{\by - \bx}_2^2
\end{align}
\text{for } $\bx, \by \in \bbR^p$.
\end{lemma}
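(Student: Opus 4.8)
The plan is to prove both inequalities at once by controlling the single quantity $f(\by) - f(\bx) - \nabla f(\bx)^T(\by - \bx)$, showing that its absolute value is bounded by $\frac{\beta}{2}\Vert{\by-\bx}_2^2$; the two displayed inequalities then follow immediately by unfolding the absolute value. First I would parametrize the segment joining $\bx$ and $\by$ by setting $\phi(t) := f(\bx + t(\by - \bx))$ for $t \in [0,1]$ and apply the chain rule to obtain $\phi'(t) = \nabla f(\bx + t(\by-\bx))^T (\by - \bx)$. Since $f$ is continuously differentiable (by the hypothesis of Definition~\ref{def:smoothness}), $\phi$ is $C^1$ on $[0,1]$, so the fundamental theorem of calculus applies and gives $f(\by) - f(\bx) = \phi(1) - \phi(0) = \int_0^1 \nabla f(\bx + t(\by-\bx))^T (\by-\bx)\, \dd t$.

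Second, I would subtract off the linear term using the trivial identity $\nabla f(\bx)^T(\by-\bx) = \int_0^1 \nabla f(\bx)^T(\by-\bx)\,\dd t$, which yields
\begin{align*}
f(\by) - f(\bx) - \nabla f(\bx)^T(\by-\bx) = \int_0^1 \left[\nabla f(\bx + t(\by-\bx)) - \nabla f(\bx)\right]^T (\by-\bx)\, \dd t.
\end{align*}
Third, I would bound the integrand pointwise in $t$: by Cauchy--Schwarz its absolute value is at most $\Vert{\nabla f(\bx + t(\by-\bx)) - \nabla f(\bx)}_2 \, \Vert{\by-\bx}_2$, and by $\beta$-smoothness (the Lipschitz gradient property in Definition~\ref{def:smoothness}) the first factor is at most $\beta \Vert{t(\by-\bx)}_2 = \beta t \Vert{\by-\bx}_2$. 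Taking absolute values through the integral and using $\int_0^1 \beta t\, \dd t = \frac{\beta}{2}$ then delivers the uniform estimate $\left|f(\by) - f(\bx) - \nabla f(\bx)^T(\by-\bx)\right| \leq \frac{\beta}{2}\Vert{\by-\bx}_2^2$, which is exactly the pair of sandwich inequalities.

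There is no genuinely hard step in this argument; it is the standard descent-lemma computation and requires no structure beyond $\beta$-smoothness. The only points needing a little care are the justification of the fundamental theorem of calculus (immediate, since $\phi$ is $C^1$ because $f$ is continuously differentiable) and keeping the Cauchy--Schwarz and Lipschitz bounds correctly aligned inside the integral so that the factor of $t$ from $\Vert{t(\by-\bx)}_2$ produces the $\frac{1}{2}$ upon integration.
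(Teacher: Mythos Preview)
Your argument is correct and is precisely the standard descent-lemma computation: integrate the gradient along the segment, subtract the constant linear term, apply Cauchy--Schwarz and the $\beta$-Lipschitz bound on $\nabla f$, and integrate $\beta t$ over $[0,1]$ to pick up the factor $\tfrac{\beta}{2}$. There is nothing to correct.

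As for comparison with the paper: the paper does not actually supply a proof of this lemma. It is listed in the Background Concepts appendix with a citation to~\cite{bubeck2015convex} and is used only as a quoted fact. Your write-up is essentially the proof one finds in that reference, so there is no methodological divergence to discuss.
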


\begin{definition}[\textbf{$k$-sparse Pair Domain}~\cite{elenberg2018restricted}]
\label{def:k_sparse_pair_domain}

A domain $\Omega_k \subset \bbR^p \times \bbR^p$ is a $k$-sparse vector pair domain if it contains all pairs of $k$-sparse vectors that differ in at most $k
$ entries, \textit{i.e},
\begin{align}
    \Omega_k := \{(\bx, \by) \in \bbR^p \times \bbR^p : \Vert{\bx}_0 \leq k, \Vert{y}_0 \leq k, \Vert{\bx - \by} \leq k\}.
\end{align}

\end{definition}

\begin{definition}[\textbf{Restricted Strong Convexity}~\cite{elenberg2018restricted}]
\label{def:restricted_strong_convexity}

A function $f: \mathbb{R}^p \rightarrow \mathbb{R}$ is restricted strongly convex with parameter $\alpha > 0$ on the $k$-sparse vector pair domain $\Omega_k$ if it satisfies the following inequality:
\begin{align}
    f(\by) \geq f(\bx) + \nabla f(\bx)^T (\by -\bx) + \frac{\alpha}{2} \Vert{\by - \bx}_2^2
\end{align}
$\text{for } (\bx, \by) \in \Omega_k$.
\end{definition}

\begin{definition}[\textbf{Restricted Smoothness}~\cite{elenberg2018restricted}]
\label{def:restricted_smoothness}

A function $f: \mathbb{R}^p \rightarrow \mathbb{R}$ is restrictedly smooth with parameter $\beta > 0$ on the $k$-sparse vector pair domain $\Omega_k$ if it satisfies the following inequality:
\begin{align}
    f(\bx) + \nabla f(\bx)^T (\by -\bx) + \frac{\beta}{2} \Vert{\by - \bx}_2^2 \geq f(\by)
\end{align}
$\text{for } (\bx, \by) \in \Omega_k$.
\end{definition}

\begin{lemma}[\textbf{Gradient of Ridge Regression}]
\label{lemma:gradient_of_ridge_regression}

At $\bbeta \in \bbR^p$, the gradient of our ridge regression function is given by:
\begin{align}
    \nabla \gLridge(\bbeta) = 2 \bX^T (\bX \bbeta - \by) + 2 \lambda_2 \bbeta
\end{align}
$\text{for } \bbeta \in \bbR^p$.
\end{lemma}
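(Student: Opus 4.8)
The plan is a direct term-by-term differentiation of $\gLridge(\bbeta) = \bbeta^T \bX^T \bX \bbeta - 2 \by^T \bX \bbeta + \lambda_2 \sum_{j=1}^p \beta_j^2$, using the standard identities for gradients of quadratic and linear forms. First, for the quadratic term, write $\bA := \bX^T \bX$ and note that $\bA$ is symmetric; the gradient of $\bbeta^T \bA \bbeta$ is then $(\bA + \bA^T)\bbeta = 2\bA\bbeta = 2\bX^T \bX \bbeta$. Second, the linear term $-2\by^T \bX \bbeta = -2(\bX^T\by)^T \bbeta$ has gradient $-2 \bX^T \by$. Third, $\lambda_2 \sum_j \beta_j^2 = \lambda_2 \bbeta^T \bbeta$ has gradient $2\lambda_2 \bbeta$. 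Adding the three pieces gives
\begin{align}
    \nabla \gLridge(\bbeta) = 2\bX^T \bX \bbeta - 2\bX^T \by + 2\lambda_2 \bbeta = 2\bX^T(\bX\bbeta - \by) + 2\lambda_2 \bbeta,
\end{align}
which is the claimed expression.

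If one prefers a more elementary route that avoids quoting matrix-calculus identities, I would instead compute the $i$-th partial derivative of $\gLridge$ directly. Expanding $\bbeta^T \bX^T \bX \bbeta = \sum_{j,l} \beta_j \beta_l (\bX^T\bX)_{jl}$ and differentiating with respect to $\beta_i$ yields $2\sum_l (\bX^T\bX)_{il}\beta_l = 2(\bX^T\bX\bbeta)_i$ (again using symmetry of $\bX^T\bX$); differentiating $-2\sum_j (\bX^T\by)_j \beta_j$ gives $-2(\bX^T\by)_i$; and differentiating $\lambda_2 \sum_j \beta_j^2$ gives $2\lambda_2 \beta_i$. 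Collecting these as the $i$-th component of a vector reproduces the same formula.

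There is essentially no obstacle here: the statement is a routine calculus fact, and the only point requiring the slightest care is invoking the symmetry of $\bX^T \bX$ so that the gradient of the quadratic form is $2\bX^T\bX\bbeta$ rather than some non-symmetrized variant. The lemma is stated as a standalone fact because it is used repeatedly later (e.g., in the optimality condition $\nabla\gLridge(\hat\bgamma) = \mathbf{0}$ underlying Theorem~\ref{theorem:lowerBound_h(gamma)_reparametrization_lambda} and in the beam-search line-search of Equation~\eqref{eq:decrease_in_ridge_regression_loss_as_function_of_gradient}), so the proof need only record the computation above.
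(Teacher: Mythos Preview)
Your proposal is correct and takes essentially the same approach as the paper: a direct term-by-term differentiation of $\gLridge(\bbeta)$ using the standard gradient identities for quadratic and linear forms, then regrouping to obtain $2\bX^T(\bX\bbeta - \by) + 2\lambda_2 \bbeta$. The paper's proof is just the four-line computation you give first; your alternative componentwise derivation is extra but not needed.
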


\begin{proof}
\begin{align*}
    \nabla \gLridge(\bbeta) &= \nabla (\bbeta^T \bX^T \bX \bbeta - 2 \by^X \bX \bbeta + \lambda_2 \bbeta^T \bbeta) \\
    &= \nabla (\bbeta^T \bX^T \bX \bbeta) - \nabla(2 \by^T \bX \bbeta) + \lambda_2 \nabla(\bbeta^T \bbeta) \\
    &= 2\bX^T \bX \bbeta - 2 \bX^T \by + 2 \lambda_2 \bbeta \\
    &= 2\bX^T (\bX \bbeta - \by) + 2 \lambda_2 \bbeta
\end{align*}
\end{proof}

\newpage
\section{Derivations and Proofs}
\label{appendix:proofs}

\subsection{Proof of Theorem~\ref{theorem:reparametrization_reduction}}
\label{appendix:proof_reparametrization_reduction}

\begin{namedtheorem}[\ref{theorem:reparametrization_reduction}]
    If we reparameterize $\bc= \frac{2}{\lambda_2 + \lambda} (\bX^T \by - \bQlambda \bgamma)$ with a new parameter $\bgamma$, then Problem~\eqref{problem:dual_Fenchel_ridge_regression_lambda} is equivalent to the following saddle point optimization problem:
    \begin{flalign}
        & &\max_{\bgamma} \min_{\bz} \calL_{\text{ridge}-\lambda}^{\text{saddle}}  (\bgamma, \bz) \specialQuad \text{s.t.}  \specialQuad \sum_{j=1}^p z_j \leq k, \specialQuad z_j \in [0, 1], \qquad \qquad \quad && \label{appendix_problem:dual_saddle_ridge_regression_lambda} \\
        &\text{where} &
        \calL_{\text{ridge}-\lambda}^{\text{saddle}}(\bgamma, \bz) := -\bgamma^T\bQlambda \bgamma -\frac{1}{\lambda_2 + \lambda}  (\bX^T \by - \bQlambda \bgamma)^T diag(\bz) (\bX^T \by - \bQlambda \bgamma) ,&& \label{appendix_eq:ridge_regression_saddle_loss_lambda}
    \end{flalign}
and $diag(\bz)$ is a diagonal matrix with $\bz$ on the diagonal.
\end{namedtheorem}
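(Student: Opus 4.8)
The plan is to carry out the reparametrization by explicitly performing the inner minimization over $\bbeta$ in Problem~\eqref{problem:dual_Fenchel_ridge_regression_lambda} and then recognizing that the minimizing $\bbeta$ is exactly what the substitution $\bc = \frac{2}{\lambda_2+\lambda}(\bX^T\by - \bQlambda\bgamma)$ encodes (throughout, $\lambda_2 + \lambda > 0$). Since the $\bz$-dependent part of $\calL_{\text{ridge}-\lambda}^{\text{Fenchel}}$ does not involve $\bbeta$, first split $\min_{\bbeta,\bz} = \min_{\bz}\min_{\bbeta}$ and handle $\min_{\bbeta}\bigl[\bbeta^T\bQlambda\bbeta - 2\by^T\bX\bbeta + (\lambda_2+\lambda)\bbeta^T\bc\bigr]$. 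This is an unconstrained convex quadratic (because $\bQlambda \succeq 0$), whose first-order condition $2\bQlambda\bbeta - 2\bX^T\by + (\lambda_2+\lambda)\bc = \mathbf{0}$ is precisely Equation~\eqref{eq:c_beta_optimality_condition_lambda}; rearranged it says $\bc = \frac{2}{\lambda_2+\lambda}(\bX^T\by - \bQlambda\bbeta)$, which is what motivates setting $\bc = \frac{2}{\lambda_2+\lambda}(\bX^T\by - \bQlambda\bgamma)$. Taking the minimizer $\bbeta = \bgamma$ and substituting this $\bc$ back, the $\bbeta$-block collapses (the terms $\pm 2\bgamma^T\bX^T\by$ cancel and the $\bQlambda$-terms combine to $-\bgamma^T\bQlambda\bgamma$), while $-\frac{\lambda_2+\lambda}{4}\bc^T\diag(\bz)\bc$ turns into $-\frac{1}{\lambda_2+\lambda}(\bX^T\by-\bQlambda\bgamma)^T\diag(\bz)(\bX^T\by-\bQlambda\bgamma)$, so the inner value equals $\calL_{\text{ridge}-\lambda}^{\text{saddle}}(\bgamma,\bz)$ from Equation~\eqref{eq:ridge_regression_saddle_loss_lambda}.

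To turn this pointwise identity into an equivalence of the two $\max$--$\min$ problems, I would argue that the outer $\max_{\bc}$ can be restricted to $\mathcal{C} := \{\frac{2}{\lambda_2+\lambda}(\bX^T\by - \bQlambda\bgamma) : \bgamma \in \bbR^p\}$, which is the set of $\bc$ for which $(\lambda_2+\lambda)\bc - 2\bX^T\by$ lies in $\mathrm{range}(\bQlambda)$. For $\bc \notin \mathcal{C}$, the linear term $(\lambda_2+\lambda)\bc - 2\bX^T\by$ of the $\bbeta$-quadratic has a nonzero component in $\ker(\bQlambda) = \mathrm{range}(\bQlambda)^{\perp}$; moving $\bbeta$ along that direction drives the quadratic (whose $\bQlambda$-term vanishes on $\ker(\bQlambda)$) to $-\infty$, and since the $\bz$-block is bounded on the compact feasible set $\{\bz : \sum_j z_j \le k,\ 0 \le z_j \le 1\}$, the inner infimum is $-\infty$ for such $\bc$, so they are irrelevant to the supremum. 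On $\mathcal{C}$ every $\bc$ is attained by at least one $\bgamma$, so $\sup_{\bc \in \mathcal{C}} \min_{\bz} \calL_{\text{ridge}-\lambda}^{\text{saddle}} = \sup_{\bgamma \in \bbR^p} \min_{\bz} \calL_{\text{ridge}-\lambda}^{\text{saddle}}$, which is Problem~\eqref{problem:dual_saddle_ridge_regression_lambda}.

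The one point I would treat with care is that when $\bQlambda$ is singular --- the regime that actually matters here, since $\lambda$ is allowed up to $\lambda_{\min}(\bX^T\bX)$ and $\bX^T\bX$ is itself rank-deficient when $p > n$ --- the map $\bgamma \mapsto \bc$ is not injective, so one must check that $\calL_{\text{ridge}-\lambda}^{\text{saddle}}(\bgamma,\bz)$ is well-defined as a function of $\bc$. The $\bz$-block depends on $\bgamma$ only through $\bQlambda\bgamma$, hence only through $\bc$; and for $\bv \in \ker(\bQlambda)$ one has $(\bgamma + \bv)^T\bQlambda(\bgamma + \bv) = \bgamma^T\bQlambda\bgamma$ because $\bQlambda\bv = \mathbf{0}$ kills both the cross term and $\bv^T\bQlambda\bv$, so $\bgamma^T\bQlambda\bgamma$ is constant on each fiber of $\bgamma \mapsto \bc$ and the identity of the first paragraph is unambiguous. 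Assembling these three ingredients gives the claimed equivalence; I expect the bookkeeping around the singular $\bQlambda$ to be the only genuine obstacle, since the algebra reducing $\calL_{\text{ridge}-\lambda}^{\text{Fenchel}}$ to $\calL_{\text{ridge}-\lambda}^{\text{saddle}}$ is routine.
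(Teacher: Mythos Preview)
Your proposal is correct and follows essentially the same route as the paper. The paper isolates your ``$\bc \notin \mathcal{C}$ gives $-\infty$'' step as a separate lemma (proved by the same $\ker(\bQlambda)$ argument you give), and then carries out the same substitution $\bbeta = \bgamma$ and algebraic simplification; your additional check that $\calL_{\text{ridge}-\lambda}^{\text{saddle}}$ is constant on fibers of $\bgamma \mapsto \bc$ is a nice clarification the paper leaves implicit.
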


Before we give the proof for Theorem~\ref{theorem:reparametrization_reduction}, we first state one useful lemma below:
\begin{lemma}\label{lemma:c_star}
    For an optimal solution $\bc^*$ to Problem~\eqref{appendix_problem:dual_Fenchel_ridge_regression_lambda}, there exists some $\bbeta^*$ such that $\bc^*=\frac{2}{\lambda_2 + \lambda}(\bX^T \by - \bQ_{\lambda}\bbeta^*)$.
\end{lemma}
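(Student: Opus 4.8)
\noindent\textbf{Proof proposal for Lemma~\ref{lemma:c_star}.}

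The plan is to exploit that the inner minimization over $\bbeta$ in Problem~\eqref{appendix_problem:dual_Fenchel_ridge_regression_lambda} is unconstrained and convex quadratic, so finiteness of its value at an outer maximizer $\bc^*$ forces a first-order stationarity condition which is precisely the claimed parametrization --- this is exactly the optimality condition already recorded in Equation~\eqref{eq:c_beta_optimality_condition_lambda}. First I would argue that the common optimal value of Problem~\eqref{appendix_problem:dual_Fenchel_ridge_regression_lambda} is finite. By the stated absence of a duality gap it equals the minmax value $\min_{\bbeta,\bz}\max_{\bc}\calL_{\text{ridge}-\lambda}^{\text{Fenchel}}$; maximizing out $\bc$ componentwise (using $g(a,b)=\max_c ac-\tfrac{c^2}{4}b$ from the preliminaries) and invoking the perspective equivalence of~\cite{atamturk2020safe}, this equals the optimal value of the continuous relaxation of Problem~\eqref{problem:original_sparsity_constrained_ridge_regression}, which lies in $[-\Vert{\by}_2^2,\,0]$ because $\bbeta^T\bQlambda\bbeta+(\lambda_2+\lambda)\sum_j \beta_j^2/z_j \ge \bbeta^T\bX^T\bX\bbeta+\lambda_2\Vert{\bbeta}_2^2$ on the feasible set (here $z_j\le 1$), while $\bbeta=\vzero,\bz=\vzero$ gives value $0$.

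Given that $\min_{\bbeta,\bz}\calL_{\text{ridge}-\lambda}^{\text{Fenchel}}(\bbeta,\bz,\bc^*)$ is finite, I would next minimize out $\bz$ first: the objective is linear in $\bz$ over the compact polytope $\{\sum_j z_j\le k,\ z_j\in[0,1]\}$, so an explicit minimizer exists (unit mass on the $k$ largest $(c^*_j)^2$), leaving a convex quadratic in $\bbeta$ of the form $\bbeta^T\bQlambda\bbeta+\bl^T\bbeta+\const$ with $\bl=(\lambda_2+\lambda)\bc^*-2\bX^T\by$. Since its infimum is finite, this quadratic is bounded below; for a convex quadratic with positive semidefinite Hessian $\bQlambda$ this is equivalent to $\bl\in\mathrm{range}(\bQlambda)$ (equivalently $\bl\perp\ker(\bQlambda)$, by symmetry of $\bQlambda$), and then the minimum is attained at some $\bbeta^*$ with $2\bQlambda\bbeta^*+\bl=\vzero$. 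Rearranging yields $\bc^*=\frac{2}{\lambda_2+\lambda}(\bX^T\by-\bQlambda\bbeta^*)$, which is the claim. Equivalently and more directly: if $\bl\notin\mathrm{range}(\bQlambda)$ there is $\bv\in\ker(\bQlambda)$ with $\bl^T\bv<0$, and letting $\bbeta=t\bv$ with $t\to\infty$ drives the inner value to $-\infty$, contradicting optimality of $\bc^*$.

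The point requiring the most care is that $\bQlambda=\bX^T\bX-\lambda\bI$ can be singular --- precisely when $\lambda=\lambda_{\min}(\bX^T\bX)$ --- so I cannot simply invert it; the argument above sidesteps this by deducing $\bl\in\mathrm{range}(\bQlambda)$ from boundedness below rather than assuming invertibility, and by only claiming existence (not uniqueness) of $\bbeta^*$. Everything else --- the conjugate identity, the explicit minimization over $\bz$, and the stationarity computation --- is routine and already available from the preliminary section.
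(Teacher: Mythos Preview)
Your proposal is correct and takes essentially the same approach as the paper: both arguments reduce to showing that at the optimal $\bc^*$ the inner unconstrained quadratic in $\bbeta$ must be bounded below, which (since $\bQlambda\succeq 0$) forces the linear part $(\lambda_2+\lambda)\bc^*-2\bX^T\by$ to lie in $\mathrm{range}(\bQlambda)$, and your ``more direct'' alternative at the end is exactly the paper's contradiction via a kernel direction. The only cosmetic difference is that you establish finiteness of the optimal value through the no-duality-gap route to the primal relaxation, whereas the paper simply exhibits the witness $\bc=\tfrac{2}{\lambda_2+\lambda}\bX^T\by$ for which the inner minimum is finite; either suffices.
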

Without using this lemma, there is some concern with reparametrizing $\bc$ with a new parameter $\bgamma$ when $\lambda = \lambda_{\min} (\bX^T \bX)$.
In this case, $\bX^T \bX - \lambda \bI$ becomes singular and our reparametrization trick could miss some subspace.
This issue of having $\bX^T \bX - \lambda \bI$ being singular would lead to the concern that Problem~\eqref{appendix_eq:ridge_regression_saddle_loss_lambda} is a relaxation of Problem~\eqref{appendix_problem:dual_Fenchel_ridge_regression_lambda}, not equivalent.
Fortunately, due to this lemma, the subspace which we miss via the reparametrization trick will not prevent Problem~\eqref{appendix_eq:ridge_regression_saddle_loss_lambda} from achieving the same optimal value as Problem~\eqref{appendix_problem:dual_Fenchel_ridge_regression_lambda}.

We now prove this lemma by the method of contradiction.

\begin{proof}
    
    Suppose the optimal solution $\bc$ to Problem~\eqref{appendix_problem:dual_Fenchel_ridge_regression_lambda} cannot be written by Equation~\eqref{appendix_eq:c_beta_optimality_condition_lambda}.
    Through the rank-nullity theorem in linear algebra, we can find $\bd_1 \in col(\bQ_{\lambda})$ and $d_2 \in ker(\bQ_{\lambda}), \bd_2 \neq \mathbf{0}$ so that $\bc = \frac{2}{\lambda_2 + \lambda}(\bX^T \by - \bd_1 + \bd_2)$, where $col(\cdot)$ denotes the column space of a matrix and $ker(\cdot)$ denotes the kernel of a matrix. If we let $\bbeta = \alpha \bd_2$ for some real value $\alpha$, then the inner optimization of Problem~\eqref{appendix_problem:dual_Fenchel_ridge_regression_lambda} (ignore the last term involving $z_j$'s for now since $\bbeta$ and $\bz$ are separable in the objective function) becomes \begin{align*} \gL_{\text{ridge}-\lambda}^{\text{Fenchel}}(\bbeta, \bz, \bc) &= \bbeta^T \bQ_{\lambda} \bbeta - ((\lambda_2+\lambda) \bc - 2 \bX^T \by)^T \bbeta \\ &= \bbeta^T \bQ_{\lambda} \bbeta - ((\lambda_2+\lambda)\frac{2}{\lambda_2 + \lambda} (\bX^T \by-\bd_1 +\bd_2) - 2 \bX^T \by)^T \bbeta\\ &= \bbeta^T \bQ_{\lambda} \bbeta - (2 (\bX^T \by - \bd_1 + \bd_2) - 2 \bX^T \by)^T \bbeta \\ &= \bbeta^T \bQ_{\lambda} \bbeta + 2 (\bd_1 - \bd_2)^T \bbeta \\ &= \bbeta^T \bQ_{\lambda} \bbeta + 2 \bd_1^T \bbeta -2 \bd_2^T \bbeta. \end{align*}
    Because $\bbeta =\alpha \bd_2$, we have $\bbeta \in ker(\bQ_{\lambda})$ and $\bbeta \perp \bd_1$.
    Thus, the first two terms become $\mathbf{0}$, and we are left with $\gL_{\text{ridge}-\lambda}^{\text{Fenchel}} = - 2 \bd_2^T \bbeta = - 2\alpha \lVert \bd_2\rVert_2^2$.
    For the inner optimization of Problem~\eqref{appendix_problem:dual_Fenchel_ridge_regression_lambda}, because we want to minimize over $\bbeta$, we can let $\alpha \rightarrow \infty$, and we will get $\gL_{\text{ridge}-\lambda}^{\text{Fenchel}} \rightarrow - \infty$.
    This is obviously not the optimal value for Problem~\eqref{appendix_problem:dual_Fenchel_ridge_regression_lambda} (a simple counter-example is to let $\bc = \frac{2}{\lambda_2 + \lambda} \bX^T \by$, and we can easily show that $\min_{\bbeta, \bz} \gL_{\text{ridge}-\lambda}^{\text{Fenchel}}(\bbeta, \bz, \bc)$ is finite).
    Therefore, $\bd_2$ must be $\mathbf{0}$, and the optimal solution $\bc$ to Problem~\eqref{appendix_problem:dual_Fenchel_ridge_regression_lambda} must be represented by some $\bbeta$ through the equation $\bc=\frac{2}{\lambda_2 + \lambda}(\bX^T \by - \bQ_{\lambda}\beta)$.
\end{proof}


Now that we have this lemma, we can prove Theorem~\ref{theorem:reparametrization_reduction} rigorously.
We provide two alternative proofs so that readers can understand the logic from different angles.
The first proof is based on using the lemma directly; the second proof is based on the sandwich argument.

\paragraph{Proof 1}
\begin{proof}
For notational convenience, let us use $\gD_z:=\{\bz \mid \sum_{j=1}^p z_j \leq k, z_j \in [0, 1]\}$ as a shorthand for the domain of $\bz$.

Recall that Problem~\eqref{problem:dual_Fenchel_ridge_regression_lambda} is the following optimization problem:
\begin{align}
\label{appendix_problem:dual_Fenchel_ridge_regression_lambda}
    \max_{\bc} & \min_{\bbeta, \bz} \calL_{\text{ridge}-\lambda}^{\text{Fenchel}}  (\bbeta, \bz, \bc) \\
    \text{subject to}&  \quad \sum_{j=1}^p z_j \leq k, \quad z_j \in [0, 1]. \nonumber
\end{align}
where $\calL_{\text{ridge}-\lambda}^{\text{Fenchel}}(\bbeta, \bz, \bc) = \bbeta^T \bQlambda \bbeta - 2 \by^T \bX \bbeta + (\lambda_2 + \lambda) \sum_{j=1}^p (\beta_j c_j - \frac{c_j^2}{4}) z_j$.

For any fixed $\bc$, if we take the derivative of $\calL_{\text{ridge}-\lambda}^{\text{Fenchel}}(\bbeta, \bz, \bc)$ with respect to $\bbeta$ and set the derivative to $\mathbf{0}$, we obtain the optimality condition for $\bbeta$:
\begin{align}
\label{appendix_eq:c_beta_optimality_condition_lambda}
    \bc &= \frac{2}{\lambda_2 + \lambda} [\bX^T \by - (\bX^T \bX - \lambda \bI)\bbeta]
\end{align}

If we reparametrize $\bc$ with a new parameter $\bgamma$ and let $\bc = \frac{2}{\lambda_2 + \lambda} [\bX^T \by - (\bX^T \bX - \lambda \bI)\bgamma]$, then $\bbeta = \bgamma$ satisfies the optimality condition for $\bbeta$ in Equation~\eqref{appendix_eq:c_beta_optimality_condition_lambda}. Therefore, whenever we reparametrize $\bc$ with $\bc = \frac{2}{\lambda_2 + \lambda} [\bX^T \by - (\bX^T \bX - \lambda \bI)\bgamma]$, we can just let $\bbeta=\bgamma$, and this ensures us that we can achieve the inner minimum of Equation~\eqref{appendix_problem:dual_Fenchel_ridge_regression_lambda} with respect to $\bbeta$.

Then, we can reduce Problem~\eqref{appendix_problem:dual_Fenchel_ridge_regression_lambda} to Problem~\eqref{appendix_problem:dual_saddle_ridge_regression_lambda} in the following way:

\allowdisplaybreaks
\begin{align*}
   &\max_{\bc}  \min_{\bbeta, \bz} \calL_{\text{ridge}-\lambda}^{\text{Fenchel}}  (\bbeta, \bz, \bc) \eqcomment{subject to $\sum_{j=1}^p z_j \leq k, z_j \in [0, 1]$} \\ 
    =& \max_{\bc} \min_{\bz \in \gD_z, \bbeta} \calL_{\text{ridge}-\lambda}^{\text{Fenchel}}  (\bbeta, \bz, \bc) \eqcomment{use $\gD_z$ as the shorhand for the domain of $\bz$} \\
    =& \max_{\bc} \min_{\bz \in \gD_z, \bbeta} \bbeta^T \bQlambda \bbeta - 2 \by^T \bX \bbeta + (\lambda_2 + \lambda) \sum_{j=1}^p (\beta_j c_j - \frac{c_j^2}{4} z_j) \eqcomment{plug in the details for $\calL_{\text{ridge}-\lambda}^{\text{Fenchel}}(\bbeta, \bz, \bc)$} \\
    =& \max_{\bc} \min_{\bz \in \gD_z} \bgamma^T \bQlambda \bgamma - 2 \by^T \bX \bgamma + (\lambda_2 + \lambda) \sum_{j=1}^p (\gamma_j c_j - \frac{c_j^2}{4} z_j) \eqcomment{letting $\bbeta = \bgamma$ achieves the inner minimum for $\bbeta$} \\
    =& \max_{\bc} \min_{\bz \in \gD_z} \bgamma^T \bQlambda \bgamma - 2 \by^T \bX \bgamma + (\lambda_2 + \lambda) (\bgamma^T \bc - \frac{1}{4} \bc^T diag(\bz) \bc)\eqcomment{write the summation in vector forms; $diag(\bz)$ denotes a diagonal matrix with $\bz$ on the diagonal} \\
    =& \max_{\bgamma} \min_{\bz \in \gD_z} \bgamma^T \bQlambda \bgamma - 2 \by^T \bX \bgamma + (\lambda_2 + \lambda) \left(\bgamma^T \frac{2}{\lambda_2 + \lambda}(\bX^T \by - \bQlambda \bgamma) \right.\\
    &- \frac{1}{4} \frac{2}{\lambda_2 + \lambda}(\bX^T \by - \bQlambda \bgamma)^T diag(\bz) \frac{2}{\lambda_2 + \lambda}(\bX^T \by - \bQlambda \bgamma) \Big. \Big)\eqcomment{apply the reparametrization $\bc = \frac{2}{\lambda_2 + \lambda}(\bX^T \by - \bQlambda \bgamma)$.}\\
    \eqcomment{remember that due to Lemma~\ref{lemma:c_star}, we will not miss the optimal solution $c^*$}\\
    =& \max_{\bgamma} \min_{\bz \in \gD_z} \bgamma^T \bQlambda \bgamma - 2 \by^T \bX \bgamma + 2 \bgamma^T (\bX^T \by - \bQlambda \bgamma) - \frac{1}{\lambda_2 + \lambda} (\bX^T \by - \bQlambda \bgamma)^T diag(\bz) (\bX^T \by - \bQlambda \bgamma) \eqcomment{distribute $\lambda_2 + \lambda$ inside the parentheses} \\
    =& \max_{\bgamma} \min_{\bz \in \gD_z} \bgamma^T \bQlambda \bgamma - 2 \by^T \bX \bgamma + 2 \bgamma^T \bX^T \by - 2 \bgamma^T \bQlambda \bgamma - \frac{1}{\lambda_2 + \lambda} (\bX^T \by - \bQlambda \bgamma)^T diag(\bz) (\bX^T \by - \bQlambda \bgamma) \eqcomment{distribute $2\bgamma$ inside the parentheses} \\
    =& \max_{\bgamma} \min_{\bz \in \gD_z} -\bgamma^T \bQlambda \bgamma - \frac{1}{\lambda_2 + \lambda} (\bX^T \by - \bQlambda \bgamma)^T diag(\bz) (\bX^T \by - \bQlambda \bgamma) \eqcomment{simplify the linear algebra} \\
    =& \max_{\bgamma} \min_{\bz}-\bgamma^T \bQlambda \bgamma - \frac{1}{\lambda_2 + \lambda} (\bX^T \by - \bQlambda \bgamma)^T diag(\bz) (\bX^T \by - \bQlambda \bgamma) \eqcomment{subject to $\sum_{j=1}^p z_j \leq k, z_j \in [0, 1]$} \\
    =& \max_{\bgamma} \min_{\bz}\gL_{\text{ridge}-\lambda}^{\text{saddle}}(\bgamma, \bz)\eqcomment{subject to $\sum_{j=1}^p z_j \leq k, z_j \in [0, 1]$; apply the definition of $\gL_{\text{ridge}-\lambda}^{\text{saddle}}(\bgamma, \bz)$} 
\end{align*}
This completes the derivation for Theorem~\ref{theorem:reparametrization_reduction}.

\end{proof}

\paragraph{Proof 2}
\begin{proof}
    We will show $\max_{\bc} \min_{\bbeta, \bz} \gL_{\text{ridge}-\lambda}^{\text{Fenchel}}(\bbeta, \bz, \bc) \leq \max_{\gamma} \min_{\bz} \gL_{\text{ridge}-\lambda}^{\text{saddle}}(\bgamma, \bz)$ and $\max_{\bc} \min_{\bbeta, \bz} \gL_{\text{ridge}-\lambda}^{\text{Fenchel}}(\bbeta, \bz, \bc) \geq \max_{\gamma} \min_{\bz} \gL_{\text{ridge}-\lambda}^{\text{saddle}}(\bgamma, \bz)$ (both under the constrains $\sum_{j=1}^p z_j \leq k, z_j \in [0, 1]$) and therefore establish the equivalence of optimal values.
    
    For the first inequality, let $\bbeta^*, \bc^*$ be the optimal solution to Problem~\eqref{appendix_problem:dual_Fenchel_ridge_regression_lambda} with the relation $\bc^*=\frac{2}{\lambda_2 + \lambda}(\bX^T \by - \bQ_{\lambda} \bbeta^*)$ (this is always true because of Lemma ~\ref{lemma:c_star}).
    Furthermore, let $\bgamma^* = \bbeta^*$. Then we have \begin{align*} \max_{\bc} \min_{\bbeta, \bz} \gL_{\text{ridge}-\lambda}^{\text{Fenchel}}(\bbeta, \bz, \bc) = \min_{\bz} \gL_{\text{ridge}-\lambda}^{\text{Fenchel}}(\bbeta^*, \bz, \bc^*) = \min_{\bz} \gL_{\text{ridge}-\lambda}^{\text{saddle}}(\bgamma^*, \bz) \leq \max_{\gamma} \min_{\bz} \gL_{\text{ridge}-\lambda}^{\text{saddle}}(\bgamma, \bz) \end{align*} The derivation for the second equality roughly follows the derivation in the first proof we gave.
    
    For the second inequality, let $\hat{\bgamma}$ be the optimal solution to Problem~\eqref{appendix_problem:dual_saddle_ridge_regression_lambda}. Let $\hat{\bc} = \frac{2}{\lambda_2 + \lambda}(\bX^T \by - \bQ_{\lambda}\hat{\bgamma})$.
    Furthermore, let $\hat{\bbeta} = \hat{\bgamma}$ (notice that this $\hat{\bbeta}$ satisfies the optimality condition of Problem~\eqref{appendix_problem:dual_Fenchel_ridge_regression_lambda}).
    Then we have \begin{align*} \max_{\bgamma} \min_{\bz} \gL_{\text{ridge}-\lambda}^{\text{saddle}}(\bgamma, \bz) &= \min_{\bz} \gL_{\text{ridge}-\lambda}^{\text{saddle}}(\hat{\bgamma}, \bz) \\ &= \min_{\bz} \gL_{\text{ridge}-\lambda}^{\text{Fenchel}}(\hat{\bbeta}, \bz, \hat{\bc}) \\ &= \min_{\bbeta, \bz} \gL_{\text{ridge}-\lambda}^{\text{Fenchel}}(\bbeta, \bz, \hat{\bc}) \\ &\leq \max_{\bc} \min_{\bbeta, \bz} \gL_{\text{ridge}-\lambda}^{\text{Fenchel}}(\bbeta, \bz, \bc). \end{align*} Similarly, the derivation for the second equality roughly follows the derivation (but this time in reverse order) in our first proof.
    
    Using these two inequalities, we can establish the equivalence of optimal values between Problem~\eqref{appendix_problem:dual_Fenchel_ridge_regression_lambda} and Problem~\eqref{appendix_problem:dual_saddle_ridge_regression_lambda}, $\textit{i.e.}$, $\max_{\bc} \min_{\bbeta, \bz} \gL_{\text{ridge}-\lambda}^{\text{Fenchel}}(\bbeta, \bz, \bc) = \max_{\bgamma} \min_{\bz} \gL_{\text{ridge}-\lambda}^{\text{saddle}}(\bgamma, \bz)$.
\end{proof}

\subsection{Proof of Theorem~\ref{theorem:lowerBound_h(gamma)_reparametrization_lambda}}
\label{appendix:proof_lowerBound_h(gamma)_reparametrization_lambda}

\begin{namedtheorem}[\ref{theorem:lowerBound_h(gamma)_reparametrization_lambda}]
    The function $h(\bgamma)$ defined in Equation~\eqref{eq:h(gamma)_dual_saddle_inner_minimization_lambda} is lower bounded by
    \begin{align}
        h(\bgamma) \geq& -\bgamma^T \bQlambda \bgamma -\frac{1}{\lambda_2 + \lambda}\Vert{\bX^T \by - \bQlambda \bgamma}_2^2. \label{appendix_ineq:h(gamma)_reparametrization_lower_bound_lambda}
    \end{align}
    Furthermore, the right-hand size of Equation~\eqref{appendix_ineq:h(gamma)_reparametrization_lower_bound_lambda} is maximized if $\bgamma = \hat{\bgamma} = \argmin_{\balpha} \calL_{ridge} (\balpha)$, where in this case, $h(\bgamma)$ on the left-hand side of Equation~\eqref{appendix_ineq:h(gamma)_reparametrization_lower_bound_lambda} becomes
    \begin{align}
    \label{appendix_eq:h(c)_decomposed_as_ridge_plus_regret_lambda}
        h(\hat{\bgamma}) = \gLridge(\hat{\bgamma}) + (\lambda_2 + \lambda) \text{SumBottom}_{p-k}(\{\hat{\gamma}_j^2\}) \nonumber,
    \end{align}
    where $\text{SumBottom}_{p-k}(\cdot)$ denotes the summation of the smallest $p-k$ terms of a given set.
\end{namedtheorem}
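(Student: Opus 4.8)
The plan is to carry out the inner minimization over $\bz$ in closed form, relax it to the full $\ell_2$ norm to obtain the displayed inequality, and then maximize that relaxed lower bound over $\bgamma$.

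\textbf{Computing $h$ and proving the inequality.} Write $\bp := \bX^T \by - \bQlambda \bgamma$, so that $\calL_{\text{ridge}-\lambda}^{\text{saddle}}(\bgamma, \bz) = -\bgamma^T \bQlambda \bgamma - \frac{1}{\lambda_2+\lambda}\sum_{j=1}^p z_j p_j^2$. Since $\lambda_2 + \lambda > 0$, minimizing over $\{\bz : \sum_j z_j \le k,\ z_j \in [0,1]\}$ is equivalent to maximizing the linear functional $\sum_j z_j p_j^2$, whose maximum is attained by setting $z_j = 1$ on the $k$ indices with the largest $p_j^2$ and $z_j=0$ elsewhere; hence $h(\bgamma) = -\bgamma^T \bQlambda \bgamma - \frac{1}{\lambda_2+\lambda}\text{SumTop}_k(\{p_j^2\})$. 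Because $\text{SumTop}_k(\{p_j^2\}) \le \sum_{j=1}^p p_j^2 = \Vert{\bp}_2^2$ and the coefficient is positive, this gives $h(\bgamma) \ge -\bgamma^T \bQlambda \bgamma - \frac{1}{\lambda_2+\lambda}\Vert{\bX^T\by - \bQlambda\bgamma}_2^2$, which is Inequality~\eqref{appendix_ineq:h(gamma)_reparametrization_lower_bound_lambda}. Denote this right-hand side by $R(\bgamma)$.

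\textbf{Maximizing $R$.} Expanding, $R$ is a quadratic whose Hessian is $-2\bQlambda - \frac{2}{\lambda_2+\lambda}\bQlambda^2 \preceq 0$ since $\bQlambda \succeq 0$, so $R$ is concave and any stationary point is a global maximizer. The first-order condition $\nabla R(\bgamma) = -2\bQlambda\bgamma + \frac{2}{\lambda_2+\lambda}\bQlambda(\bX^T\by - \bQlambda\bgamma) = \mathbf{0}$ is satisfied whenever $\bX^T\by - \bQlambda\bgamma = (\lambda_2+\lambda)\bgamma$; using $\bQlambda = \bX^T\bX - \lambda\bI$ this reduces to $(\bX^T\bX + \lambda_2\bI)\bgamma = \bX^T\by$, the ridge normal equation, whose solution is $\hat{\bgamma} = \argmin_{\balpha}\gLridge(\balpha)$. (If $\bQlambda$ is singular, i.e.\ $\lambda = \lambda_{\min}(\bX^T\bX)$, then $R$ may have further stationary points, but by concavity $\hat{\bgamma}$ still attains the maximum value.)

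\textbf{Evaluating $h$ at $\hat{\bgamma}$.} At $\bgamma = \hat{\bgamma}$ the normal equation gives $\bX^T\by - \bQlambda\hat{\bgamma} = (\lambda_2+\lambda)\hat{\bgamma}$, so $p_j = (\lambda_2+\lambda)\hat\gamma_j$ and $\text{SumTop}_k(\{p_j^2\}) = (\lambda_2+\lambda)^2\,\text{SumTop}_k(\{\hat\gamma_j^2\})$; substituting into the closed form for $h$ yields $h(\hat{\bgamma}) = -\hat{\bgamma}^T\bQlambda\hat{\bgamma} - (\lambda_2+\lambda)\,\text{SumTop}_k(\{\hat\gamma_j^2\})$. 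Writing $\text{SumTop}_k(\{\hat\gamma_j^2\}) = \Vert{\hat{\bgamma}}_2^2 - \text{SumBottom}_{p-k}(\{\hat\gamma_j^2\})$ and using the identity $\gLridge(\hat{\bgamma}) = -\hat{\bgamma}^T\bQlambda\hat{\bgamma} - (\lambda_2+\lambda)\Vert{\hat{\bgamma}}_2^2$ — which follows from $\gLridge(\hat{\bgamma}) = \hat{\bgamma}^T\bQlambda\hat{\bgamma} - 2\by^T\bX\hat{\bgamma} + (\lambda_2+\lambda)\Vert{\hat{\bgamma}}_2^2$ together with $2\by^T\bX\hat{\bgamma} = 2\hat{\bgamma}^T(\bQlambda + (\lambda_2+\lambda)\bI)\hat{\bgamma}$, again the normal equation — we obtain $h(\hat{\bgamma}) = \gLridge(\hat{\bgamma}) + (\lambda_2+\lambda)\,\text{SumBottom}_{p-k}(\{\hat\gamma_j^2\})$, as claimed.

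\textbf{Main obstacle.} Nothing here is deep; the two places requiring care are the sign bookkeeping in the $\bz$-step (it is a \emph{maximization} of $\sum_j z_j p_j^2$, so one keeps the top-$k$ values) and the identity $\gLridge(\hat{\bgamma}) = -\hat{\bgamma}^T\bQlambda\hat{\bgamma} - (\lambda_2+\lambda)\Vert{\hat{\bgamma}}_2^2$, which genuinely uses optimality of $\hat{\bgamma}$ via the normal equation rather than being a purely formal rewrite.
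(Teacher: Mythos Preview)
Your proof is correct and follows essentially the same approach as the paper: relax the top-$k$ sum to the full $\ell_2$ norm for the inequality, set the gradient of the relaxed bound to zero to recover the ridge normal equation, and then use $\bX^T\by - \bQlambda\hat{\bgamma} = (\lambda_2+\lambda)\hat{\bgamma}$ together with the identity $\gLridge(\hat{\bgamma}) = -\hat{\bgamma}^T\bQlambda\hat{\bgamma} - (\lambda_2+\lambda)\Vert{\hat{\bgamma}}_2^2$ to simplify $h(\hat{\bgamma})$. Your version is more compact and also explicitly notes the concavity of $R$ and the harmless degeneracy when $\bQlambda$ is singular, points the paper leaves implicit.
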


\begin{proof}
We first derive Inequality~\eqref{appendix_ineq:h(gamma)_reparametrization_lower_bound_lambda}.
\allowdisplaybreaks
\begin{align*}
    h(\bgamma) =& \min_{\bz \in \gD_{\bz}}\gL_{\text{ridge}-\lambda}^{\text{saddle}}(\bgamma, \bz) \eqcomment{$\gD_{\bz} := \{\bz \mid \sum_{j=1}^p z_j \leq k, z_j \in [0, 1]\}$} \\
    =& \min_{\bz \in \gD_{\bz}}-\bgamma^T \bQlambda \bgamma - \frac{1}{\lambda_2 + \lambda} (\bX^T \by - \bQlambda \bgamma)^T diag(\bz) (\bX^T \by - \bQlambda \bgamma) \eqcomment{ apply the definition of $\gL_{\text{ridge}-\lambda}^{\text{saddle}}(\bgamma, \bz)$} \\
    =& \min_{\bz \in \gD_{\bz}}-\bgamma^T \bQlambda \bgamma - \frac{1}{\lambda_2 + \lambda} \bd^T diag(\bz) \bd \eqcomment{ let $\bd:=(\bX^T \by - \bQlambda \bgamma)$} \\
    =& \min_{\bz \in \gD_{\bz}}-\bgamma^T \bQlambda \bgamma - \frac{1}{\lambda_2 + \lambda} \sum_{j=1}^p d_j^2 z_j \eqcomment{ use summation notation instead of matrix notation } \\
    \geq& -\bgamma^T \bQlambda \bgamma - \frac{1}{\lambda_2 + \lambda} \sum_{j=1}^p d_j^2 \eqcomment{notice that $z_j \in [0, 1]$; letting $z_j=1$ for each $j$ gives us a lower bound} \\
    =& -\bgamma^T \bQlambda \bgamma - \frac{1}{\lambda_2 + \lambda} \bd^T \bd \eqcomment{use matrix notation in stead of summation notation} \\
    =& -\bgamma^T \bQlambda \bgamma - \frac{1}{\lambda_2 + \lambda} (\bX^T \by - \bQlambda \bgamma)^T (\bX^T \by - \bQlambda \bgamma) \eqcomment{apply $\bd:=(\bX^T \by - \bQlambda \bgamma)$}. 
\end{align*}
This completes the derivation for Inequality~\eqref{appendix_ineq:h(gamma)_reparametrization_lower_bound_lambda}.

The right-hand side of 
Inequality~\eqref{appendix_ineq:h(gamma)_reparametrization_lower_bound_lambda} is maximized if the gradient with respect to $\bgamma$ is $\mathbf{0}$.

The gradient can be derived and simplified as:
\allowdisplaybreaks
\begin{align*}
    & \frac{\partial}{\partial \bgamma} \left(-\bgamma^T \bQlambda \bgamma - \frac{1}{\lambda_2 + \lambda} (\bX^T \by - \bQlambda \bgamma)^T (\bX^T \by - \bQlambda \bgamma) \right) \eqcomment{gradient of the right-hand side of Inequality~\eqref{appendix_ineq:h(gamma)_reparametrization_lower_bound_lambda}} \\
    =& \frac{\partial}{\partial \bgamma} \left(-\bgamma^T \bQlambda \bgamma - \frac{1}{\lambda_2 + \lambda} (\by^T \bX \bX^T \by - 2\by^T \bX \bQlambda \bgamma + \bgamma^T \bQlambda^T \bQlambda \bgamma) \right) \eqcomment{expand the quadratic term} \\
    =&\left(- 2\bQlambda \bgamma - \frac{1}{\lambda_2 + \lambda} ( - 2 \bQlambda^T \bX^T \by + 2\bQlambda^T \bQlambda \bgamma) \right) \eqcomment{apply the differential operator to each term} \\
    =&\left(- 2\bQlambda^T \bgamma - \frac{1}{\lambda_2 + \lambda} ( - 2 \bQlambda^T \bX^T \by + 2\bQlambda^T \bQlambda \bgamma) \right) \eqcomment{change the first term from $\bQlambda \bgamma$ to $\bQlambda^T \bgamma$ because $\bQlambda=\bX^T \bX - \lambda \bI$ is symmetric} \\
    =& - \frac{2}{\lambda_2 + \lambda} \bQlambda^T \left((\lambda_2 + \lambda) \bgamma - \bX^T \by +  \bQlambda \bgamma) \right) \eqcomment{pull the common factor $- \frac{2}{\lambda_2 + \lambda}\bQlambda^T$ out of the parentheses} \\
    =& - \frac{2}{\lambda_2 + \lambda} \bQlambda^T \left((\lambda_2 + \lambda) \bgamma - \bX^T \by +  (\bX^T \bX - \lambda \bI) \bgamma) \right) \eqcomment{substitute $\bQlambda^T = \bX^T \bX - \lambda \bI$ for the last term} \\
    =& - \frac{2}{\lambda_2 + \lambda} \bQlambda^T \left((\lambda_2 + \lambda) \bI \bgamma - \bX^T \by +  (\bX^T \bX - \lambda \bI) \bgamma) \right) \eqcomment{add an identity matrix in the first term} \\
    =& - \frac{2}{\lambda_2 + \lambda} \bQlambda^T \left(( - \bX^T \by +  (\bX^T \bX + \lambda_2 \bI) \bgamma) \right) \eqcomment{add an identity matrix in the first term}. 
\end{align*}
Therefore, if we set $(\bX^T \bX + \lambda_2 \bI) \bgamma = \bX^T \by$, then the gradient of the right-hand side of Inequality~\eqref{appendix_ineq:h(gamma)_reparametrization_lower_bound_lambda} will be $\mathbf{0}$, and the optimality condition is achieved.
However, this means that $\bgamma = (\bX^T \bX + \lambda_2 \bI)^{-1} \bX^T \by$, which gives us the minimizer of the ridge regression loss $\gLridge(\cdot)$.

Lastly, we show that when $\hat{\bgamma} = \argmin_{\bgamma} \gLridge(\bgamma)$, $h(\hat{\bgamma}) = \gLridge(\hat{\bgamma}) + (\lambda_2 + \lambda) \textit{SumBottom}_{p-k}(\{\hat{\gamma}_j^2\})$.

First, let us derive a useful property.
We claim that when $\hat{\bgamma} = \argmin_{\bgamma} \gLridge(\bgamma)$, $\bX^T \by - \bQlambda \hat{\bgamma} = (\lambda_2 + \lambda) \hat{\bgamma}$. Below, we prove this by showing that when we subtract the two terms, we get $\mathbf{0}$.
\begin{align*}
    &\bX^T \by - \bQlambda \hatbgamma - (\lambda_2 + \lambda) \hatbgamma \\
    =& \bX^T \by - (\bX^T \bX - \lambda \bI) \hatbgamma - (\lambda_2 + \lambda) \hatbgamma \eqcomment{substitute $\bQlambda  =  \bX^T \bX - \lambda \bI$} \\
    =& \bX^T \by - (\bX^T \bX - \lambda \bI) \hatbgamma - (\lambda_2 + \lambda) \bI \hatbgamma \eqcomment{add the identity matrix $\bI$} \\
    =& \bX^T \by - (\bX^T \bX + \lambda_2 \bI) \hatbgamma \eqcomment{aggregate the relevant terms in front of $\hatbgamma$ } \\
    =& \bX^T \by - (\bX^T \bX + \lambda_2 \bI) (\bX^T \bX + \lambda_2 \bI)^{-1}\bX^T \by \eqcomment{$\hatbgamma = \argmin_{\bgamma} \gLridge(\bgamma) = \bX^T \bX + \lambda_2 \bI$ } \\
    =& \bX^T \by - \bX^T \by \eqcomment{cancel out $\bX^T \bX + \lambda_2 \bI$ } \\
    =& \mathbf{0}.
\end{align*}
Therefore, we have $\bX^T \by - \bQ_{\lambda} \hat{\bgamma} = (\lambda_2 + \lambda) \hat{\bgamma}$ when $\hat{\bgamma} = \argmin_{\bgamma} \gLridge(\bgamma)$.
With this property, we can derive the final simplified formula for $h(\hatbgamma)$:
\allowdisplaybreaks
\begin{align*}
h(\hatbgamma) =& \min_{\bz \in \gD_{\bz}}\gL_{\text{ridge}-\lambda}^{\text{saddle}}(\hatbgamma, \bz) \eqcomment{$\gD_{\bz} := \{\bz \mid \sum_{j=1}^p z_j \leq k, z_j \in [0, 1]\}$} \\
    =& \min_{\bz \in \gD_{\bz}}-\hatbgamma^T \bQlambda \hatbgamma - \frac{1}{\lambda_2 + \lambda} (\bX^T \by - \bQlambda \hatbgamma)^T diag(\bz) (\bX^T \by - \bQlambda \hatbgamma) \eqcomment{ apply the definition of $\gL_{\text{ridge}-\lambda}^{\text{saddle}}(\hatbgamma, \bz)$} \\
    =& \min_{\bz \in \gD_{\bz}}-\hatbgamma^T \bQlambda \hatbgamma - \frac{1}{\lambda_2 + \lambda} (\lambda_2 + \lambda)\hatbgamma^T diag(\bz) (\lambda_2 + \lambda)\hatbgamma \eqcomment{ apply the property $\bX^T \by - \bQ_{\lambda} \hat{\bgamma} = (\lambda_2 + \lambda) \hat{\bgamma}$} \\
    =& \min_{\bz \in \gD_{\bz}}-\hatbgamma^T \bQlambda \hatbgamma - (\lambda_2 + \lambda)\hatbgamma^T diag(\bz) \hatbgamma \eqcomment{ cancel out $\lambda_2 + \lambda$} \\
    =& \min_{\bz \in \gD_{\bz}}-\hatbgamma^T (\bX^T \bX - \lambda \bI)\hatbgamma - (\lambda_2 + \lambda)\hatbgamma^T diag(\bz) \hatbgamma \eqcomment{ apply $\bQlambda = \bX^T\bX - \lambda \bI$} \\
    =& \min_{\bz \in \gD_{\bz}}-\hatbgamma^T (\bX^T \bX - \lambda \bI)\hatbgamma - (\lambda_2 + \lambda) \sum_{j=1}^p  \hat{\gamma}_j^2 z_j \eqcomment{ use summation instead of matrix notation} \\
    =& -\hatbgamma^T (\bX^T \bX - \lambda \bI)\hatbgamma - (\lambda_2 + \lambda) \max_{\bz \in \gD_{\bz}}\sum_{j=1}^p  \hat{\gamma}_j^2 z_j \eqcomment{move $\min(\cdot)$ inside the negative sign and convert it to $\max(\cdot)$} \\
    =& -\hatbgamma^T (\bX^T \bX - \lambda \bI)\hatbgamma - (\lambda_2 + \lambda) TopSum_k  \{\hat{\gamma}_j^2\} \eqcomment{$\textit{TopSum}_k (\cdot)$ denotes summation over the top $k$ terms} \\
    =& -\hatbgamma^T (\bX^T \bX - \lambda \bI)\hatbgamma - (\lambda_2 + \lambda) (\sum_{j=1}^p \hat{\gamma}_j^2 - BottomSum_{p-k}  \{\hat{\gamma}_j^2\}) \eqcomment{$\textit{BottomSum}_{p-k} (\cdot)$ denotes summation over the bottom $p-k$ terms} \\
    =& -\hatbgamma^T (\bX^T \bX - \lambda \bI)\hatbgamma + (\lambda_2 + \lambda)  BottomSum_{p-k}  \{\hat{\gamma}_j^2\} - (\lambda_2 + \lambda) \sum_{j=1}^p \hat{\gamma}_j^2  \eqcomment{distribute into the parentheses} \\
    =& -\hatbgamma^T (\bX^T \bX - \lambda \bI)\hatbgamma + (\lambda_2 + \lambda)  BottomSum_{p-k}  \{\hat{\gamma}_j^2\} - (\lambda_2 + \lambda) \hatbgamma^T \hatbgamma.  \eqcomment{use matrix notation instead of summation notation}
\end{align*}
We note that $-\hatbgamma^T (\bX^T \bX - \lambda \bI)\hatbgamma - (\lambda_2 + \lambda) \hatbgamma^T \hatbgamma= \gLridge(\hatbgamma)$.
To show this, we subtract the two terms and prove that the subtraction is equal to $0$ below:
\begin{align*}
    & -\hatbgamma^T (\bX^T \bX - \lambda \bI) \hatbgamma- (\lambda_2 + \lambda) \hatbgamma^T \hatbgamma - \gLridge(\hatbgamma) \\
    =& -\hatbgamma^T (\bX^T \bX - \lambda \bI) \hatbgamma - (\lambda_2 + \lambda) \hatbgamma^T \hatbgamma  - \left( \hatbgamma^T \bX^T \bX \hatbgamma - 2 \by^T \bX \hatbgamma + \lambda_2 \hatbgamma^T \hatbgamma \right) \eqcomment{plug in the formula for $\gLridge(\hatbgamma)$} \\
    =& -2 \left( \hatbgamma^T (\bX^T \bX + \lambda_2 \bI) \hatbgamma - \by^T \bX \hatbgamma \right) \eqcomment{simplify the linear algebra} \\
    =& -2 \left( \hatbgamma^T (\bX^T \bX + \lambda_2 \bI) (\bX^T \bX + \lambda_2 \bI)^{-1} \bX^T \by - \by^T \bX \hatbgamma \right) \eqcomment{plug in the equation $\hatbgamma = (\bX^T \bX + \lambda_2 \bI)^{-1} \bX^T \by$ because $\hatbgamma$ is the minimizer of $\gLridge(\cdot)$} \\
    =& -2 \left( \hatbgamma^T \bX^T \by - \by^T \bX \hatbgamma \right) \eqcomment{cancel out $ \bX^T \bX + \lambda_2 \bI $} \\
    =& 0.
\end{align*}
Using the property that $-\hatbgamma^T (\bX^T \bX - \lambda \bI)\hatbgamma - (\lambda_2 + \lambda) \hatbgamma^T \hatbgamma= \gLridge(\hatbgamma)$, we have:
\begin{align*}
    h(\hatbgamma)=& -\hatbgamma^T (\bX^T \bX - \lambda \bI)\hatbgamma - (\lambda_2 + \lambda) \hatbgamma^T \hatbgamma + (\lambda_2 + \lambda)  BottomSum_{p-k}  \{\hat{\gamma}_j^2\} \\
    =& \gLridge(\hatbgamma) + (\lambda_2 + \lambda)  BottomSum_{p-k}  \{\hat{\gamma}_j^2\}.
\end{align*}
This completes the proof for Theorem~\ref{theorem:lowerBound_h(gamma)_reparametrization_lambda}.

\end{proof}

\subsection{Proof of Theorem~\ref{theorem:ADMM_proximal_operator_evaluation}}
\label{appendix:ADMM_proximal_operator_evaluation}
\begin{namedtheorem}[\ref{theorem:ADMM_proximal_operator_evaluation}]
    Let $F(\bgamma) = \bgamma^T \bQlambda \bgamma$ and $G(\bp) = \frac{1}{\lambda_2 + \lambda} \text{SumTop}_k (\{p_j^2\})$.
    Then the solution for the problem $\bgamma^{t+1} = \argmin_{\bgamma} F(\bgamma) + \frac{\rho}{2} \Vert{\bQlambda \bgamma + \bp^t - \bX^T \by + \bq^t}_2^2$  is
    \begin{equation}
        \bgamma^{t+1} = (\frac{2}{\rho} \bI + \bQlambda)^{-1} (\bX^T \by - \bp^t - \bq^t). \label{appendix_eq:ADMM_x_update_theorem_analytic_solution}
    \end{equation}
    Furthermore, let $\ba = \bX^T \by - \btheta^{t+1} - \bq^t$ and $\calJ$ be the indices of the top k terms of $\{\vert{a_j}\}$. The solution for the problem $\bp^{t+1} = \argmin_{\bp} G(\bp) + \frac{\rho}{2} \Vert{\btheta^{t+1} + \bp - \bX^T \by + \bq^t}_2^2$ is $ p^{t+1}_j = \text{sign}(a_j) \cdot \hat{v}_j$, 
    \begin{flalign}
        &\text{where } &\hat{\bv} &= \argmin_{\bv} \sum_{j=1}^p w_j (v_j - b_j)^2 \specialQuad \text{s.t.} \specialQuad v_i \geq v_l \; \text{ if } \; \vert{a_i} \geq \vert{a_l} && \label{appendix_problem:isotonic_formulation}\\
        & &w_j &= \begin{cases}
            1 &\text{ if } j \notin \calJ \\
            1+\frac{2}{\rho (\lambda_2 + \lambda)} &\text{ otherwise}
        \end{cases}, \qquad b_j = \frac{\vert{a_j}}{w_j}. && \nonumber
    \end{flalign}
    Problem~\eqref{appendix_problem:isotonic_formulation} is an isotonic regression problem and can be efficiently solved in linear time~\cite{best1990active, busing2022monotone}.
\end{namedtheorem}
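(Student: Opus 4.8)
The plan is to dispatch the two proximal subproblems independently: the $\bgamma$-update by a direct first-order optimality check (the objective being convex), and the $\bp$-update by a chain of reductions --- first a sign reduction, then a co-monotone rearrangement that pins down which coordinates enter the $\text{SumTop}_k$, and finally a coordinatewise completion of squares --- that recasts it as the stated weighted isotonic regression.

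For the $\bgamma$-update, write $\Phi(\bgamma) := \bgamma^T\bQlambda\bgamma + \tfrac{\rho}{2}\|\bQlambda\bgamma + \bp^t - \bX^T\by + \bq^t\|_2^2$. Since $\bQlambda$ is positive semidefinite and the second term is a positive semidefinite quadratic composed with an affine map, $\Phi$ is convex, so it is enough to produce a point with vanishing gradient. Using symmetry of $\bQlambda$ I would compute
\[
\nabla\Phi(\bgamma) = 2\bQlambda\bgamma + \rho\,\bQlambda\!\left(\bQlambda\bgamma + \bp^t - \bX^T\by + \bq^t\right) = \bQlambda\!\left[(2\bI + \rho\bQlambda)\bgamma - \rho(\bX^T\by - \bp^t - \bq^t)\right].
\]
Because $\bQlambda$ is positive semidefinite and $\rho>0$, the matrix $\tfrac{2}{\rho}\bI + \bQlambda$ is positive definite, hence invertible; substituting $\bgamma^{t+1}$ from \eqref{appendix_eq:ADMM_x_update_theorem_analytic_solution} makes the bracketed vector exactly $\rho(\bX^T\by - \bp^t - \bq^t) - \rho(\bX^T\by - \bp^t - \bq^t) = \mathbf{0}$, so $\nabla\Phi(\bgamma^{t+1}) = \mathbf{0}$ and $\bgamma^{t+1}$ is a global minimizer. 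Note this argument does not require $\bQlambda$ to be nonsingular, which is important when $\lambda = \lambda_{\min}(\bX^T\bX)$.

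For the $\bp$-update, set $\ba := \bX^T\by - \btheta^{t+1} - \bq^t$, so the objective is $J(\bp) = \tfrac{1}{\lambda_2+\lambda}\text{SumTop}_k(\{p_j^2\}) + \tfrac{\rho}{2}\sum_{j=1}^p (p_j - a_j)^2$. First, replacing each $p_j$ by $\text{sign}(a_j)|p_j|$ leaves every $p_j^2$ (hence $\text{SumTop}_k$) unchanged while, by the reverse triangle inequality, not increasing $(p_j - a_j)^2$; so there is an optimum of the form $p_j = \text{sign}(a_j) v_j$ with $v_j \ge 0$, and $J$ becomes $\tfrac{1}{\lambda_2+\lambda}\text{SumTop}_k(\{v_j^2\}) + \tfrac{\rho}{2}\sum_j (v_j - |a_j|)^2$. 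Second, an exchange argument: if $|a_i| \ge |a_l|$ but $v_i < v_l$ at an optimum, swapping the values at positions $i$ and $l$ leaves the permutation-symmetric $\text{SumTop}_k$ term unchanged and changes the quadratic term by $2(|a_i| - |a_l|)(v_i - v_l) \le 0$, so we may assume $v_i \ge v_l$ whenever $|a_i| \ge |a_l|$; under this co-monotonicity the $k$ largest $v_j^2$ sit exactly on $\calJ$ (boundary ties in $|a_j|$ force the corresponding $v_j$ equal, so the choice of $\calJ$ is immaterial), whence $\text{SumTop}_k(\{v_j^2\}) = \sum_{j\in\calJ} v_j^2$. Third, multiply $J$ by $\tfrac{2}{\rho}$ and complete the square coordinatewise: for $j \in \calJ$, $\tfrac{2}{\rho(\lambda_2+\lambda)}v_j^2 + (v_j - |a_j|)^2 = w_j(v_j - b_j)^2 + \mathrm{const}$ with $w_j = 1 + \tfrac{2}{\rho(\lambda_2+\lambda)}$ and $b_j = |a_j|/w_j$, while for $j \notin \calJ$, $(v_j - |a_j|)^2 = w_j(v_j - b_j)^2$ with $w_j = 1$ and $b_j = |a_j|$. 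Hence, up to an additive constant, minimizing $J$ is equivalent to minimizing $\sum_{j=1}^p w_j(v_j - b_j)^2$ subject to $v_i \ge v_l$ whenever $|a_i| \ge |a_l|$, i.e.\ Problem~\eqref{appendix_problem:isotonic_formulation}; the explicit nonnegativity $v_j \ge 0$ is inactive, since isotonic regression against nonnegative targets is nonnegative (clipping a negative component to $0$ stays feasible and strictly decreases the objective). Undoing the sign substitution gives $p_j^{t+1} = \text{sign}(a_j)\hat v_j$, and linear-time solvability is by~\cite{best1990active, busing2022monotone}.

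I expect the exchange/co-monotonicity step to be the only subtle point: one must check that the swap genuinely leaves $\text{SumTop}_k$ invariant (true, since that term depends only on the multiset of values, not their positions) and handle ties in $|a_j|$ at the boundary of $\calJ$; the sign reduction, the gradient computation, and the completion of squares are routine.
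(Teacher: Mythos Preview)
Your proposal is correct and follows essentially the same reduction chain as the paper's proof: first-order optimality for the $\bgamma$-update, then sign reduction, co-monotonicity, identification of $\text{SumTop}_k$ with $\sum_{j\in\calJ}v_j^2$, and completion of squares for the $\bp$-update. The only notable difference is expository: where the paper invokes Proposition~3.1(b)--(c) of~\cite{eriksson2015k} for the sign-preserving and order-preserving properties of the optimum, you supply the direct arguments (the $p_j\mapsto\text{sign}(a_j)|p_j|$ replacement and the exchange/rearrangement step), which makes your treatment self-contained.
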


\begin{proof}
\textbf{Part I}.
We first show why Equation~\eqref{appendix_eq:ADMM_x_update_theorem_analytic_solution} is the analytic solution to the first optimization problem $\bgamma^{t+1} = \argmin_{\bgamma} F(\bgamma) + \frac{\rho}{2} \Vert{\bQlambda \bgamma + \bp^t - \bX^T \by + \bq^t}_2^2$.

The loss function we want to minimize is (writing out $F(\bgamma) = \bgamma^T \bQlambda \bgamma$ explicitly)
\begin{align*}
    &\bgamma^T \bQlambda \bgamma + \frac{\rho}{2} \Vert{\bQlambda \bgamma + \bp^t - \bX^T \by + \bq^t}_2^2 \\
    =& \bgamma^T \bQlambda \bgamma + \frac{\rho}{2} (\bgamma^T \bQlambda^T \bQlambda \bgamma + 2(\bp^t - \bX^T \by + \bq^t)^T \bQlambda \bgamma + \Vert{\bp^t - \bX^T \by + \bq^t}_2^2). \eqcomment{write out the square term as a sum of three terms}
\end{align*}
This loss function is minimized if we take the gradient with respect to $\bgamma$ and set it to $\mathbf{0}$:
\begin{align*}
    \mathbf{0} &= 2 \bQlambda \bgamma + \frac{\rho}{2} (2 \bQlambda^T \bQlambda \bgamma + 2 \bQlambda^T (\bp^t - \bX^T \by + \bq^t) ) \eqcomment{take the gradient w.r.t. $\bgamma$ and set it to $\mathbf{0}$; ignore the last term because it is a constant} \\
    &= 2 \bQlambda \bgamma + \rho (\bQlambda^T \bQlambda \bgamma + \bQlambda^T (\bp^t - \bX^T \by + \bq^t) ) \eqcomment{distribute $2$ inside parentheses} \\
    &= 2 \bQlambda^T \bgamma + \rho (\bQlambda^T \bQlambda \bgamma + \bQlambda^T (\bp^t - \bX^T \by + \bq^t) ) \eqcomment{$\bQlambda$ is symmetric} \\
    &= \bQlambda^T (2 \bgamma + \rho (\bQlambda \bgamma + (\bp^t - \bX^T \by + \bq^t) )) \eqcomment{take out the common factor $\bQlambda^T$} \\
    &= \bQlambda^T (2 \bI \bgamma + \rho \bQlambda \bgamma + \rho(\bp^t - \bX^T \by + \bq^t) ) \eqcomment{multiply by identity matrix $\bI$} \\
    &= \bQlambda^T ((2 \bI + \rho \bQlambda) \bgamma + \rho(\bp^t - \bX^T \by + \bq^t) ) \eqcomment{aggregate the coefficient for $\bgamma$} \\
    &= \rho \bQlambda^T ((\frac{2}{\rho} \bI + \bQlambda) \bgamma + (\bp^t - \bX^T \by + \bq^t) ). \eqcomment{take out the common factor $\rho$}
\end{align*}
A sufficient condition for the gradient to be $\mathbf{0}$ is to have
\begin{align*}
    (\frac{2}{\rho} \bI + \bQlambda) \bgamma + (\bp^t - \bX^T \by + \bq^t) = \mathbf{0}.
\end{align*}
If we solve the above equation, we obtain Equation~\eqref{appendix_eq:ADMM_x_update_theorem_analytic_solution}:
\begin{align*}
     \bgamma = (\frac{2}{\rho} \bI + \bQlambda)^{-1} (\bX^T \by - \bp^t - \bq^t).
\end{align*}

\textbf{Part II}. Next, we show why $p_j^{t+1} = sign(a_j) \cdot \hat{v}_j$ together with Equation~\eqref{appendix_problem:isotonic_formulation} gives us the solution to the second optimization problem $\bp^{t+1} = \argmin_{\bp} G(\bp) + \frac{\rho}{2} \Vert{\btheta^{t+1} + \bp - \bX^T \by + \bq^t}_2^2$.

If we write out $G(\bp)=\frac{1}{\lambda_2 + \lambda} \text{SumTop}_k (\{p_j^2\})$ explicitly and use $\ba = \bX^T \by - \btheta^{t+1} - \bq^t$ as a shorthand to represent the constant, the objective function can be rewritten as:
\begin{align*}
    \bp^{t+1} & = \argmin_{\bp} \frac{1}{\lambda_2 + \lambda} \text{SumTop}_k (\{p_j^2\})+ \frac{\rho}{2} \Vert{\bp - \ba}_2^2 \\
    & = \argmin_{\bp} \frac{2}{\rho(\lambda_2 + \lambda)} \text{SumTop}_k (\{p_j^2\})+ \Vert{\bp - \ba}_2^2 \eqcomment{multiplying by $\frac{2}{\rho}$ doesn't change the optimal solution}.
\end{align*}
According to Proposition 3.1 (b) in~\cite{eriksson2015k}, the optimal solution $\bp^{t+1}$ has the property that $\text{sign}(p^{t+1}_j) = \text{sign}(a_j)$.

Inspired by this sign-preserving property, let us make a change of variable by writing $p_j = \text{sign}(a_j) \cdot v_j$.
Due to this change of variable, our optimal solution $\bp^{t+1}$ can be obtained by solving the following optimization problem:
\begin{align}
    p^{t+1}_j &= \text{sign}(a_j) \cdot \hat{v_j} \nonumber \\
    \text{ where } \hat{\bv} &= \argmin_{\bv} \frac{2}{\rho (\lambda_2 + \lambda)} \text{SumTop}_k (\{v_j^2\}) + \sum_{j=1}^p (v_j - \vert{a_j})^2. \label{appendix_eq:ADMM_y_update_theorem_change_of_variable}
\end{align}
Note that in Problem~\eqref{appendix_eq:ADMM_y_update_theorem_change_of_variable}, if we add the extra constraints $v_j \geq 0$, this will not change the optimal solution.
We can show this by the argument of contradiction.
Suppose there exists an optimal solution $\hat{\bv}$ and an index $j$ with $\hat{v}_j < 0$.
Then, if we flip the sign of $\hat{v}_j$, then the objective loss in Problem~\eqref{appendix_eq:ADMM_y_update_theorem_change_of_variable} is decreased.
This contradicts with the assumption that $\hat{\bv}$ is the optimal solution.
Therefore, we are safe to add the extra constraints $v_j \geq 0$ into Problem~\eqref{appendix_eq:ADMM_y_update_theorem_change_of_variable} without cutting off the optimal solution.

Thus, our optimal solution $\bp^{t+1}$ becomes:
\begin{align}
    p^{t+1}_j &= \text{sign}(a_j) \cdot \hat{v_j} \nonumber \\
    \text{ where } \hat{\bv} &= \argmin_{\bv} \frac{2}{\rho (\lambda_2 + \lambda)} \text{SumTop}_k (\{v_j^2\}) + \sum_{j=1}^p (v_j - \vert{a_j})^2 \specialQuad \text{s.t.} \specialQuad v_j \geq 0   \text{ for } \forall j. \label{appendix_eq:ADMM_y_update_theorem_change_of_variable2} 
\end{align}

Next, according to Proposition 3.1 (c) in~\cite{eriksson2015k}, we have $\hat{v}_i \geq \hat{v}_l$ if $\vert{a_i} \geq \vert{a_l}$ in Problem~\eqref{appendix_eq:ADMM_y_update_theorem_change_of_variable2}.
Thus, we add these extra constraints into Problem~\eqref{appendix_eq:ADMM_y_update_theorem_change_of_variable2} without changing the optimal solution and $\bp^{t+1}$ becomes:
\begin{align}
    p^{t+1}_j &= \text{sign}(a_j) \cdot \hat{v_j} \nonumber \\
    \text{ where } \hat{\bv} &= \argmin_{\bv} \frac{2}{\rho (\lambda_2 + \lambda)} \text{SumTop}_k (\{v_j^2\}) + \sum_{j=1}^p (v_j - \vert{a_j})^2 \label{appendix_eq:ADMM_y_update_theorem_change_of_variable3} \specialQuad \\
    \text{ s.t. } v_j & \geq 0 \text{ for } \forall j, \specialQuad \text{and} \specialQuad v_i \geq v_j \specialQuad \text{if} \specialQuad \vert{a_i} \geq \vert{a_j}. \text{ for } \forall i, j.  \nonumber
\end{align}
Note that in Problem~\eqref{appendix_eq:ADMM_y_update_theorem_change_of_variable3}, because of the extra constraints $\vert{a_i} \geq \vert{a_j} \text{ for } \forall i, j $, we can rewrite $\text{SumTop}_k (\{v_j^2\}) = \sum_{j \in \calJ} v_j^2$ where $\calJ$ is the set of indices of the top $k$ terms of $\{\vert{a_j}\}$'s.
Then the optimal solution $\bp^{t+1}$ becomes:
\begin{align}
    p^{t+1}_j &= \text{sign}(a_j) \cdot \hat{v_j} \nonumber \\
    \text{ where } \hat{\bv} &= \argmin_{\bv} \frac{2}{\rho (\lambda_2 + \lambda)} \sum_{j \in \calJ} v_j^2 + \sum_{j=1}^p (v_j - \vert{a_j})^2 \label{appendix_eq:ADMM_y_update_theorem_change_of_variable4} \specialQuad \\
    \text{ s.t. } v_j & \geq 0 \text{ for } \forall j, \specialQuad \text{and} \specialQuad v_i \geq v_j \specialQuad \text{if} \specialQuad \vert{a_i} \geq \vert{a_j}. \text{ for } \forall i, j.  \nonumber
\end{align}

Next, we drop the constraints $v_j \geq 0 \text{ for } \forall j$ as this does not change the optimal solution, using the same argument of contradiction we have shown before.
Therefore, our optimal solution $\bp^{t+1}$ can be obtained in the following way:
\begin{align}
    p^{t+1}_j &= \text{sign}(a_j) \cdot \hat{v_j} \nonumber \\
    \text{ where } \hat{\bv} &= \argmin_{\bv} \frac{2}{\rho (\lambda_2 + \lambda)} \sum_{j \in \calJ} v_j^2 + \sum_{j=1}^p (v_j - \vert{a_j})^2 \label{appendix_eq:ADMM_y_update_theorem_change_of_variable5} \specialQuad \\
    \text{ s.t. } v_i & \geq v_j \specialQuad \text{if} \specialQuad \vert{a_i} \geq \vert{a_j}. \text{ for } \forall i, j.  \nonumber
\end{align}

We can obtain the formulation in Problem~\eqref{appendix_problem:isotonic_formulation} by performing some linear algebra manipulations as follows:
\begin{align*}
    &\argmin_{\bv} \frac{2}{\rho (\lambda_2 + \lambda)} \sum_{j \in \calJ} v_j^2 + \sum_{j=1}^p (v_j - \vert{a_j})^2 \\
    =& \argmin_{\bv} \sum_{j \in \calJ} \left(\frac{2}{\rho (\lambda_2 + \lambda)}  v_j^2 +  (v_j - \vert{a_j})^2 \right) + \sum_{j \notin \calJ} (v_j - \vert{a_j})^2 \eqcomment{divide the summation into two groups based on the set $\calJ$} \\
    =& \argmin_{\bv} \sum_{j \in \calJ} \left(\left(\frac{2}{\rho (\lambda_2 + \lambda)} + 1\right) v_j^2 - 2 \vert{a_j} v_j + \vert{a_j}^2 \right) + \sum_{j \notin \calJ} (v_j - \vert{a_j})^2 \eqcomment{combine the coefficients for $v_j^2$} \\
    =& \argmin_{\bv} \sum_{j \in \calJ} \left(\left(\frac{2}{\rho (\lambda_2 + \lambda)} + 1\right) \left(v_j -  \frac{\vert{a_j}}{\frac{2}{\rho (\lambda_2 + \lambda)} + 1}\right)^2 - \frac{\vert{a_j}^2}{\frac{2}{\rho (\lambda_2 + \lambda)} + 1}+ \vert{a_j}^2 \right) + \sum_{j \notin \calJ} (v_j - \vert{a_j})^2 \eqcomment{complete the square} \\
    =& \argmin_{\bv} \sum_{j \in \calJ} \left(\left(\frac{2}{\rho (\lambda_2 + \lambda)} + 1\right) \left(v_j -  \frac{\vert{a_j}}{\frac{2}{\rho (\lambda_2 + \lambda)} + 1}\right)^2\right) + \sum_{j \notin \calJ} (v_j - \vert{a_j})^2 \eqcomment{get rid of the constant terms which do not affect the optimal solution} \\
    =& \argmin_{\bv} \sum_{j \in \calJ} \left(w_j \left(v_j -  \frac{\vert{a_j}}{w_j}\right)^2\right) + \sum_{j \notin \calJ} w_j \left(v_j - \frac{\vert{a_j}}{w_j}\right)^2 \eqcomment{$w_j = \frac{2}{\rho (\lambda_2 + \lambda)} + 1$ if $j \in \calJ$ and $w_j = 1$ if $j \notin \calJ$} \\
    =& \argmin_{\bv} \sum_{j = 1}^p w_j (v_j -  b_j)^2. \eqcomment{let $b_j = \frac{\vert{a_j}}{w_j}$ and combine the two summations into a single summation}
\end{align*}
Using this new property, we can obtain the final result stated in the theorem for our optimal solution $\bp^{t+1}$ to the second optimization problem:
\begin{align}
    p^{t+1}_j &= \text{sign}(a_j) \cdot \hat{v_j} \nonumber \\
    \text{ where } \hat{\bv} &= \argmin_{\bv} \sum_{j=1}^p w_j (v_j - b_j)^2 \specialQuad \text{s.t.} \specialQuad v_i \geq v_j \specialQuad \text{if} \specialQuad \vert{a_i} \geq \vert{a_j} \text{ for } \forall i, j \label{appendix_eq:ADMM_y_update_theorem_change_of_variable6} \\
    w_j &= \begin{cases}
            1 &\text{ if } j \notin \calJ \\
            \frac{2}{\rho (\lambda_2 + \lambda)}+1 &\text{ otherwise}
        \end{cases}, \qquad b_j = \frac{\vert{a_j}}{w_j}. \nonumber
\end{align}

Therefore, the task of evaluating the proximal operator is essentially reduced to solving an isotonic regression problem.
We would like to acknowledge that the connection between $SumTop_k(\cdot)$ and isotonic regression has also been recently discovered by~\cite{sander2023fast}.
However, the context and applications are totally different.
While~\cite{sander2023fast} discovers this relationship in the context of training neural networks, we find this connection in the context of ADMM and perspective relaxation of k-sparse ridge regression.
Mathematically, our objective function is also different as our objective function contains a linear term of $\bp$ in the proximal operator evaluation.

\end{proof}

\subsection{Proof of Theorem~\ref{theorem:upperBound_beamsearch}}
\label{appendix:proof_upperBound_beamsearch}
\begin{namedtheorem}[\ref{theorem:upperBound_beamsearch}]
Let us define a $k$-sparse vector pair domain to be $\Omega_k := \{(\bx, \by) \in \bbR^p \times \bbR^p: \Vert{\bx}_0 \leq k, \Vert{\by}_0 \leq k, \Vert{\bx - \by}_0 \leq k\}$.
Any $M_1$ satisfying $f(\by) \leq f(\bx) + \nabla f(\bx)^T (\by - \bx) + \frac{M_1}{2} \Vert{\by - \bx}^2_2$ for all $(\bx, \by) \in \Omega_1$ is called a restricted smooth parameter with support size 1, and any $m_{2k}$ satisfying  $f(\by) \geq f(\bx) + \nabla f(\bx)^T (\by - \bx) + \frac{m_{2k}}{2} \Vert{\by - \bx}^2_2$ for all $(\bx, \by) \in \Omega_{2k}$ is called a restricted strongly convex parameter with support size $2k$.
If $\hat{\bbeta}$ is our heuristic solution by the beam-search method, and $\bbeta^{*}$ is the optimal solution, then:
\begin{equation}
\label{appendix_eq:upperBound_beamsearch_main}
    \gL_{\text{ridge}}(\bbeta^{*})  \leq \gL_{\text{ridge}}(\hat{\bbeta}) \leq (1-e^{-m_{2k}/M_1})  \gL_{\text{ridge}}(\bbeta^{*}). 
\end{equation}

\end{namedtheorem}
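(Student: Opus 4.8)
The approach is the classical greedy / weak submodularity argument for forward selection (as in \cite{elenberg2018restricted}), adapted in two places. First, one step of the beam search performs an exact one-dimensional line search along a coordinate $\be_j$ and then re-optimizes the coefficients on the enlarged support (which only decreases the loss further), so the only smoothness ever invoked is the curvature of $\gLridge$ along a single coordinate direction --- exactly what the support-size-$1$ restricted smoothness constant $M_1$ controls --- which is why $M_1$ replaces the $M_{2k}$ of \cite{elenberg2018restricted}. Second, although the beam does not literally track the plain greedy path, the recursion still closes because extending the current best-in-beam support by one more coordinate is always among the candidates generated at the next stage. As bookkeeping, $\gLridge(\mathbf{0})=0$ and $\mathbf{0}$ is feasible, so $\gLridge(\bbeta^*)\le 0$ (hence the factor $1-e^{-m_{2k}/M_1}\in[0,1)$ moves a nonpositive quantity toward $0$, consistent with the claim), and the left inequality $\gLridge(\bbeta^*)\le\gLridge(\hat{\bbeta})$ is immediate from optimality of $\bbeta^*$. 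Throughout, for a support $S$ write $\bbeta_S:=\argmin_{\mathrm{supp}(\bbeta)\subseteq S}\gLridge(\bbeta)$, and set $T:=\mathrm{supp}(\bbeta^*)$, so $\gLridge(\bbeta_T)=\gLridge(\bbeta^*)$.

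The heart of the proof is a one-step progress lemma: for any $S$ with $|S|\le k-1$,
\[
    \min_{j}\,\min_{\alpha}\ \gLridge(\bbeta_S+\alpha\be_j)\ \le\ \gLridge(\bbeta_S)-\frac{m_{2k}}{M_1 k}\bigl(\gLridge(\bbeta_S)-\gLridge(\bbeta^*)\bigr).
\]
I would prove this in three steps. (i) Since $\gLridge$ is quadratic, its curvature along $\be_j$ equals $\Vert{\bX_{:j}}_2^2+\lambda_2\le M_1/2$ (this inequality is precisely what restricted smoothness with support size $1$ certifies, via the pair $(\mathbf{0},\alpha\be_j)\in\Omega_1$), so minimizing $\alpha\,\nabla_j\gLridge(\bbeta_S)+\tfrac{M_1}{2}\alpha^2$ shows the line search along $\be_j$ decreases the loss by at least $(\nabla_j\gLridge(\bbeta_S))^2/(2M_1)$. (ii) Because $\bbeta_S$ is optimal on $S$, $\nabla_j\gLridge(\bbeta_S)=0$ for $j\in S$; hence $\max_j(\nabla_j\gLridge(\bbeta_S))^2\ge\frac{1}{|T\setminus S|}\sum_{j\in T\setminus S}(\nabla_j\gLridge(\bbeta_S))^2\ge\frac{1}{k}\Vert{\nabla_{S\cup T}\gLridge(\bbeta_S)}_2^2$. (iii) Applying restricted strong convexity on $\Omega_{2k}$ to $\bbeta_S$ and every vector supported on $S\cup T$ (legitimate since $|S\cup T|\le 2k$) and minimizing the resulting quadratic lower bound over that subspace gives $\gLridge(\bbeta^*)\ge\gLridge(\bbeta_{S\cup T})\ge\gLridge(\bbeta_S)-\tfrac{1}{2m_{2k}}\Vert{\nabla_{S\cup T}\gLridge(\bbeta_S)}_2^2$, i.e.\ $\Vert{\nabla_{S\cup T}\gLridge(\bbeta_S)}_2^2\ge 2m_{2k}\bigl(\gLridge(\bbeta_S)-\gLridge(\bbeta^*)\bigr)$. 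Chaining (i)--(iii) yields the lemma.

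To finish, let $L_t$ denote the loss of the best solution in the beam after $t$ support-expansion steps, with $L_0=\gLridge(\mathbf{0})=0$. At step $t\le k$, the beam at step $t-1$ contains some support $S$ of size $t-1\le k-1$ with $\gLridge(\bbeta_S)=L_{t-1}$; among the candidates at step $t$ is the re-optimized extension of $S$ by its best new coordinate, whose loss is at most $\min_j\min_\alpha\gLridge(\bbeta_S+\alpha\be_j)$, and since the beam keeps at least the single best candidate, $L_t$ does not exceed this value. The progress lemma then gives $L_t-\gLridge(\bbeta^*)\le\bigl(1-\tfrac{m_{2k}}{M_1 k}\bigr)\bigl(L_{t-1}-\gLridge(\bbeta^*)\bigr)$. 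Unrolling $k$ times and using $1-x\le e^{-x}$ (with $x=\tfrac{m_{2k}}{M_1 k}\le\tfrac1k\le 1$, since $m_{2k}\le M_1$),
\[
    \gLridge(\hat{\bbeta})-\gLridge(\bbeta^*)\ =\ L_k-\gLridge(\bbeta^*)\ \le\ \Bigl(1-\tfrac{m_{2k}}{M_1 k}\Bigr)^{k}\bigl(-\gLridge(\bbeta^*)\bigr)\ \le\ e^{-m_{2k}/M_1}\bigl(-\gLridge(\bbeta^*)\bigr),
\]
which rearranges to $\gLridge(\hat{\bbeta})\le(1-e^{-m_{2k}/M_1})\gLridge(\bbeta^*)$, as claimed.

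I expect the main obstacle to be step (i) of the progress lemma: justifying that it is the restricted smoothness of support size $1$, not $2k$, that governs a greedy step. This hinges on the beam search doing an exact coordinate line search (plus a non-harmful re-optimization of the support), and it is precisely what buys the sharper exponent over \cite{elenberg2018restricted}. A secondary point of care is sign bookkeeping: because $\gLridge$ has had the constant $\Vert{\by}_2^2$ removed it is nonpositive at the optimum, so the additive greedy bound must be turned into the stated multiplicative form without a sign slip. The beam-versus-plain-greedy subtlety is neutralized entirely by running the recursion on $L_t=$ best-in-beam rather than trying to follow a greedy path through the beam.
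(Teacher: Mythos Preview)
Your proposal is correct and follows essentially the same route as the paper: both establish the per-step progress inequality $\gLridge(\bbeta^t)-\gLridge(\bbeta^{t+1})\ge\frac{m_{2k}}{kM_1}\bigl(\gLridge(\bbeta^t)-\gLridge(\bbeta^*)\bigr)$ from the same three ingredients (exact coordinate line search whose curvature is bounded by $M_1/2$, averaging the squared partial derivatives over $T\setminus S$, and restricted strong convexity on $\Omega_{2k}$), then unroll $k$ times and apply $(1-x)^k\le e^{-kx}$. If anything, your explicit tracking of $L_t=$ best-in-beam makes the beam-search compatibility step slightly more carefully stated than the paper's one-line justification.
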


\begin{proof}
The left-hand-side inequality $\gLridge(\bbeta^*) \leq \gLridge(\hat{\bbeta})$ is obvious because the optimal loss should always be less than or equal to the loss found by our heuristic method.
Therefore, we focus on the right-hand-side inquality $\gLridge(\hat{\bbeta}) \leq (1 - e^{-m_{2k} / M_1}) \gLridge(\bbeta^*)$.

We first give an overview of our proof strategy, which is similar to the proof in~\cite{elenberg2018restricted}, but our decay factor $m_{2k}/M_1$ in the exponent is much larger than theirs, which is $m_{2k}/M_{2k}$ (we can obtain the definition of $M_{2k}$ by extending the definition of $M_1$ to the support size $2k$).
Let $\gL_{\text{ridge}}(\bbeta^t)$ and $\gL_{\text{ridge}}(\bbeta^{t+1})$ be the losses at the $t$-th and $(t+1)$-th iterations of our beam search algorithm.
We first derive a lower bound formula on the decrease of the loss function $\gL_{\text{ridge}}(\cdot)$ between two successive iterations, \textit{i.e.}, $\gLridge(\bbeta^t) - \gLridge(\bbeta^{t+1})$ .
We will then show that this quantity of decrease can be expressed in terms of the parameters $M_1$ and $m_{2k}$ defined in the statement of the theorem.
Finally, we use the relationship between the decrease of the loss and $M_1$ and $m_{2k}$ to derive the right-hand inequality in Equation~\eqref{appendix_eq:upperBound_beamsearch_main}.

\textit{Step 1: Decrease of Loss between Two Successive Iterations}:

For an arbitrary coordinate $j$ that is not part of the support of $\bbeta^t$, or more explicitly $\beta^t_j = 0$, if we change the coefficient on the $j$-th coordinate $\bbeta^t$ from $0$ to $\alpha$, the loss becomes:
\allowdisplaybreaks
\begin{align*}
    &\gLridge(\bbeta^t + \alpha \be_j) \eqcomment{ $\be_j$ denotes a vector with $j$-th entry to be 1 and 0 otherwise} \\
    =& (\by - \bX (\bbeta^t + \alpha \be_j))^T(\by - \bX (\bbeta^t + \alpha \be_j)) + \lambda_2 (\bbeta^t + \alpha \be_j)^T(\bbeta^t + \alpha \be_j) \eqcomment{ apply the definition of ridge regression loss} \\
    =& (\by - \bX \bbeta^t - \alpha \bX \be_j)^T(\by - \bX \bbeta^t - \alpha \bX \be_j) + \lambda_2 (\bbeta^t + \alpha \be_j)^T(\bbeta^t + \alpha \be_j) \eqcomment{ distribute into the parentheses} \\
    =& (\by - \bX \bbeta^t - \alpha \bX_{:j} )^T(\by - \bX \bbeta^t - \alpha \bX_{:j}) + \lambda_2 (\bbeta^t + \alpha \be_j)^T(\bbeta^t + \alpha \be_j) \eqcomment{ $X_{:j}$ denotes the $j$-th column of $\bX$} \\
    =& (\by - \bX \bbeta^t )^T(\by - \bX \bbeta^t ) - 2 \alpha (\by - \bX \bbeta^t)^T \bX_{:j} + \alpha^2 \bX_{:j}^T \bX_{:j} + \lambda_2 \left[(\bbeta^t)^T \bbeta^t + 2\alpha (\bbeta^t)^T \be_j + \alpha^2 \right] \eqcomment{ expand each quadratic term into three summation terms} \\
    =& (\by - \bX \bbeta^t )^T(\by - \bX \bbeta^t ) - 2 \alpha (\by - \bX \bbeta^t)^T \bX_{:j} + \alpha^2 \bX_{:j}^T \bX_{:j} + \lambda_2 \left[(\bbeta^t)^T \bbeta^t + 2\alpha \beta^t_j + \alpha^2 \right] \eqcomment{ $(\bbeta^t)^T \be_j = \beta_j^t$; $\be_j$ takes the $j$-th component of $\bbeta^t$} \\
    =& (\by - \bX \bbeta^t )^T(\by - \bX \bbeta^t ) - 2 \alpha (\by - \bX \bbeta^t)^T \bX_{:j} + \alpha^2 \bX_{:j}^T \bX_{:j} + \lambda_2 \left[(\bbeta^t)^T \bbeta^t + \alpha^2 \right] \eqcomment{ $\beta^t_j = 0$ because we are optimizing on a coordinate with coefficient equal to 0}\\
    =& (\by - \bX \bbeta^t )^T(\by - \bX \bbeta^t ) + \lambda_2 (\bbeta^t)^T \bbeta^t + (\Vert{\bX_{:j}}_2^2 + \lambda_2) \alpha^2 - 2 ((\by - \bX \bbeta^t)^T \bX_{:j})\alpha  \eqcomment{collect relevant coefficients for $\alpha^2$ and $\alpha$} \\
    =& \gLridge(\bbeta^t) + (\Vert{\bX_{:j}}_2^2 + \lambda_2) \alpha^2 - 2 ((\by - \bX \bbeta^t)^T \bX_{:j} )\alpha \eqcomment{simplify the first two terms using $\gLridge(\cdot)$} \\
    =& \gLridge(\bbeta^t) + (\Vert{\bX_{:j}}_2^2 + \lambda_2) \alpha^2 + \nabla_j \gLridge(\bbeta^t) \alpha  \eqcomment{\parbox{12cm}{the $j$-th partial derivative is $\nabla_j \gLridge(\bbeta^t) = 2 (\bX \bbeta^t - \by)^T \bX_{:j}$ according to Lemma~\ref{lemma:gradient_of_ridge_regression} with $\beta_j^t = 0$}} \\
    =& \gLridge(\bbeta^t) + (\Vert{\bX_{:j}}_2^2 + \lambda_2) \left(\alpha^2 +  \frac{\nabla_j \gLridge(\bbeta^t)}{\Vert{\bX_{:j}}_2^2 + \lambda_2} \alpha \right) \eqcomment{extract out a coefficient}\\
    =& \gLridge(\bbeta^t) + (\Vert{\bX_{:j}}_2^2 + \lambda_2) \left(\alpha^2 + 2 \frac{1}{2} \frac{\nabla_j \gLridge(\bbeta^t)}{\Vert{\bX_{:j}}_2^2 + \lambda_2} \alpha  + \left(\frac{1}{2} \frac{\nabla_j \gLridge(\bbeta^t)}{\Vert{\bX_{:j}}_2^2 + \lambda_2}\right)^2 - \left(\frac{1}{2} \frac{\nabla_j \gLridge(\bbeta^t)}{\Vert{\bX_{:j}}_2^2 + \lambda_2}\right)^2\right) \eqcomment{add and subtract the same term}\\
    =& \gLridge(\bbeta^t) + (\Vert{\bX_{:j}}_2^2 + \lambda_2) \left( \left(\alpha + \frac{1}{2} \frac{\nabla_j \gLridge(\bbeta^t)}{\Vert{\bX_{:j}}_2^2 + \lambda_2}\right)^2 - \left(\frac{1}{2} \frac{\nabla_j \gLridge(\bbeta^t)}{\Vert{\bX_{:j}}_2^2 + \lambda_2}\right)^2\right) \eqcomment{complete the square} \\
    =& \gLridge(\bbeta^t) - \frac{1}{4}\frac{(\nabla_j \gLridge(\bbeta^t))^2}{\Vert{\bX_{:j}}_2^2 + \lambda_2} + (\Vert{\bX_{:j}}_2^2 + \lambda_2) \left(\alpha + \frac{1}{2} \frac{\nabla_j \gLridge(\bbeta^t)}{\Vert{\bX_{:j}}_2^2 + \lambda_2}\right)^2 \eqcomment{distribute into the parentheses}
\end{align*}
The first two terms are constant, and the third term is a quadratic function of $\alpha$.

The minimum loss we can achieve for $\gLridge(\bbeta^t + \alpha \be_j)$ by optimizing on the $j$-th coefficient is:
\begin{align*}
    \min_{\alpha} \gLridge(\bbeta^t + \alpha \be_j) = \gLridge(\bbeta^t) - \frac{1}{4}\frac{(\nabla_j \gLridge(\bbeta^t))^2}{\Vert{\bX_{:j}}_2^2 + \lambda_2},
\end{align*}
and the maximum decrease of our ridge regression loss by optimizing a single coefficient is then:
\begin{align}
\label{appendix:decrease_of_loss_single_coordinate}
    &\max_{\alpha} (\gLridge(\bbeta^t) - \gLridge(\bbeta^t + \alpha \be_j))  =  \gLridge(\bbeta^t) - \min_{\alpha}\gLridge(\bbeta^t + \alpha \be_j) = \frac{1}{4}\frac{(\nabla_j \gLridge(\bbeta^t))^2}{\Vert{\bX_{:j}}_2^2 + \lambda_2},
\end{align}
where $\bX_{:j}$ is the $j$-th column of the design matrix $\bX$.

Now, for our beam-search method, the decrease of loss between two successive iterations will be greater than the decrease of loss by optimizing on a single coordinate (because we are fine-tuning the coefficients on the newly expanded support \textit{and} picking the best expanded support among all support candidates in the beam search), we have
\begin{align}
    \gLridge(\bbeta^t) - \gLridge(\bbeta^{t+1}) \geq \max_j \max_{\alpha}(\gLridge(\bbeta^t) - \gLridge(\bbeta^t + \alpha \be_j)) = \max_j \frac{1}{4}\frac{(\nabla_j \gLridge(\bbeta^t))^2}{\Vert{\bX_{:j}}_2^2 + \lambda_2}.
\end{align}

For an arbitrary set of coordinates $\calJ'$, we have the following inequality:
\begin{align}
\label{appendix_ineq:max_decrease_greater_than_average_decrease}
    \max_j \frac{1}{4}\frac{(\nabla_j \gLridge(\bbeta^t))^2}{\Vert{\bX_{:j}}_2^2 + \lambda_2} \geq \frac{1}{\vert{\gJ'}}\sum_{j' \in \gJ'} \frac{1}{4} \frac{(\nabla_{j'} \gLridge(\bbeta^t))^2}{\Vert{\bX_{:j'}}_2^2 + \lambda_2}.
\end{align}
Inequality~\ref{appendix_ineq:max_decrease_greater_than_average_decrease} holds because the largest decrease of loss among all possible coordinates is greater than or equal to the average decrease of loss among any arbitrary set of coordinates $\calJ'$.

If we choose $\gJ' = \gS^R := \gS^* \setminus supp(\bbeta^t)$, where $\gS^*$ is the set of coordinates for the optimal solution $\bbeta^*$ and $supp(\cdot)$ denotes the support of the solution, we have
\begin{align*}
    \gLridge(\bbeta^t) - \gLridge(\bbeta^{t+1}) &\geq \max_j \frac{1}{4} \frac{(\nabla_j \gLridge(\bbeta^t))^2}{\Vert{\bX_{:j}}_2^2 + \lambda_2} \\
    &\geq \frac{1}{\vert{\gS^R}}\sum_{j' \in \gS^R} \frac{1}{4}\frac{(\nabla_{j'} \gLridge(\bbeta^t))^2}{\Vert{\bX_{:j'}}_2^2 + \lambda_2} \\
    &= \frac{1}{4\vert{\gS^R}}\sum_{j' \in \gS^R} \frac{(\nabla_{j'} \gLridge(\bbeta^t))^2}{\Vert{\bX_{:j'}}_2^2 + \lambda_2} \eqcomment{pull $\frac{1}{4}$ to the front}\\
    &\geq \frac{1}{4k}\sum_{j' \in \gS^R} \frac{(\nabla_{j'} \gLridge(\bbeta^t))^2}{\Vert{\bX_{:j'}}_2^2 + \lambda_2} \eqcomment{because $\gS^R \subset \gS^*$, $\vert{\gS^R} \leq \vert{\gS^*} = k$} \\
    &\geq \frac{1}{4k} \frac{1}{\max_{j' \in \gS^R} (\Vert{\bX_{:j'}}_2^2 + \lambda_2)} \sum_{j' \in \gS^R} (\nabla_{j'} \gLridge(\bbeta^t))^2 \eqcomment{take the maximum of the denominator and pull it to the front}
\end{align*}

\begin{equation}
\label{appendix_ineq:successive_lower_bound}
    \Rightarrow \gLridge(\bbeta^t) - \gLridge(\bbeta^{t+1}) \geq \frac{1}{4k} \frac{1}{\max_{j' \in \gS^R} (\Vert{\bX_{:j'}}_2^2 + \lambda_2)} \sum_{j' \in \gS^R} (\nabla_{j'} \gLridge(\bbeta^t))^2.
\end{equation}
Inequality~\eqref{appendix_ineq:successive_lower_bound} is the lower bound of decrease of loss between two successive iterations of our beam-search algorithm.

In Step 2 and Step 3 below, we will provide bounds on $\max_{j' \in \gS^R} (\Vert{\bX_{:j'}}_2^2 + \lambda_2)$ and $\sum_{j' \in \gS^R} (\nabla_{j'} \gLridge(\bbeta^t))^2$ in terms of $M_1$ and $m_{2k}$, respectively.

\textit{Step 2: Upper bound on $\max_{j' \in \gS^R} (\Vert{\bX_{:j'}}_2^2 + \lambda_2)$}

We will show below that $\frac{M_1}{2} \geq \max_{j' \in \gS^R} (\Vert{\bX_{:j'}}_2^2 + \lambda_2)$.

According to the definition of the restricted smoothness of $M_1$ (See Background Concepts in Appendix~\ref{appendix:background_concepts}), we have that
\begin{equation}
\label{appendix:M_1_definition}
    \frac{M_1}{2}\Vert{\bw_2 - \bw_1}_2^2 \geq \gLridge(\bw_2) - \gLridge(\bw_1) - \nabla\gLridge(\bw_1)^T (\bw_2 - \bw_1)
\end{equation}
for $\forall \bw_1, \bw_2 \in \bbR^p$ with $\Vert{\bw_1}_0 \leq 1, \Vert{\bw_2}_0 \leq 1, \text{and } \Vert{\bw_2 - \bw_1}_0 \leq 1$.

The right-hand side can be rewritten as
\begin{align*}
    &\gLridge(\bw_2) - \gLridge(\bw_1) - \nabla\gLridge(\bw_1)^T (\bw_2 - \bw_1) \\
    =& (\Vert{\bX \bw_2 - \by}_2^2 + \lambda_2 \Vert{\bw_2}_2^2) - (\Vert{\bX \bw_1 - \by}_2^2 + \lambda_2 \Vert{\bw_1}_2^2) - 2(\bX^T(\bX\bw_1 - \by)+\lambda_2 \bw_1)^T(\bw_2 - \bw_1) \eqcomment{\parbox{10cm}{write out $\gLridge(\cdot)$ explicitly and substitute $\nabla \gLridge(\bw_1) = 2 \left(\bX^T (\bX \bw_1 - \by) + \bw_1 \right)$ according to Lemma~\ref{lemma:gradient_of_ridge_regression}}}\\
    =& (\Vert{(\bX \bw_2 - \bX \bw_1) + (\bX \bw_1- \by)}_2^2 + \lambda_2 \Vert{(\bw_2 - \bw_1) + \bw_1}_2^2) \\
    & - (\Vert{\bX \bw_1 - \by}_2^2 + \lambda_2 \Vert{\bw_1}_2^2) - 2(\bX^T(\bX\bw_1 - \by)+\lambda_2 \bw_1)^T(\bw_2 - \bw_1) \eqcomment{We subtract and add identical terms  inside the first parentheses}\\
    =& \left(\Vert{\bX \bw_2 - \bX \bw_1}_2^2  + \Vert{\bX \bw_1 - \by}_2^2 + 2 (\bX \bw_2 - \bX \bw_1)^T(\bX \bw_1 - \by) \right) \\
    & + \lambda_2 \Vert{\bw_2 - \bw_1}_2^2 + \lambda_2 \Vert{\bw_1}_2^2 + 2\lambda_2 (\bw_2-\bw_1)^T\bw_1 \\
    & - (\Vert{\bX \bw_1 - \by}_2^2 + \lambda_2 \Vert{\bw_1}_2^2) - 2(\bX^T(\bX\bw_1 - \by)+\lambda_2 \bw_1)^T(\bw_2 - \bw_1) \eqcomment{expand the terms inside $\Vert{\cdot}$ for the first line into two lines} \\
    =& (\Vert{\bX \bw_2 - \bX \bw_1}_2^2  + \Vert{\bX \bw_1 - \by}_2^2 + 2 (\bX^T (\bX \bw_1 - \by))^T(\bw_2 - \bw_1)) \\
    & + \lambda_2 \Vert{\bw_2 - \bw_1}_2^2 + \lambda_2 \Vert{\bw_1}_2^2 + 2 (\lambda_2\bw_1)^T(\bw_2-\bw_1) \\
    & - (\Vert{\bX \bw_1 - \by}_2^2 + \lambda_2 \Vert{\bw_1}_2^2) - 2(\bX^T(\bX\bw_1 - \by)+\lambda_2 \bw_1)^T(\bw_2 - \bw_1) \eqcomment{rewrite $(\bX \bw_1 - \bX \bw_2)^T(\bX \bw_1 - \by) = (\bX^T (\bX \bw_1 - \by))^T (\bw_2 - \bw_1)$}\\
    = & \Vert{\bX \bw_2 - \bX \bw_1}_2^2 + \lambda_2 \Vert{\bw_2 - \bw_1}_2^2 \eqcomment{cancel out terms}\\
    = & (\bw_2 - \bw_1)^T (\lambda_2 \rvI + \bX^T \bX) (\bw_2 -\bw_1)
\end{align*}
\begin{align*}
    \Rightarrow \gLridge(\bw_2) - \gLridge(\bw_1) - \nabla\gLridge(\bw_1)^T (\bw_2 - \bw_1) = (\bw_2 - \bw_1)^T (\lambda_2 \rvI + \bX^T \bX) (\bw_2 -\bw_1).
\end{align*}
Therefore, together with Inequality~\eqref{appendix:M_1_definition}, we have that
\begin{equation}
    \frac{M_1}{2}\Vert{\bw_2 - \bw_1}_2^2 \geq (\bw_2 - \bw_1)^T (\lambda_2 \rvI + \bX^T \bX) (\bw_2 -\bw_1)
\end{equation}
for any $\bw_2 - \bw_1 \in \bbR^p$ and $\Vert{\bw_2 - \bw_1}_0 \leq 1$.
If we let $\bw_2 - \bw_1 = \be_j$ where $\be_j$ is a vector with 1 on the $j$-th entry and 0 otherwise, we have 
\begin{align*}
    \frac{M_1}{2} \geq \lambda_2 + \Vert{\bX_{:j}}_2^2
\end{align*}
for $\forall j\in [1, ..., p]$.
Thus, we can derive our upper bound at the beginning of Step 2:
\begin{align}
\label{appendix_ineq:restricted_smoothness_bound}
    \frac{M_1}{2} \geq \max_{j' \in \gS^R} (\Vert{\bX_{:j'}}_2^2 + \lambda_2)
\end{align}

\textit{Step 3: Lower bound on $\sum_{j' \in \gS^R} (\nabla_{j'} \gLridge(\bbeta^t))^2$}

We will show that $\sum_{j' \in \gJ'} (\nabla_{j'} \gLridge(\bbeta^t))^2 \geq 2 m_{2k} (\gLridge(\bbeta^t) - \gLridge(\bbeta^{*}))$.

According to the definition of the restricted strong convexity parameter $m_{2k}$,  we have:
\begin{align*}
    \nabla \gLridge(\bbeta^t)^T (\bbeta^* - \bbeta^t) + \frac{m_{2k}}{2}\Vert{\bbeta^* - \bbeta^t}_2^2 \leq \gLridge(\bbeta^*) - \gLridge(\bbeta^t), 
\end{align*}
where $\bbeta^t \in \bbR^p$ is our heuristic solution at the $t$-th iteration, $\bbeta^*$ is the optimal $k$-sparse ridge regression solution, and $\Vert{\bbeta^t}_0 = t \leq k < 2k$, $\Vert{\bbeta^*}_0 = k < 2k$, and $\Vert{\bbeta^* - \bbeta^t}_0 \leq 2k$

By rearranging some terms, we have:
\begin{align*}
    &\gLridge(\bbeta^t) - \gLridge(\bbeta^*) \\
    \leq&   - \nabla \gLridge(\bbeta^t)^T (\bbeta^* - \bbeta^t) - \frac{m_{2k}}{2}\Vert{\bbeta^* - \bbeta^t}_2^2 \eqcomment{always holds because of restricted strong convexity}\\
    =&   - \nabla \gLridge(\bbeta^t)^T (\bbeta^t + \sum_{j=1}^p \alpha_j \be_j - \bbeta^t) - \frac{m_{2k}}{2}\Vert{\bbeta^t + \sum_{j=1}^p \alpha_j \be_j - \bbeta^t}_2^2 \eqcomment{$\be_j$ is a vector with $1$ on the $j$-th entry and 0 otherwise; $\alpha_j$ is a fixed number with $\alpha_j = \beta^*_j - \beta^t_j$}\\
    =&   - \nabla \gLridge(\bbeta^t)^T (\sum_{j=1}^p \alpha_j \be_j ) - \frac{m_{2k}}{2}\Vert{ \sum_{j=1}^p \alpha_j \be_j }_2^2 \eqcomment{$\alpha_j$ is a fixed number with $\alpha_j = \beta^*_j - \beta^t_j $}\\
    =&   - \sum_{j=1}^p \left( \nabla_j \gLridge(\bbeta^t) \alpha_j - \frac{m_{2k}}{2} \alpha_j^2 \right)\eqcomment{Combine into one summation}\\
    =&   - \sum_{j \in \gS^R} \left( \nabla_{j} \gLridge(\bbeta^t) \alpha_{j} - \frac{m_{2k}}{2} \alpha_{j}^2 \right) - \sum_{j \in supp(\bbeta^t)} \left( \nabla_{j} \gLridge(\bbeta^t) \alpha_{j} - \frac{m_{2k}}{2} \alpha_{j}^2 \right) \\
    & - \sum_{j \notin \gS^R \cup supp(\bbeta^t)} \left( \nabla_{j} \gLridge(\bbeta^t) \alpha_{j} - \frac{m_{2k}}{2} \alpha_{j}^2 \right) \eqcomment{\parbox{12cm}{Recall $\gS^R = \gS^* \setminus supp(\bbeta^t)$. We divide the indices $\{1, 2, ..., p\}$ into three groups: $\gS^R$, $supp(\bbeta^t)$, and the rest.}}\\
    =&   - \sum_{j \in \gS^R} \left( \nabla_{j} \gLridge(\bbeta^t) \alpha_{j} - \frac{m_{2k}}{2} \alpha_{j}^2 \right)\eqcomment{$\nabla_{j}\gLridge(\bbeta^t) = 0$ for $j\in supp(\bbeta^t)$, and $\alpha_j = 0$ for $j \notin \gS^R \cup supp(\bbeta^t)$}\\
    =&   - \sum_{j' \in \gJ'} \left( \nabla_{j'} \gLridge(\bbeta^t) \alpha_{j'} - \frac{m_{2k}}{2} \alpha_{j'}^2 \right)\eqcomment{Change the index notation from $j$ to $j'$}\\
    \leq&  \max_{\bnu} - \sum_{j \in \gJ'} \left( \nabla_{j} \gLridge(\bbeta^t) \nu_{j} - \frac{m_{2k}}{2} \nu_{j}^2 \right)\eqcomment{\parbox{12cm}{Allow $\balpha$ to take any values with a new variable $\bnu$. The inequality holds if we take the maximum with respect to $\bnu$}}\\
    =& \frac{1}{2 m_{2k}} \sum_{j' \in \gJ'} (\nabla_{j'} \gLridge(\bbeta^t))^2 \eqcomment{each term can be maximized because it is a quadratic function}
\end{align*}
This gives us the lower bound:
\begin{equation}
\label{appendix_ineq:restricted_convexity_bound}
    \sum_{j' \in \gJ'} (\nabla_{j'} \gLridge(\bbeta^t))^2 \geq 2 m_{2k} (\gLridge(\bbeta^t) - \gLridge(\bbeta^{*}))
\end{equation}

\textit{Step 4: Final Bound for the Loss of Our Heuristic Solution}

If we plug Inequalities \eqref{appendix_ineq:restricted_smoothness_bound} and \eqref{appendix_ineq:restricted_convexity_bound} into the Inequality \eqref{appendix_ineq:successive_lower_bound}, we have:
\begin{align}
\label{appendix_ineq:successive_bound_restrcited_expression}
    \gLridge(\bbeta^t) - \gLridge(\bbeta^{t+1}) & \geq \frac{1}{4k} \frac{1}{\max_{j' \in \gS^R} (\Vert{\bX_{:j'}}_2^2 + \lambda_2)} \sum_{j' \in \gS^R} (\nabla_{j'} \gLridge(\bbeta^t))^2 \nonumber \\
    \Rightarrow \gLridge(\bbeta^t) - \gLridge(\bbeta^{t+1}) & \geq \frac{m_{2k}}{kM_1} (\gLridge(\bbeta^t) - \gLridge(\bbeta^{*}))
\end{align}

Lastly, let's use Inequality \eqref{appendix_ineq:successive_bound_restrcited_expression} to derive the right-hand-side of Inequality \eqref{appendix_eq:upperBound_beamsearch_main} stated in the Theorem.
Inequality \eqref{appendix_ineq:successive_bound_restrcited_expression} can be rewritten as:
\begin{align*}
      (\gLridge(\bbeta^t) - \gLridge(\bbeta^{*}))  - (\gLridge(\bbeta^{t+1}) - \gLridge(\bbeta^{*})) \geq \frac{m_{2k}}{kM_1} (\gLridge(\bbeta^t) - \gLridge(\bbeta^{*})),
\end{align*}
which can be rearranged to the following expression:
\begin{align*}
      (1-\frac{m_{2k}}{k M_1})(\gLridge(\bbeta^t) - \gLridge(\bbeta^{*}))  \geq (\gLridge(\bbeta^{t+1}) - \gLridge(\bbeta^{*}))
\end{align*}
Therefore, by applying the above inequality $k$ times, we have:
\begin{align*}
      (1-\frac{m_{2k}}{k M_1})^k(\gLridge(\mathbf{0}) - \gLridge(\bbeta^{*}))  & \geq (\gLridge(\bbeta^{k}) - \gLridge(\bbeta^{*})) 
\end{align*}
However, $\gLridge(\mathbf{0})=0$.
This gives us
\begin{align*}
    \gLridge(\bbeta^k) &\leq \left(1 - (1-\frac{m_{2k}}{kM_1})^k \right)  \gLridge(\bbeta^*) 
\end{align*}

To get the final inequality, we start from a well-known inequality result $1+x \leq e^x$ for any $x\in \bbR$.

Let $x = -\frac{a}{k} > -1$, then we have $1-\frac{a}{k} \leq e^{-a/k}$.
If we take both sides to the power of $k$, we have $(1-\frac{a}{k})^k \leq e^{-a}$.

If we let $a = \frac{m_2k}{M_1}$ (because $\frac{m_2k}{M_1} < 1$~\cite{elenberg2018restricted} we have $x=-\frac{a}{k} = - \frac{m_{2k}}{M_1 k} > -1$ for any $k \geq 1$), then we have
\begin{align}
    (1 - \frac{m_{2k}}{k M_1})^k \leq e^{- \frac{m_{2k}}{M_1}}
\end{align}
Because $\gLridge(\bbeta^*)$ is a negative number, we finally have
\begin{align*}
    \gLridge(\bbeta^k) &\leq \left(1 - (1-\frac{m_{2k}}{kM_1})^k \right)  \gLridge(\bbeta^*) \\
    & \leq (1-e^{-\frac{m_{2k}}{M_1}})  \gLridge(\bbeta^*).
\end{align*}
\end{proof}

\subsection{Derivation of $g(a, b) = \max_c ac - \frac{c^2}{4} b$ via Fenchel Conjugate}
\label{appendix:derivation_perspective_fenchel_daul}

Recall our function $g: \bbR \times \bbR \rightarrow \bbR \cup \{+\infty\}$ is defined as:
\begin{equation}
\label{appendix_def:perspective_function}
    g(a, b) = 
    \begin{cases}
    \frac{a^2}{b}       & \text{if } b > 0 \\
    0                   & \text{if } a = b = 0 \\
    \infty              & \text{otherwise}
    \end{cases}
\end{equation}

The convex conjugate of $g(\cdot, \cdot)$, denoted as $g^*(\cdot, \cdot)$, is defined as~\cite{bertsekas2009convex}:
\begin{equation}
\label{appendix_eq:convex_conjugate_function_definition}
    g^*(c, d) = \sup_{a, b} \left( a c + b d - g(a, b)\right)
\end{equation}

Since function $g(a, b)$ is convex and lower semi-continuous, the biconjugate of $g(a, b)$ is equal to the original function, \textit{i.e.}, $(g^*)^* = g$~\cite{bertsekas2009convex}.
Writing this explicitly, we have:
\begin{equation}
\label{appendix_eq:biconjugate}
    g(a, b) = \sup_{c, d} \left( a c + b d - g^*(c, d)\right)
\end{equation}

For now, let us first derive an analytic formula for the convex conjugate function $g^*(\cdot, \cdot)$.

According to the definition of $g(a, b)$ in Equation~\eqref{appendix_def:perspective_function}, we have
\begin{align*}
    h(a, b, c, d) &= a c + b d - g(a, b) = 
    \begin{cases}
         a c + b d - \frac{a^2}{b}  &\text{if } b > 0 \\
        0                           &\text{if } a = b = 0 \\
        -\infty                     &\text{otherwise}
    \end{cases}.
\end{align*}

For any fixed $c$ and $d$, the optimality of the first case can be achieved if the first derivatives of $a c + b d - \frac{a^2}{b}$ with respect to $a$ and $b$ are both equal to $0$.
This means that,
\begin{equation*}
    c - 2 \frac{a}{b} = 0 \quad \text{ and } \quad d + \frac{a^2}{b^2} = 0.
\end{equation*}

If $d = -\frac{c^2}{4}$, the optimality condition is achieved if we let $a = \frac{b c}{2}$, and then $a c + b d - \frac{a^2}{b} = \frac{b c}{2} c + b (-\frac{c^2}{4}) - \frac{c^2}{4} b = 0$.

If $d \neq -\frac{c^2}{4}$, the optimality condition can not be achieved. However, if we still let $a = \frac{b c}{2}$, we have $a c + b d - \frac{a^2}{b} = \frac{b c}{2} c + b (d) - \frac{c^2}{4} b = (\frac{c^2}{4} + d) b$.
Then, we have two specific cases to discuss: 1) if $\frac{c^2}{4} + d > 0$, the supremum is $\infty$ if we let $b \rightarrow \infty$;
2) if $\frac{c^2}{4} + d < 0$, the supremum is $0$ if we let $b \rightarrow 0$.

With all these cases discussed, we have that
\begin{align*}
    g^*(c, d)   &= \sup_{a, b} h(a, b, c, d) \\
                &= \begin{cases}
                    0 & \text{ if }  \frac{c^2}{4} + d \leq 0 \\
                    \infty  & \text{ otherwise } 
                \end{cases}
\end{align*}

Now let us revisit Equation \eqref{appendix_eq:biconjugate}, where $g(a, b) = \sup_{c, d} a c + b d - g^*(c, d)$.
Below, we discuss three cases and derive the alternative formulas for $g(a, b)$ under different scenarios: (1) $\frac{c^2}{4} + d = 0$, (2)$\frac{c^2}{4} + d \leq 0$, and (3) $\frac{c^2}{4} + d > 0$.
\begin{enumerate}
    \item $\frac{c^2}{4} + d = 0$:
    \begin{align*}
        g_1(a, b) = \sup_{c, d, \frac{c^2}{4} + d = 0} a c + b d - g^*(c, d) = \sup_{c} a c + b(-\frac{c^2}{4})
    \end{align*}
    \item $\frac{c^2}{4} + d < 0$:
    \begin{align*}
        g_2(a, b) = \sup_{c, d, \frac{c^2}{4} + d < 0} a c + b d - g^*(c, d) = \sup_{c, d} a c + bd
    \end{align*}
    \item $\frac{c^2}{4} + d > 0$:
    \begin{align*}
        g_3(a, b) = \sup_{c, d, \frac{c^2}{4} + d > 0} a c + b d - g^*(c, d) = -\infty
    \end{align*}
\end{enumerate}

After discussing these three cases, we can obtain the final results by taking the maximum of the three optimal values, \textit{i.e.}, $g(a, b) = \max(g_1(a, b), g_2(a, b), g_3(a, b))$.
The third case $\frac{c^2}{4} + d > 0$ is not interesting and can be easily eliminated. The second case $\frac{c^2}{4} + d < 0$ is also not interesting because $a c + b d = a c + b (-\frac{c^2}{4}) + b (\frac{c^2}{4} + d) \leq a c + b (-\frac{c^2}{4}) $ for any $b \geq 0$, which is the domain we are interested in. This gives us $g_2(a, b) \leq g_1(a, b)$, and the equality is only achieved when $b=0$.
Thus, 
\begin{align}
\label{appendix_eq:conjugate_representation_of_perspective_function}
    g(a, b) &= \max(g_1(a, b), g_2(a, b), g_3(a, b)) \nonumber \\
    &= g_1(a, b) \\
    &= \sup_c a c + b(-\frac{c^2}{4}) \nonumber \\ 
    &= \max_c a c - b \frac{c^2}{4}, 
\end{align}
where in the last step we convert $\sup(\cdot)$ to $\max(\cdot)$ because the function is quadratic and the optimum can be achieved.

\newpage
\section{Existing MIP Formulations for $k$-sparse Ridge Regression}
\label{appendix:existing_mip_formulation}

We want to solve the following $k$-sparse ridge regression problem:
\begin{align}
    \label{appendix_formulation:original}
    \min_{\bbeta, \bz} & \quad \bbeta^T \bX^T \bX \bbeta - 2 \by^T \bX \bbeta + \lambda_2  \sum_{j=1}^p \beta_j^2\\
    \text{subject to} & \quad (1-z_j) \beta_j = 0, \quad \sum_{j=1}^p z_j \leq k, \quad z_j \in \{0, 1\} \nonumber
\end{align}
However, we can not handle the constraint $(1-z_j) \beta_j = 0$ directly in a mixed-integer programming (MIP) solver.
Currently in the research community, there are four formulations we can write before sending the problem to a MIP solver.
They are (1) SOS1, (2) big-M, (3) perspective, and (4) optimal perspective.
We write each formulation explicitly below:

\begin{enumerate}
    \item SOS1 formulation:
    \begin{align}
    \label{appendix_formulation:SOS1}
        & \min_{\bbeta, \bz} \quad \bbeta^T \bX^T \bX \bbeta - 2 \by^T \bX \bbeta+ \lambda_2  \sum_{j=1}^p \beta_j^2\\
        \text{subject to} \quad & (1-z_j, \beta_j) \in \text{SOS}1, \quad \sum_{j=1}^p z_j \leq k, \quad z_j \in \{0, 1\} \nonumber,
    \end{align}
    where $(a, b) \in \text{SOS}1$ means that at most one variable (either $a$ or $b$ but not both) can be nonzero.
    \item big-M formulation:
    \begin{align}
    \label{appendix_formulation:bigM}
        \min_{\bbeta, \bz} &  \quad \bbeta^T \bX^T \bX \bbeta - 2 \by^T \bX \bbeta+ \lambda_2  \sum_{j=1}^p \beta_j^2\\
        \text{subject to} & \quad -M z_j \leq \beta_j \leq M z_j, \quad \sum_{j=1}^p z_j \leq k, \quad z_j \in \{0, 1\} \nonumber
    \end{align}
    where $M > 0$.
    For the big-M formulation, as we mentioned in the main paper, it is very challenging to choose the right value for $M$.
    If $M$ is too big, the lower bound given by the relaxed problem is too loose;
    if $M$ is too small, we risk cutting off the optimal solution.
    \item perspective formulation:
    \begin{align}
    \label{appendix_formulation:perspective}
        \min_{\bbeta, \bz, \bs} & \quad \bbeta^T \bX^T \bX \bbeta - 2 \by^T \bX \bbeta+ \lambda_2  \sum_{j=1}^p s_j\\
        \text{subject to} & \quad \beta_j^2 \leq s_j z_j, \quad \sum_{j=1}^p z_j \leq k, \quad z_j \in \{0, 1\} \nonumber
    \end{align}
    where $s_j$'s are new variables which ensure that when $z_j = 0$, the corresponding coefficient $\beta_j =0$.
    The problem is a mixed-integer quadratically-constrained quadratic problem (QCQP).
    The problem can be solved by a commercial solver which accepts QCQP formulations.
    \item eigen-perspective formulation:
    \begin{align}
    \label{appendix_formulation:eigen_perspective}
        \min_{\bbeta, \bz, \bs} & \quad \bbeta^T \left(\bX^T \bX - \lambda_{\min} (\bX^T \bX) \bI \right) \bbeta  - 2 \by^T \bX \bbeta+ \left(\lambda_2 + \lambda_{\min}(\bX^T \bX) \right) \sum_{j=1}^p s_j\\
        \text{subject to} & \quad \beta_j^2 \leq s_j z_j, \quad \sum_{j=1}^p z_j \leq k, \quad z_j \in \{0, 1\} \nonumber,
    \end{align}
    where $\lambda_{\min}(\cdot)$ denotes the smallest eigenvalue of a matrix.
    This formulation is similar to the QCQP formulation above, but we increase the $\ell_2$ coefficient by subtracting $\lambda_{\min}(\bX^T \bX) \bI$ from the diagonal of $\bX^T \bX$.
    \item optimal perspective formulation:
    \begin{align}
    \label{appendix_formulation:optimal_perspective}
        \min_{\bB, \bbeta, \bz} & \quad \langle \bX^T \bX, \bB\rangle - 2 \by^T \bX \bbeta\\
        \text{subject to} & \quad \bB - \bbeta \bbeta^T \succeq 0, \quad \beta_j^2 \leq B_{jj} z_j, \quad \sum_{j=1}^p z_j \leq k, \quad z_j \in \{0, 1\} \nonumber,
    \end{align}
    where $\bA \succeq 0$ means that $\bA$ is positive semidefinite.
    Such formulation is called a mixed-integer semidefinite programming (MI-SDP) problem and is proposed in~\cite{zheng2014improving, dong2015regularization, han2022equivalence}.
    Currently, there is no commercial solver that can solve MI-SDP problems: Gurobi does not support SDP; MOSEK can solve SDP but does not support MI-SDP.
    As shown by~\cite{dong2015regularization}, SDP is not scalable to high dimensions.
    We will confirm this observation in our experiments by solving the relaxed convex SDP problem (relaxing $\{0, 1\}$ to $[0, 1]$) via MOSEK, the state-of-the-art conic solver.
    After we get the solution from the relaxed problem, we find the indices corresponding to the top $k$ largest $z_j$'s and retrieve a feasible solution on these features.
\end{enumerate}

\newpage
\section{Algorithmic Charts}
\label{appendix:algorithmic_charts}

\subsection{Overall Branch-and-bound Framework}

We first provide the overall branch-and-bound framework.
Then we go into details on how to calculate the lower bound, how to obtain a near-optimal solution and upper bound (via beam search), and how to do branching.
A visualization of the branch-and-bound tree can be found in Figure~\ref{fig:BnB_diagram}.

\begin{figure*}[h]
    \centering
    \includegraphics[width=0.7\textwidth]{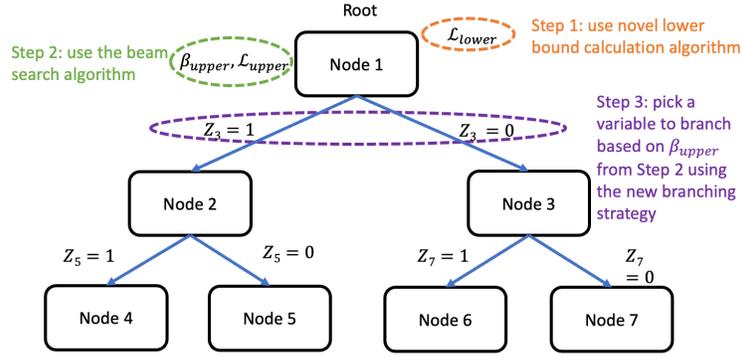}
    \caption{Branch-and-bound diagram.}
    \label{fig:BnB_diagram}
\end{figure*}
\begin{algorithm*}
\caption{\ourmethod{}($\mathcal{D}$,$k$,$\lambda_2$, $\epsilon$, $T_{\text{iter}}$, $T_{\max}$) $\rightarrow~\bbeta$}
\label{alg:BnB_overview_algorithm_main}
\begin{flushleft}
\textbf{Input:} dataset $\mathcal{D}$, sparsity constraint $k$, $\ell_2$ coefficient $\lambda_2$, optimality gap $\epsilon$, iteration limit $T_{\text{iter}}$ for subgradient ascent, and time limit $T_{\max}$. \hfill\\
\textbf{Output:} a sparse continuous coefficient vector $\bbeta \in \bbR^p $ with $\lVert \bbeta \rVert_0 \leq k$.
\end{flushleft}
\begin{algorithmic}[1]
    \STATE $\epsilon_{\text{best}}, \calL^{\text{best}}_{\text{lower}}, \bbeta_{\text{upper}}^{\text{best}}, \calL_{\text{upper}}^{\text{best}} = \infty, -\infty, \mathbf{0}, 0$ \COMMENT{Initialize best optimality gap to $\infty$, best lower bound to $-\infty$, best solution to $\mathbf{0}$ and its loss to  0.}
    \STATE $\calQ \leftarrow \text{initialize an empty queue (FIFO) structure}$ \COMMENT{FIFO corresponds to a breadth-first search.}
    \STATE $\calN_{\text{select}} = \emptyset$, $\calN_{\text{avoid}} = \emptyset$ \COMMENT{Coordinates in $\calN_{\text{select}}$ can be selected during heuristic search and lower bound calculation for a node; Coordinates in $\calN_{\text{avoid}}$ must be avoided.}
    \STATE $\text{RootNode} = \text{CreateNode}(\calD, k, \lambda_2, \calN_{\text{select}}, \calN_{\text{avoid}})$ \COMMENT{Create the first node, which is the root node in the BnB tree.}
    \STATE $\calQ = \calQ.\text{push}(\text{RootNode})$ \COMMENT{Put the root node in the queue.}
    \STATE $l_{\text{unsolved}} = 1$ \COMMENT{Smallest depth where at least one node is unsolved in the BnB tree}
    \STATE
    \textbf{WHILE}
    {$\calQ \neq \emptyset $ and $T_{\text{elapsed}} < T_{\max}$} \textbf{do} \COMMENT{Keep searching until the time limit is reached or there is no node in the queue.}
        \STATE\quad 
        $\text{Node} = \calQ\text{.getNode}()$ 
        \STATE\quad $\calL_{\text{lower}} = \text{Node.lowerSolveFast()}$ \COMMENT{Get the lower bound for this node through the fast method}
        \STATE \quad
        \textbf{if} $\calL_{\text{lower}} < \calL_{\text{best}}^{\text{upper}}$ \textbf{then} $\calL_{\text{lower}} = \text{Node.lowerSolveTight()}$ \COMMENT{Get a tighter lower bound for this node through the subgradient ascent}
        \STATE \quad
        \textbf{if} $\calL_{\text{lower}} \geq \calL_{\text{best}}^{\text{upper}}$ \textbf{then continue} 
        \COMMENT{Prune Node if its lower bound is higher than or equal to the current best upper bound.}
        \STATE \quad
       \textbf{if} All nodes in depth $l_{\text{unsolved}}$ of the BnB tree are solved \textbf{then}
            \STATE\quad \quad $\calL_{\text{lower}}^{\text{best}} = \max_q(\{\calL^q, \text{for } q \in \{1, 2, ..., l_{\text{unsolved}}\})$ \COMMENT{Tighten the best lower bound; $\calL^q$ is the smallest lower bound of all nodes in depth $q$}
            \STATE\quad \quad
            $l_{\text{unsolved}} = l_{\text{unsolved}} + 1$ \COMMENT{Raise up the smallest unsolved depth by 1}
            \STATE \quad\quad
            $\epsilon_{\text{best}} = (\calL_{\text{upper}}^{\text{best}} - \calL_{\text{lower}}^{\text{best}})/\vert{\calL_{\text{upper}}^{\text{best}}}$
            \STATE \quad\quad
            \textbf{if} $\epsilon_{\text{best}} < \epsilon$ \textbf{then return}
            $\bbeta_{\text{best}}$ \COMMENT{Optimality gap tolerance is reached, return the current best solution}
        \STATE \quad $\bbeta_{\text{upper}}, \calL_{\text{upper}} = \text{Node.upperSolve()}$ \label{algorithm_line:beam_search_in_overview_algorithm_main_paper} \COMMENT{Get feasible sparse solution and its loss from beam search, which is an upper bound for our BnB tree.} 
        \STATE\quad
        \textbf{if} $\calL_{\text{upper}} < \calL_{\text{upper}}^{\text{best}}$ \textbf{then}
            \STATE \quad\quad $\calL_{\text{upper}}^{\text{best}}, \bbeta_{\text{upper}}^{\text{best}} = \calL_{\text{upper}}, \bbeta_{\text{upper}}$ \COMMENT{Update our best solution and best upper bound}
            \STATE \quad \quad $\epsilon_{\text{best}} = (\calL_{\text{upper}}^{\text{best}} - \calL_{\text{lower}}^{\text{best}})/\vert{\calL_{\text{upper}}^{\text{best}}}$
            \STATE \quad \quad 
            \textbf{if} $\epsilon_{\text{best}} < \epsilon$ \textbf{then} \textbf{return} $\bbeta_{\text{best}}$ \COMMENT{Optimality gap tolerance is reached, return the current best solution}
        \STATE \quad $j = \text{Node.getBranchCoord}(\bbeta_{\text{upper}})$ \COMMENT{Get which feature coordinate to branch}
        \STATE \quad $\text{ChildNode1} = \text{CreateNode}(\calD, k, \lambda_2, (\text{Node} \rightarrow \calN_{\text{select}}) \cup \{j\}, \text{Node} \rightarrow \calN_{\text{avoid}})$ \COMMENT{New child node where feature $j$ must be selected}
        \STATE \quad $\text{ChildNode2} = \text{CreateNode}(\calD, k, \lambda_2, \text{Node} \rightarrow \calN_{\text{select}}, (\text{Node} \rightarrow \calN_{\text{avoid}}) \cup \{j\} )$ \COMMENT{New child node where feature $j$ must be avoided}
        \STATE \quad $\calQ \text{.push}(\text{ChildNode1}, \text{ChildNode2})$ \COMMENT{Put the two newly generated nodes into our queue.}
    \STATE \textbf{return} $\bbeta_{\text{best}}$ \COMMENT{Return the current best solution when time limit is reached or there is no node in the queue}
\end{algorithmic}
\end{algorithm*}

\newpage

\subsection{Fast Lower Bound Calculation}

\begin{algorithm}
\caption{Node.lowerSolveFast() $\rightarrow~\calL_{\text{lower}}$}
\label{alg:BnB_lowerSolveFast_algorithm}
\begin{flushleft}
\textbf{Attributes in the Node class:} dataset $\mathcal{D} = \{ (\bx_i, y_i) \}_{i=1}^n$, sparsity constraint $k$, $\ell_2$ coefficient $\lambda_2$, set of features that must be selected $\calN_{\text{select}}$, set of features that must be avoided $\calN_{\text{avoid}}$.\\
\textbf{Output:} a lower bound for the node $\calL_{\text{lower}}$.
\end{flushleft}
\begin{algorithmic}[1]
    \STATE $\bgamma = \argmin_{\bbeta, \beta_j=0 \text{ if } j\in \calN_{\text{avoid}}} \bbeta^T \bX^T \bX \bbeta - 2\by^T \bX \bbeta + \lambda_2 \Vert{\bbeta}_2^2$ \COMMENT{Solve the ridge regression problem. We avoid using coordinates belonging to $\calN_{\text{avoid}}$ by fixing the coefficients on those coordinates to be $0$} .
    \STATE $\calL_{\text{ridge}} = \bgamma^T \bX^T \bX \bgamma - 2 \by^T \bX \bgamma + \lambda_2 \Vert{\bgamma}_2^2$ \COMMENT{Loss for the ridge regression.}
    \STATE $\calL_{\text{extra}} = (\lambda_2 + \lambda_{\min} (\bX^T \bX)) \cdot  \text{SumBottom}_{p - \vert{\calN_{\text{avoid}}}- k }(\{\beta_j^2, \text{ for }\forall j \in [1, ..., p] \setminus (\calN_{\text{avoid}} \cup \calN_{\text{select}})\}) $ \COMMENT{First of all, we are restricted to the coordinates not belonging to $\calN_{\text{avoid}}$. Second, coordinates in the set $\calN_{\text{select}}$ must be selected. If we work through the strong convexity argument in Section~\ref{sec:lower_bound} in the main paper, we would get this formula.}
    \STATE $\calL_{\text{lower}} = \calL_{\text{ridge}} + \calL_{\text{extra}}$
    \STATE \textbf{return} $\calL_{\text{lower}}$
\end{algorithmic}
\end{algorithm}

\clearpage
\subsection{Refine the Lower Bound through the ADMM method}
Next, we show how to get a tighter lower bound through the ADMM method.

\begin{algorithm}
\caption{Node.lowerSolveTightADMM() $\rightarrow~\calL_{\text{lower}}$}
\label{alg:BnB_lowerSolveTight_ADMM_algorithm}
\begin{flushleft}
\textbf{Attributes in the Node class:} dataset $\mathcal{D} = \{ (\bx_i, y_i) \}_{i=1}^n$, sparsity constraint $k$, $\ell_2$ coefficient $\lambda_2$, set of features that must be selected $\calN_{\text{select}}$, and iteration limit $T_{\text{iter}}$ for ADMM (For simplicity of presentation, we assume $\mathcal{N}_{\text{avoid}}$ is empty. If $\mathcal{N}_{\text{avoid}} \neq \emptyset$, we can do a selection and relabeling step on $\bX, \by, \bQ_{\lambda}, \mathcal{N}_{\text{select}}$ to focus solely on coordinates not belonging to $\mathcal{N}_{\text{avoid}}$).\\
\textbf{Output:} a lower bound for the node $\calL_{\text{lower}}$.
\end{flushleft}
\begin{algorithmic}[1]
    \STATE $\bgamma^1 = \argmin_{\bbeta} \bbeta^T \bX^T \bX \bbeta - 2 \by^T \bX \bbeta + \lambda_2 \Vert{\bbeta}_2^2$ \COMMENT{Initialize $\bgamma$ by solving the ridge regression problem. This will be used as a warm start for the ADMM method below. To avoid redundant computation, we can use the value for $\bgamma$ from Node.lowerSolveFast().}
    \STATE $\bp^1 = \bX^T \by - \bQ_{\lambda}\bgamma$ \COMMENT{Initialize $\bp$ as stated in Problem~\ref{problem:ADMM_objective}.}
    \STATE $\bq^1 = \mathbf{0}$ \COMMENT{Initialize the scaled dual variable $\bq$ in the ADMM.}
    \STATE $\rho = 2 /\sqrt{\lambda_{\text{max}}(\bQlambda) \lambda_{\text{min} > 0}}(\bQlambda)$ \COMMENT{Calculate the step size used in the ADMM}
    \FOR{t = 2, 3, ...$T_{\text{iter}}$.}
        \STATE $\bgamma^t = (\frac{2}{\rho} \bI + \bQ_{\lambda})^{-1} (\bX^T \by - \bp^{t-1} - \bq^{t-1})$ \COMMENT{Update $\bgamma$ in the ADMM.}
        \STATE $\btheta^t = 2 \bQ_{\lambda} \bgamma^t + \bp^{t-1} - \bX^T \by$ \COMMENT{Update $\btheta$ in the ADMM.}
        \STATE $\ba = \bX^T \by - \btheta^t - \bq^{t-1}$ \COMMENT{This line and the next 3 lines are used to calculate variables used for setting up the isotonic regression problem.}
        \STATE Let $\mathcal{J}$ be the indices of the top $k - \vert{\mathcal{N}_{\text{select}}}$ terms of $\vert{a_j}$ with the constraint that any element in $\mathcal{J}$ does not belong to $\mathcal{N}_{\text{select}}$.
        \STATE Let $w_j = 1$ if $j \notin \mathcal{J}$ and $j \notin \mathcal{N}_{\text{select}}$ and let $w_j = \frac{2}{\rho(\lambda_2 + \lambda)+1}$ otherwise.
        \STATE Let $b_j = \frac{\vert{a_j}}{w_j}$.
        \STATE Solve the problem $\hat{\bv} = \argmin_{\bv} \sum_j w_j (v_j - b_j)^2$ such that for $\forall i, l \notin \mathcal{N}_{\text{select}}, v_i \geq v_l$ if $\vert{a_i} \geq \vert{a_l}$. \COMMENT{We can decompose this optimization problem into two independent problems with coordinates belonging to $\mathcal{N}_{\text{select}}$ (can be solved individually) and coordinates not belonging to $\mathcal{N}_{\text{select}}$ (can be solved via standard isotonic regression).}
        \STATE $\bp^t = sign(\ba) \odot \hat{\bv}$ \COMMENT{Update $\bp$ in the ADMM. The symbol $\odot$ denotes the element-wise (Hadamard) product.}
        \STATE $\bq^t = \bq^{t-1} + \btheta^t + \bp^t - \bX^T \by$ \COMMENT{Update $\bq$ in the ADMM.}
    \ENDFOR
    \STATE $\calL_{\text{lower}} = h(\bgamma^{T_{\text{iter}}})$ \COMMENT{Calculate the lower bound.}
    \STATE \textbf{return} $\calL_{\text{lower}}$
\end{algorithmic}
\end{algorithm}

\clearpage
\subsection{(Optional) An alternative to the Lower Bound Calculation via Subgradient Ascent and the CMF method}

In addition to the ADMM method, we also provide an alternative algorithm showing how to get a tighter lower bound through the subgradient ascent algorithm using the CMF method. Note that this subgradient method has been superseded by the ADMM method, which converges faster. We provide this method in the Appendix for the sake of completeness and for future readers to consult.

\begin{algorithm}
\caption{Node.lowerSolveTightSubgradient() $\rightarrow~\calL_{\text{lower}}$}
\label{alg:BnB_lowerSolveTightSubgradient_algorithm}
\begin{flushleft}
\textbf{Attributes in the Node class:} dataset $\mathcal{D} = \{ (\bx_i, y_i) \}_{i=1}^n$, sparsity constraint $k$, $\ell_2$ coefficient $\lambda_2$, set of features that must be selected $\calN_{\text{select}}$, set of features that must be avoided $\calN_{\text{avoid}}$, and iteration limit $T_{\text{iter}}$ for subgradient ascent\\
\textbf{Output:} a lower bound for the node $\calL_{\text{lower}}$.
\end{flushleft}
\begin{algorithmic}[1]
    \STATE $\bgamma = \argmin_{\bbeta, \beta_j=0 \text{ if } j\in \calN_{\text{avoid}}} \bbeta^T \bX^T \bX \bbeta - 2\by^T \bX \bbeta + \lambda_2 \Vert{\bbeta}_2^2$ \COMMENT{Solve the ridge regression problem and avoid using coordinates in $\calN_{\text{avoid}}$. This will be used as a warm start for the subgradient ascent procedure below. To avoid redundant computation, we can use the value for $\bgamma$ from Node.lowerSolveFast()}
    \STATE $\bv = \bgamma$ \COMMENT{Initialization for $\bv^t$ at iteration step $t=1$}
    \STATE $h^* = \gLridge(\bbeta^{root}_{upper})$ \COMMENT{get an estimate of $max_{\bgamma}h(\bgamma)$; if $\bbeta^{root}_{upper}$ has not been solved at the root node, solve $\bbeta^{root}_{upper}$ by calling the upperSolve() function at the root node.}
    \STATE $\gL_{\text{lower}} = h(\bgamma)$
    \FOR{t = 2, 3, ...$T_{\text{iter}}$} 
        \STATE $\bgamma, \bv = \text{Node.subgradAscentCMF}(\bgamma, \bv)$ \COMMENT{Perform one step of subgradient ascent using the CMF method}
        \STATE $\gL_{\text{lower}} = \max(\gL_{\text{lower}}, h(\bgamma))$ \COMMENT{Loss decreases stochastically during the subgradient ascent procedure. We keep the largest value of $h(\bgamma)$}
    \ENDFOR
    \STATE \textbf{return} $\calL_{\text{lower}}$
\end{algorithmic}
\end{algorithm}

In the above algorithm, we call a new sub algorithm called subGradAscentCMF. We provide the details on how to do this below.
We try to maximize $h(\bgamma)$ through iterative subgradient ascent.
A more sophisticated algorithm (CMF method~\cite{camerini1975improving}) can be applied to adaptively choose the step size, which has been shown to achieve faster convergence in practice.
The computational procedure for the subgradient ascent algorithm using the CMF method~\cite{boyd2003subgradient, camerini1975improving} is as follows.
Given $\bgamma^{t-1}$ at iteration $t-1$, $\bgamma^t$ can be computed as:
\begin{align*}
    \bw^t &= \frac{\partial \gL_{\text{ridge}-\lambda}^{\text{saddle}}(\bgamma, Z(\bgamma^{t-1}))}{\partial 
    \bgamma} \bigg|_{\bgamma = \bgamma^{t-1}} 
    \eqcomment{get the subgradient} \\
    s_t &= \max(0, -d_t (\bv^{t-1})^T \bw^t / \Vert{\bv^{t-1}}_2^2) \eqcomment{$s_t$ will be used as a smoothing factor between $\bv^{t-1}$ and $\bw^t$} \\
    \bv^t &= \bw^t + s_t \bv^{t-1} \eqcomment{combine $\bv^{t-1}$ and $\bw^t$ using the smoothing factor}\\
    \alpha_t &= (h^* - h(\bgamma^{t-1})) / \Vert{\bv^{t}}_2^2 \eqcomment{calculate the step size; $h^*$ is an estimate of $\max_{\bgamma} h(\bgamma)$; here we let $h^* = \gLridge(\bbeta^{\text{root}}_{\text{upper}})$}\\
    \bgamma^t &= \bgamma^{t-1} + \alpha_t \bv^{t} \eqcomment{perform subgradient ascent}
\end{align*}
where at the beginning of the iteration ($t=1$), $\bv^1 = \mathbf{0}$ and we choose $d_t = 1.5$ as a constant for all iterations as recommended by the original paper.
Below, we provide the detailed algorithm to perform subgradient ascent using the CMF method.

\begin{algorithm}
\caption{Node.subgradAscentCMF($\bgamma, \bv, h^*$) $\rightarrow \bgamma, \bv$}
\label{alg:BnB_subgradAscentCMF_algorithm}
\begin{flushleft}
\textbf{Attributes in the Node class:} dataset $\mathcal{D} = \{ (\bx_i, y_i) \}_{i=1}^n$, sparsity constraint $k$, $\ell_2$ coefficient $\lambda_2$, set of features that must be selected $\calN_{\text{select}}$, set of features that must be avoided $\calN_{\text{avoid}}$\\
\textbf{Input:} Current variable $\bgamma$ for the lower bound function $h(\cdot)$ for Equation~\ref{eq:h(gamma)_dual_saddle_inner_minimization_lambda}, current ascent vector $\bv$, and $h^*$ is an estimate of $\max_{\bgamma}h(\bgamma)$. \\
\textbf{Output:} updated variable $\bgamma$ and updated ascent vector $\bv$.
\end{flushleft}
\begin{algorithmic}[1]
    \STATE $\hat{\bz} = \argmin_{\bz} \gL_{\text{ridge}-\lambda}^{\text{saddle}} (\bgamma, \bz)$ \COMMENT{get the minimizer for $\bz$ at the current value for $\bgamma$; optimization is performed under the constraint $\sum_{j=1}^p z_j \leq k, z_j \in [0, 1], z_j = 1 \text{ for } j \in \calN_{\text{select}} \text{ and } z_j = 0 \text{ for } j \in \calN_{\text{select}}$}
    \STATE $\bw = \left(\partial \gL_{\text{ridge}-\lambda}^{\text{saddle}}(\bgamma, \hat{\bz})\right) / \left(\partial \bgamma \right)$ \COMMENT{get the subgradient}
    \STATE $\bw_j = 0 \text{ for } j \in \calN_{\text{avoid}}$ \COMMENT{We don't perform any ascent steps on the coordinates belong to $\calN_{\text{avoid}}$.}
    \STATE $s = \max(0, -1.5 \bv^T \bw / \Vert{\bv}_2^2)$ \COMMENT{Calculate the smoothing factor}
    \STATE $\bv = \bw + s \bv$ \COMMENT{update the ascent vector by combining the current subgradient and previous ascent vector}
    \STATE $\alpha = \left(h^* - h(\bgamma)\right) / \Vert{\bv}_2^2$ \COMMENT{calculate the step size for the ascent step}
    \STATE $\bgamma = \bgamma + \alpha \bv$ \COMMENT{Perform one step of ascent along the direction of the ascent vector}
    \STATE \textbf{return} $\bgamma, \bv$
\end{algorithmic}
\end{algorithm}

\newpage

\subsection{Beam Search Algorithm}

We provide a visualization of the beam search algorithm in Figure~\ref{fig:BeamSearch_diagram}
\begin{figure}[h]
    \centering
    \includegraphics[width=0.8\textwidth]{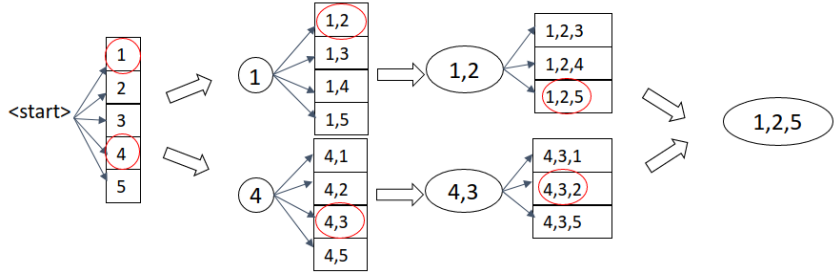}
    \caption{We add into our support one feature at a time by picking the feature which would result in the largest decrease in the objective function. After that, we finetune all coefficients on the support. We keep the top $B$ solutions during each stage of support expansion.}
    \label{fig:BeamSearch_diagram}
\end{figure}

\def\Wtmp{\mathcal{W}_{\textrm{tmp}}}
\def\ourmethodBeamSearch{SparseBeamR}
\def\ourmethodExpand{ExpandSuppBy1}

\begin{algorithm}
\caption{Node.upperSolve($B=50$) $\rightarrow~\bbeta_{\text{upper}}, \calL_{\text{upper}}$}
\label{alg:BnB_upperSolve_algorithm}
\begin{flushleft}
\textbf{Attributes in the Node class:} dataset $\mathcal{D} = \{ (\bx_i, y_i) \}_{i=1}^n$, sparsity constraint $k$, $\ell_2$ coefficient $\lambda_2$, a set of feature coordinates that must be selected $\calN_{\text{select}}$, and a set of feature coordinates that must be avoided $\calN_{\text{avoid}}$.\\
\textbf{Input:} beam search size $B$, which is set to be $50$ as the default value.\\
\textbf{Output:} a near-optimal solution $\bbeta_{\text{upper}}$ with $\Vert{\bbeta_{\text{upper}}}_0 = k$ and the corresponding loss $\calL_{\text{upper}}$.
\end{flushleft}
\begin{algorithmic}[1]
    \STATE $\bbeta' = \argmin_{\bbeta, \beta_j = 0 \text{ for } j \notin \calN_{\text{select}}} \bbeta^T \bX^T \bX \bbeta - 2\by^T \bX \bbeta + \lambda_2 \Vert{\bbeta}_2^2$ \COMMENT{Initialize the coefficients by fitting on the must-be-selected coordinates.}
    \STATE $\mathcal{W} = \{\bbeta'\}$ \COMMENT{collection of solutions at each iteration (initially we only have one solution)}
    \STATE $\mathcal{F} = \emptyset$ \COMMENT{Initialize the collection of found supports as an empty set}
    \FOR{$t = \vert{\calN_{\text{select}}} + 1, ..., k$}
        \STATE $\Wtmp \leftarrow \emptyset$
        \FOR[Each of these has support $t-1$]{ $\bbeta' \in \mathcal{W}$}
            \STATE $(\mathcal{W}', \mathcal{F}) \leftarrow \text{ExpandSuppBy1} (\mathcal{D}, \bbeta' ,\mathcal{F},B, \calN_{\text{avoid}})$.
            \COMMENT{Returns $\leq B$ models with supp. $t$}
            \STATE $\Wtmp \leftarrow \Wtmp \cup \mathcal{W}'$
        \ENDFOR
        \STATE Reset $\mathcal{W}$ to be the $B$ solutions in $\Wtmp$ with the smallest ridge regression loss values.
    \ENDFOR
    \STATE Pick $\bbeta_{\text{upper}}$ from $ \mathcal{W}$ with the smallest ridge regression loss.
    \STATE $\calL_{\text{upper}} = \bbeta_{\text{upper}}^T \bX^T \bX \bbeta_{\text{upper}} - 2\by^T \bX \bbeta_{\text{upper}} + \lambda_2 \Vert{\bbeta_{\text{upper}}}_2^2$
    \STATE \textbf{return} $\bbeta_{\text{upper}}$, $\calL_{\text{upper}}$.
\end{algorithmic}
\end{algorithm}

\begin{algorithm}
\caption{ExpandSuppBy1$(\calD, \bbeta', \calF, B, \calN_{\text{avoid}}) \rightarrow~ \calW, \calF$}
\label{alg:ExpandSuppBy1_algorithm}
\begin{flushleft}
\textbf{Input:} dataset $\mathcal{D} = \{ (\bx_i, y_i) \}_{i=1}^n$, current solution $\bbeta'$, set of found supports $\calF$, beam search size $B$, and a set of feature coordinates that must be avoided $\calN_{\text{avoid}}$. \\
\textbf{Output:} a collection of solutions $\mathcal{W} = \{\bbeta^t \}$ with $\lVert \bbeta^t \rVert_0 = \Vert{\bbeta'}_0 + 1$ for $\forall t$. All of these solutions include the support of $\bbeta'$ plus one more feature. None of the solutions have the same support as any element of $\mathcal{F}$, meaning we do not discover the same support set multiple times. We also output the updated $\mathcal{F}$.
\end{flushleft}
\begin{algorithmic}[1]
    \STATE Let $\mathcal{S}^c \leftarrow \{j \mid \beta'_j = 0 \text{ and } j \notin \calN_{\text{avoid}} \}$ \COMMENT{Non-support of the given solution}
    \STATE Pick up to $B$ coords ($j$'s) in $\mathcal{S}^c$ with largest decreases in $\Delta_j \gL_{\text{ridge}}(\cdot)$, call this set $\mathcal{J}'$. \COMMENT{We will use these supports, which include the support of $\bw$ plus one more.}
    \STATE $\mathcal{W} \leftarrow \emptyset$
    \FOR[Optimize on the top $B$ coordinates]{$j \in \mathcal{J}'$}
        \IF{$(\textrm{supp}(\bw) \cup \{j\}) \in \mathcal{F}$}
            \STATE \textbf{continue} \COMMENT{We've already seen this support, so skip.}
        \ENDIF
        \STATE $\mathcal{F} \leftarrow \mathcal{F} \cup \{supp(\bw) \cup \{j\}\}$. \COMMENT{Add new support to $\mathcal{F}$.}
        \STATE $\bu^* \in \argmin_{\bu} \calL_{\text{ridge}}(\mathcal{D},\bu)$ with $\textrm{supp}(\bu) = \textrm{supp}(\bw') \cup \{j\} $ . \COMMENT{Update coefficients on the newly expanded support}
        \STATE $\mathcal{W} \leftarrow \mathcal{W} \cup \{\bu^*\}$ \COMMENT{cache the solution to the central pool}
    \ENDFOR
    \STATE \textbf{return} $\mathcal{W}$ and $\mathcal{F}$.
\end{algorithmic}
\end{algorithm}

\newpage
\subsection{Branching}

\begin{algorithm}
\caption{Node.getBranchCoord$(\bbeta) \rightarrow~j^*$}
\label{alg:getBranchCoord_algorithm}
\begin{flushleft}
\textbf{Attributes in the Node class:} dataset $\mathcal{D} = \{ (\bx_i, y_i) \}_{i=1}^n$, sparsity constraint $k$, $\ell_2$ coefficient $\lambda_2$, a set of feature coordinates that must be selected $\calN_{\text{select}}$, and a set of feature coordinates that must be avoided $\calN_{\text{avoid}}$.\\
\textbf{Output:} a single coordinate $j$ on which the node should branch in the BnB tree.
\end{flushleft}
\begin{algorithmic}[1]
    \STATE $j^* = \argmax_{j \notin \calN_{\text{select}}} - 2 \bbeta^T (\bX^T \bX + \lambda_2 \bI) \beta_j \be_j + ((\bX^T \bX)_{jj} + \lambda_2)\beta_j^2  + 2 \by^T \bX \beta_j \be_j$ \COMMENT{select the coordinate whose coefficient, if we set to $0$, would result in the greatest increase in our ridge regression loss}
    \STATE \textbf{return} $j^*$
\end{algorithmic}
\end{algorithm}

\subsection{A Note on the Connection with the Interlacing Eigenvalue Property}
During the branch-and-bound procedure, finding the smallest eigenvalue of any principle submatrix of $\bX^T \bX$ is computationally expensive if we do it for every node. Fortunately, we can get a lower bound for the smallest eigenvalue based on the following Interlacing Eigenvalue Property~\cite{haemers1995interlacing}:

\begin{theorem}
    Suppose $\bA \in \bbR^{n \times n}$ is symmetric. Let $\bB \in \bbR^{m \times m}$ with $m < n$ be a principal submatrix (obtained by deleting both $i$-th row and $i$-th column for some values of $i$). Suppose $\bA$ has eigenvalues $\gamma_1 \leq ... \leq \gamma_n$ and $\bB$ has eigenvalues $\alpha_1 \leq ... \leq \alpha_m$. Then
    \begin{align*}
        \gamma_k \leq \alpha_k \leq \gamma_{k+n-m} \quad \text{for } k=1,..., m
    \end{align*}
    And if $m=n-1$,
    \begin{align*}
        \gamma_1 \leq \alpha_1 \leq \gamma_2 \leq \alpha_2 \leq ... \leq \alpha_{n-1} \leq \gamma_n
    \end{align*}
\end{theorem}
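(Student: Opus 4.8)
The plan is to deduce this from the Courant--Fischer min--max characterization of the eigenvalues of a symmetric matrix. First I would reduce to a convenient normal form: since conjugating by a permutation matrix $\bP$ replaces $\bA$ with $\bP^{\top}\bA\bP$, which has the same eigenvalues and still contains $\bB$ as a principal submatrix, there is no loss of generality in assuming that the deleted indices are $m+1,\dots,n$, so that $\bB$ is the leading $m\times m$ block of $\bA$. Let $V=\{\bx\in\bbR^n : x_{m+1}=\cdots=x_n=0\}$, an $m$-dimensional subspace naturally identified with $\bbR^m$; then for $\bx\in V$, written in the coordinates of $\bbR^m$, the Rayleigh quotients agree: $\bx^{\top}\bA\bx/\bx^{\top}\bx=\bx^{\top}\bB\bx/\bx^{\top}\bx$.

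Next I would recall the two equivalent forms of Courant--Fischer. For symmetric $\bA$ with eigenvalues $\gamma_1\le\cdots\le\gamma_n$,
\[
\gamma_k=\min_{\dim S=k}\ \max_{\bx\in S\setminus\{\mathbf{0}\}}\frac{\bx^{\top}\bA\bx}{\bx^{\top}\bx}
=\max_{\dim S=n-k+1}\ \min_{\bx\in S\setminus\{\mathbf{0}\}}\frac{\bx^{\top}\bA\bx}{\bx^{\top}\bx},
\]
where $S$ ranges over subspaces of $\bbR^n$ of the indicated dimension; the same formulas hold for $\bB$ with $S$ ranging over subspaces of $V$. If a self-contained argument is wanted, Courant--Fischer itself follows from the fact that any $k$-dimensional subspace intersects the span of the eigenvectors for $\gamma_k,\dots,\gamma_n$ nontrivially (by dimension count), and conversely.

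Then both halves of the interlacing inequality fall out by comparing the feasible families of subspaces. For the lower bound $\gamma_k\le\alpha_k$ ($k=1,\dots,m$): the $k$-dimensional subspaces of $V$ form a subfamily of the $k$-dimensional subspaces of $\bbR^n$, so the min--max defining $\alpha_k$ runs over a smaller collection and is therefore $\ge$ the min--max defining $\gamma_k$. For the upper bound $\alpha_k\le\gamma_{k+n-m}$: apply the max--min form to $\alpha_k$ with subspace dimension $m-k+1$; since the $(m-k+1)$-dimensional subspaces of $V$ again form a subfamily of those of $\bbR^n$, the maximum is $\le\max_{\dim S=m-k+1,\,S\subseteq\bbR^n}\min_{\bx}(\cdot)$, which by Courant--Fischer equals $\gamma_{n-(m-k+1)+1}=\gamma_{k+n-m}$. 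Finally, the displayed chain for $m=n-1$ is exactly the case $n-m=1$: the general inequalities become $\gamma_k\le\alpha_k\le\gamma_{k+1}$ for $k=1,\dots,n-1$, which interleave into $\gamma_1\le\alpha_1\le\gamma_2\le\cdots\le\alpha_{n-1}\le\gamma_n$.

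The reduction to leading-block form and the recollection of Courant--Fischer are routine; the only place to be careful is the index arithmetic in the max--min step, namely verifying $n-(m-k+1)+1=k+n-m$ so that dimension $m-k+1$ matches eigenvalue index $k+n-m$. There is no real obstacle here --- this is the classical Cauchy interlacing theorem --- so the bulk of the work is simply presenting the variational comparison cleanly.
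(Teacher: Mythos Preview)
Your proof plan is correct: this is precisely the classical Courant--Fischer argument for Cauchy interlacing, and the index arithmetic you flag is the only place that needs checking. However, the paper does not prove this theorem at all --- it merely states it as a known result, cites \cite{haemers1995interlacing}, and uses it to justify fixing $\lambda=\lambda_{\min}(\bX^T\bX)$ at every node of the branch-and-bound tree rather than recomputing the minimum eigenvalue of each principal submatrix. So there is nothing in the paper to compare your argument against; your write-up would constitute a proof where the paper offers only a citation.
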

Using this Theorem, we can let $\lambda=\lambda_{\min} (\bX^T \bX)$ for every node in the branch-and-bound tree in our algorithm. This saves us a lot of computational time because we only have to calculate the smallest eigenvalue once in our root node. However, if the minimum eigenvalue were to be recalculated at each node efficiently, the lower bounds would be even tighter.

\clearpage
\section{More Experimental Setup Details}
\label{appendix:more_experimental_setup_details}

\subsection{Computing Platform}
All experiments were run on the 10x TensorEX TS2-673917-DPN Intel Xeon Gold 6226 Processor, 2.7Ghz. We set the memory limit to be 100GB.

\subsection{Baselines, Licenses, and Links}
\paragraph{Commerical Solvers} For the MIP formulations listed in Appendix~\ref{appendix:existing_mip_formulation}, we used Gurobi and MOSEK.
We implemented the SOS1, bigM, perspective, and eigen-perspective formulations in Gurobi.
The Gurobi version is 10.0, which can be installed through conda (\url{https://anaconda.org/gurobi/gurobi}). We used the Academic Site License.
We implemented the perspective formulations and relaxed convex optimal perspective formulations in MOSEK.
The MOSEK version is 10.0, which can be installed through conda (\url{https://anaconda.org/MOSEK/mosek}). We used the Personal Academic License.

\paragraph{PySINDy, STLSQ, SSR, E-STLSQ} We used the PySINDy library (\url{https://github.com/dynamicslab/pysindy}) to perform the differential equation experiments. The license is MIT.
The heuristic methods STLSQ, SSR, and E-STLSQ are implemented in PySINDy.

\paragraph{MIOSR} For the differential equation experiments, the MIOSR code is available on GitHub(\url{https://github.com/wesg52/sindy_mio_paper}). The license for this code repository is MIT.

\paragraph{SubsetSelectionCIO} The SubsetSelectionCIO code is available on GitHub (\url{https://github.com/jeanpauphilet/SubsetSelectionCIO.jl}) with commit version 8b03d1fba9262b150d6c0f230c6bf9e8ee57f92e. The license for this code repository is MIT ``Expat'' License (\url{https://github.com/jeanpauphilet/SubsetSelectionCIO.jl/blob/master/LICENSE.md}).
Due to package compatibility, we use Gurobi 9.0 for SubsetSelectionCIO.
See the SusbetSelectionCIO package version specification at this link: \url{https://github.com/jeanpauphilet/SubsetSelectionCIO.jl/blob/master/Project.toml}.

\paragraph{l0bnb} The l0bnb code is available on GitHub (\url{https://github.com/alisaab/l0bnb}) with commit version 54375f8baeb64baf751e3d0effc86f8b05a386ce. The license for this code repository is MIT.

\paragraph{Sparse Simplex} The Sparse Simplex code is available at the author's website (\url{https://sites.google.com/site/amirbeck314/software?authuser=0}). We ran the greedy sparse simplex algorithm. The license for this code repository is GNU General Public License 2.0.

\paragraph{Block Decomposition} The block decomposition code is available at the author's website (\url{https://yuangzh.github.io}).
The author didn't specify a license for this code repository.
However, we didn't modify the code and only used it for benchmarking purposes, which is allowed by the software community.
Besides the block decompostion algorithm, the code repository also contains reimplementations for other algorithms, such as GP, SSP, OMP, proximal gradient l0c, QPM, ROMP. We used these reimplementations in the differential equation experiments as well.
Please see ~\cite{yuan2020block} for details of these baselines.



\subsection{Differential Equation Experiments}

We compare the certifiable method MIOSR along with other baselines on solving dynamical system problems.
In particular, for each dynamical system, we perform 20 trials with random initial conditions for 12 different trajectory lengths. 
Each dynamical system has observed positions every 0.002 seconds, so a larger trajectory length corresponds to a larger sample size $n$.
Derivatives were approximated using a smoothing finite difference with a window size of 9 points.
The derivatives were calculated automatically by the PySINDY library.
Each trajectory has 0.2\% noise added to it to demonstrate robustness to noise.
For the SINDy approach, true dynamics exist in the finite-dimensional candidate functions/features.
We investigate which method recovers the correct dynamics more effectively.

\subsubsection{Model Selection}
Model selection was performed via cross-validation with training data encompassing the first 2/3rds of a trajectory and the final third used for validation.
Predicted derivatives were compared to the validation set, with the best model being the one that minimizes $AIC_c$ for sparse regression in the setting of dynamical systems \cite{Mangan2017}.

\subsubsection{Hyperparameter Choices}
With respect to each trajectory, each algorithm was trained with various combinations of hyperparameter choices.
For both MIOSR and our method \ourmethod, the ridge regression hyperparameter choices are $\lambda_2 \in \{10^{-5}, 10^{-3}, 10^{-2}, 0.05, 0.2\}$, and the sparsity level hyperparameter choices are $k \in \{1, 2, 3, 4, 5\}$.
As in the experiments of MIOSR~\cite{Bertsimas2023}, we set the time limit for each optimization to be 30 seconds for both MIOSR and our method \ourmethod.
For SSR and STLSQ, the ridge regression hyperparameter choices are $\lambda_2 \in \{10^{-5}, 10^{-3}, 10^{-2}, 0.05, 0.2\}$.
SSR and STLSQ may automatically pick a sparsity level, but they require a thresholding hyperparameter.
We choose 50 thresholding hyperparameters with values between $10^{-3}$ and $10^{1.5}$, depending on the system being fit.
Together there are 250 different hyperparameter combinations we need to run on each trajectory length for for SSR and STLSQ.
When fitting E-STLSQ, cross-validation is not performed as the chosen hyperparameters for STLSQ are used instead.
For SSR, MIOSR, and \ourmethod, we cross validate on each dimension of the data and combine the strongest results into a final model.
For STLSQ and E-STLSQ, no post-processing is done and the same hyperparameters are used for each dimension.

\subsubsection{Evaluation Metrics}
Three metrics are considered to evaluate each algorithm: true positivity rate, $L_2$ coefficient error, and root-mean-squared error (RMSE).
The true positivity rate is defined as the ratio of the number of correctly chosen features divided by the sum of the number of true features and incorrectly chosen features.
This metric equals 1 if and only if the correct form of the dynamical system is recovered.
$L_2$ coefficient error is defined as the sum of squared errors of the predicted and the true coefficients.
Finally, for each model fit, we produce 10 simulations with different initial conditions of the true model, and calculate the root-mean-squared error (RMSE) of the predicted derivatives and the true derivatives.
Besides these three metrics, we also report the running times, comparing \ourmethod{} with other optimal ridge regression methods.

\clearpage

\section{More Experimental Results on Synthetic Benchmarks}\label{app:more_exp_results_synthetic_benchmark}



\subsection{Results Summarization}
\label{appendix_subsec:results_summarization_synthetic_benchmark}

\paragraph{$\ell_2$ Perturbation Study} We first compare with SOS1, Big-M (Big-M50 means that the big-M value is set to $50$), Perspective, and Eigen-Perspective formulations with different $\ell_2$ regularizations, with and without using beam search as the warm start.
We have three levels of $\ell_2$ regularizations: $0.001, 0.1, 10.0$.
The results are shown in Tables~\ref{appendix_tab:expt1_rho=0.1_no_warmstart_lambda2=0.001}-\ref{appendix_tab:expt2_rho=0.9_yes_warmstart_lambda2=10.0} in Appendix~\ref{appendix_subsec:l2_perturbation}.
When the method has ``-'' in the upper bound or gap column, it means that method does not produce any nontrivial solution within the time limit.

Extensive results have shown that \ourmethod{} outperforms all exisiting MIP formulations solved by the state-of-the-art MIP solver Gurobi.
With the beam search solutions (usually they are already the optimal solutions) as the warm starts, Gurobi are faster than without using warm starts.
However, \ourmethod{} still outperforms all existing MIP formulations.
If Gurobi can solve the problem within the time limit, \ourmethod{} is usually usually orders of magnitude faster than all baselines.
When Gurobi cannot solve the problem within the time limit, \ourmethod{} can either solve the problem to optimality or achieve a much smaller optimality gap than all baselines, especially in the presence of high feature correlation and/or high feature dimension.

\paragraph{Big-M Perturbation Study} We also conduct perturbation study on the big-M values. We pick three values: 50, 20, and 5. In practice, it is difficult to select the right value because setting the big-M value too small will accidentally cut off the optimal solutions.
Nonetheless, we still pick a small big-M value (5; the optimal coefficients are all $1$'s in the synthetic benchmarks) to do a thorough comparison.
The results are shown in Tables~\ref{appendix_tab:bigM_expt1_rho=0.1_yes_warmstart_lambda2=0.001}-\ref{appendix_tab:bigM_expt2_rho=0.9_yes_warmstart_lambda2=10.0} in Appendix~\ref{appendix_subsec:bigM_perturbation}.
When the method has ``-'' in the upper bound or gap column, it means that method does not produce any nontrivial solution within the time limit.

The results show that although choosing a small big-M value can often lead to smaller running time, \ourmethod{} is still orders of magnitude faster than the big-M formulation, and achieves the smallest optimality gap when all algorithms reach the time limit.
Additionally, we find that having a small big-M value will sometimes lead to increased running time on solving the root relaxation.
Sometimes, when the root relaxation is not finished within the 1h time limit, Gurobi will provide a loose lower bound, resulting in large optimality gaps.
Aside from accidentally cutting off the optimal solution, this is another reason against choosing a very small big-M value.

\paragraph{Comparison with MOSEK, SubsetSelectionCIO, and l0bnb}
For the purpose of thorough comparison, we also solve the problem with other MIP solvers/formulations.
The results are shown in Tables~\ref{appendix_tab:otherBaselines_expt1_rho=0.1_no_warmstart_lambda2=0.001}-Table~\ref{appendix_tab:otherBaselines_expt1_rho=0.9_no_warmstart_lambda2=10.0} in Appendix~\ref{appendix_subsec:benchmark_other_MIPs}.
When the method has ``-'' in the upper bound or gap column, it means that method does not produce any nontrivial solution within the time limit.
For the MOSEK solver, when the upper bound and gap columns have ``-'' values, and the time is exactly $3600.00$, this means that MOSEK runs out of memory (100GB).

First, we try to solve the problem using the commercial conic solver called MOSEK.
We use MOSEK to solve the perspective formulation, with and without using beam search solutions as warm starts.
The results indicate that Gurobi is in general much faster and more memory-efficient than MOSEK in solving the perspective formulation.
We also use MOSEK to solve the relaxed convex optimal perspective formulation (relaxing the binary constraint $z_j \in \{0, 1\}$ to the interval constraint $z_j \in [0,1]$).
we only solve the relaxed convex SDP problem because MOSEK does not support MI-SDP.
The results show that MOSEK has limited capacity in solving the perspective formulation.
MOSEK also runs out of memory for large-scale problems ($n=100000$, $p=3000$ or $p=5000$).
Besides, the experimental results show that convex SDP problem is not scalable to high dimensions.
For the high-dimensional cases, \ourmethod{} solving the MIP problem is usually much faster than MOSEK solving a convex SDP problem.

Second, we uses the SubsetSelectionCIO method, which solves the original sparse regression problem using branch-and-cut (lazy-constraint callbacks) in Gurobi.
However, as the results indicate, SubsetSelectionCIO produces wrong optimal solutions when $\ell_2$ regularization is small and the number of samples is large ($n=100000$).
When SubsetSelectionCIO does not have errors, the running times or optimality gaps are much worse than those of \ourmethod{}.

Lastly, we solve the sparse ridge regression problem using l0bnb.
In contrast to other methods, l0bnb controlls the sparsity level implicitly through the $\ell_0$ regularization, instead of the $\ell_0$ constraint.
l0bnb needs to try several different $\ell_0$ regularizations to find the desired sparsity level.
For each $\ell_0$ regularization coefficient, we set the time limit to be 180 seconds, and we stop the l0bnb algorithm after a time limit of 7200 seconds (this is longer than 3600 seconds because we want to give enough time for l0bnb to try different $\ell_0$ coefficients).
Moreover, l0bnb imposes a box constraint (similar to the big-M method) on all coefficients.
The box constraint tightens the lower bound and potentially accelerates the pruning process, but it could also cut off the optimal solution if the ``M'' value is not chosen properly or large enough.
To our best knowledge, there is no reliable way to calculate the ``M'' value.
The l0bnb package internally estimates the ``M'' value in a heuristic way.
Despite the box constraint giving l0bnb an extra advantage, we still compare this baseline with \ourmethod{}.
As the results indicate, l0bnb often takes a long time to find the desired sparsity level by trying different $\ell_0$ regularizations.
In some cases, l0bnb finds a solution with support size different than the required sparsity level (demonstrating the difficulty of finding a good $\ell_0$ coefficient).
In other cases, l0bnb still has an optimality gap after reaching the time limit while \ourmethod{} can finish certification within seconds.

\paragraph{Discussion on the Case $p \gg n$} 
For the application of discovering differential equations, the datasets usually have $n \geq p$.
Therefore, the case $p \gg n$ is not the main focus of this paper.
Nonetheless, we provide additional experiments to give a complete view of our algorithm.

We set the number of features $p=3000$ and the number of samples $n=100, 200, 500$.
The feature correlation is set to be $\rho=0.1$ since this is already a very challenging case.
The results are shown in Tables~\ref{appendix_tab:expt3_rho=0.1_yes_warmstart_lambda2=0.001}-\ref{appendix_tab:expt3_rho=0.1_yes_warmstart_lambda2=10.0} in Appendix~\ref{appendix_subsec:n_less_than_p}.
The results show that \ourmethod{} is competitive with other MIP methods.
l0bnb is the only method that can solve the problem when $n=500$ within the time limit.
However, this algorithm has difficulty in finding the desired sparsity level in the case of $n=100$ and $n=200$ even with a large time limit of 7200 seconds.
In some cases, MOSEK gives the best optimality gap.
However, when $n \geq p$ as we have shown in the previous subsections, MOSEK is not scalable to high dimensions.
Additionally, the results in differential equations show that MOSEK and l0bnb are not competitive in discovering differential equations (datasets have $n \geq p$), which is the main focus of this paper.

In the future, we plan to extend \ourmethod{} to develop a more scalable algorithm for the case $p \gg n$.

\paragraph{Fast Solve vs. ADMM} In the main paper, we proposed two ways to calculate the saddle point problem.
The first approach, fast solve, is based on an analytical formula.
The second approach is based on the ADMM method.
Intuitively, since the fast solve method is only maximizing a lower bound of the saddle point objective and ADMM is maximizing the saddle point objective directly, we expect that ADMM would give much tighter lower bounds than the fast solve approach does.
Here, we give empirical evidence to support our claim.
The results can be seen in Figures~\ref{fig:fastsolve_vs_admm_expt1} and~\ref{fig:fastsolve_vs_admm_expt2}.

\clearpage
\subsection{Comparison with SOS1, Big-M, Perspective, and Eigen-Perspective Formulations with different $\ell_2$ Regularizations}
\label{appendix_subsec:l2_perturbation}

\subsubsection{Synthetic 1 Benchmark with $\lambda_2 = 0.001$ with No Warmstart}
\begin{table}[h]
\centering

\caption{Comparison of time and optimality gap on the synthetic datasets (p = 3000, k = 10, $\rho=0.9$, $\lambda_2=10.0$). All baselines use our beam search solution as a warm start.}
\label{appendix_tab:expt2_rho=0.9_yes_warmstart_lambda2=10.0}
\end{table}

\clearpage
\subsection{Comparison with the Big-M Formulation with different $M$ Values}
\label{appendix_subsec:bigM_perturbation}


\subsubsection{Synthetic 1 Benchmark with $\lambda_2 = 0.001$ with No Warmstart}
\begin{table}[h]
\centering

\caption{Comparison of time and optimality gap on the synthetic datasets (p = 3000, k = 10, $\rho=0.1$, $\lambda_2=0.001$). All baselines use our beam search solution as a warm start whenever possible.
The results in the case of $p >> n$ show that \ourmethod{} is competitive with other MIP methods.
l0bnb can solve the problem when $n=500$ but has difficulty in finding the desired sparsity level in the case of $n=100$ and $n=200$.
MOSEK sometimes gives the best optimality gap.
However, as we have shown in the previous subsections, MOSEK is not scalable to high dimensions.
Additionally, the results in differential equations show that MOSEK and l0bnb are not competitive in discovering differential equations (datasets have $n \geq p$), which is the main focus of this paper.
In the future, we plan to extend \ourmethod{} to develop a more scalable algorithm for the case $p >> n$.
}
\label{appendix_tab:expt3_rho=0.1_yes_warmstart_lambda2=0.001}
\end{table}

\begin{table}[h]
\centering
\begin{tabular}{cccccc}
\toprule
\# of samples & method &  upper bound & support size &  time(s) &  gap(\%) \\ \midrule
100 & SOS1 + warm start & $-2.76721 \cdot 10^{3}$ & 10 & 3601.59 & 13.09\\
100 & big-M50 + warm start & $-2.76721 \cdot 10^{3}$ & 10 & 3600.66 & 13.09\\
100 & big-M20 + warm start & $-2.76721 \cdot 10^{3}$ & 10 & 3600.32 & 13.09\\
100 & big-M5 + warm start & $-2.76721 \cdot 10^{3}$ & 10 & 3602.22 & 13.09\\
100 & persp. + warm start & $-2.76721 \cdot 10^{3}$ & 10 & 3600.09 & 13.09\\
100 & eig. persp. + warm start & $-2.76721 \cdot 10^{3}$ & 10 & 3600.03 & 13.09\\
100 & Convex opt. persp. + warm start & $-2.51378 \cdot 10^{3}$ & 10 & 2128.32 & 24.44\\
100 & MSK persp. + warm start & $-2.71018 \cdot 10^{3}$ & 10 & 3599.92 & 15.36\\
100 & SubsetSelectCIO + warm start & $-7.59815 \cdot 10^{2}$ & 10 & 3651.10 & 311.88\\
100 & L0BnB & $-7.78035 \cdot 10^{2}$ & 1 & 7218.43 & 255.36\\
100 & ours & $-2.76721 \cdot 10^{3}$ & 10 & 3600.35 & 13.08\\
\midrule
200 & SOS1 + warm start & $-4.15848 \cdot 10^{3}$ & 10 & 3600.46 & 17.23\\
200 & big-M50 + warm start & $-4.15848 \cdot 10^{3}$ & 10 & 3600.42 & 17.23\\
200 & big-M20 + warm start & $-4.15848 \cdot 10^{3}$ & 10 & 3600.30 & 17.23\\
200 & big-M5 + warm start & $-4.15848 \cdot 10^{3}$ & 10 & 3600.10 & 17.23\\
200 & persp. + warm start & $-4.15848 \cdot 10^{3}$ & 10 & 3600.12 & 17.23\\
200 & eig. persp. + warm start & $-4.15848 \cdot 10^{3}$ & 10 & 3600.47 & 17.23\\
200 & Convex opt. persp. + warm start & $-4.11040 \cdot 10^{3}$ & 10 & 2159.71 & 18.55\\
200 & MSK persp. + warm start & $-4.15848 \cdot 10^{3}$ & 10 & 3599.93 & 17.11\\
200 & SubsetSelectCIO + warm start & $-3.14066 \cdot 10^{3}$ & 10 & 3632.89 & 55.23\\
200 & L0BnB & $-1.20245 \cdot 10^{3}$ & 1 & 7208.71 & 237.38\\
200 & ours & $-4.15848 \cdot 10^{3}$ & 10 & 3600.13 & 17.22\\
\midrule
500 & SOS1 + warm start & $-1.00605 \cdot 10^{4}$ & 10 & 3600.08 & 20.47\\
500 & big-M50 + warm start & $-1.00605 \cdot 10^{4}$ & 10 & 3600.08 & 20.47\\
500 & big-M20 + warm start & $-1.00605 \cdot 10^{4}$ & 10 & 3600.80 & 20.47\\
500 & big-M5 + warm start & $-1.00605 \cdot 10^{4}$ & 10 & 3600.52 & 20.47\\
500 & persp. + warm start & $-1.00605 \cdot 10^{4}$ & 10 & 3600.04 & 20.47\\
500 & eig. persp. + warm start & $-1.00605 \cdot 10^{4}$ & 10 & 3600.06 & 20.47\\
500 & Convex opt. persp. + warm start & $-1.00605 \cdot 10^{4}$ & 10 & 2059.28 & 20.41\\
500 & MSK persp. + warm start & $-1.00605 \cdot 10^{4}$ & 10 & 3600.08 & 20.33\\
500 & SubsetSelectCIO + warm start & $-7.47048 \cdot 10^{3}$ & 10 & 3748.65 & 62.24\\
500 & L0BnB & $-1.00605 \cdot 10^{4}$ & 10 & 1835.06 & 0.00\\
500 & ours & $-1.00605 \cdot 10^{4}$ & 10 & 3601.58 & 20.46\\
\bottomrule
\end{tabular}
\caption{Comparison of time and optimality gap on the synthetic datasets (p = 3000, k = 10, $\rho=0.1$, $\lambda_2=0.1$). All baselines use our beam search solution as a warm start whenever possible.
The results in the case of $p >> n$ show that \ourmethod{} is competitive with other MIP methods.
l0bnb can solve the problem when $n=500$ but has difficulty in finding the desired sparsity level in the case of $n=100$ and $n=200$.
MOSEK sometimes gives the best optimality gap.
However, as we have shown in the previous subsections, MOSEK is not scalable to high dimensions.
Additionally, the results in differential equations show that MOSEK and l0bnb are not competitive in discovering differential equations (datasets have $n \geq p$), which is the main focus of this paper.
In the future, we plan to extend \ourmethod{} to develop a more scalable algorithm for the case $p >> n$.
}
\label{appendix_tab:expt3_rho=0.1_yes_warmstart_lambda2=0.1}
\end{table}

\begin{table}[h]
\centering
\begin{tabular}{cccccc}
\toprule
\# of samples & method &  upper bound & support size &  time(s) &  gap(\%) \\ \midrule
100 & SOS1 + warm start & $-2.61800 \cdot 10^{3}$ & 10 & 3600.36 & 19.31\\
100 & big-M50 + warm start & $-2.61800 \cdot 10^{3}$ & 10 & 3600.18 & 19.31\\
100 & big-M20 + warm start & $-2.61800 \cdot 10^{3}$ & 10 & 3600.40 & 19.31\\
100 & big-M5 + warm start & $-2.61800 \cdot 10^{3}$ & 10 & 3602.50 & 19.31\\
100 & persp. + warm start & $-2.61800 \cdot 10^{3}$ & 10 & 3600.47 & 19.54\\
100 & eig. persp. + warm start & $-2.61800 \cdot 10^{3}$ & 10 & 3600.19 & 19.54\\
100 & Convex opt. persp. + warm start & $-2.40846 \cdot 10^{3}$ & 10 & 1968.34 & 25.24\\
100 & MSK persp. + warm start & $-2.61800 \cdot 10^{3}$ & 10 & 3599.93 & 10.09\\
100 & SubsetSelectCIO + warm start & $-1.79729 \cdot 10^{3}$ & 10 & 3646.28 & 74.12\\
100 & L0BnB & $-7.11936 \cdot 10^{2}$ & 1 & 7212.43 & 236.77\\
100 & ours & $-2.61806 \cdot 10^{3}$ & 10 & 3600.78 & 19.09\\
\midrule
200 & SOS1 + warm start & $-4.06441 \cdot 10^{3}$ & 10 & 3600.07 & 19.69\\
200 & big-M50 + warm start & $-4.06441 \cdot 10^{3}$ & 10 & 3600.92 & 19.69\\
200 & big-M20 + warm start & $-4.06441 \cdot 10^{3}$ & 10 & 3600.61 & 19.69\\
200 & big-M5 + warm start & $-4.06441 \cdot 10^{3}$ & 10 & 3600.07 & 19.69\\
200 & persp. + warm start & $-4.06441 \cdot 10^{3}$ & 10 & 3600.03 & 11.80\\
200 & eig. persp. + warm start & $-4.06441 \cdot 10^{3}$ & 10 & 3600.79 & 11.80\\
200 & Convex opt. persp. + warm start & $-4.01212 \cdot 10^{3}$ & 10 & 2175.61 & 16.91\\
200 & MSK persp. + warm start & $-4.06441 \cdot 10^{3}$ & 10 & 3599.92 & 9.05\\
200 & SubsetSelectCIO + warm start & $-2.68289 \cdot 10^{3}$ & 10 & 3704.01 & 81.71\\
200 & L0BnB & $-1.15079 \cdot 10^{3}$ & 1 & 7207.25 & 219.55\\
200 & ours & $-4.06441 \cdot 10^{3}$ & 10 & 3600.92 & 19.45\\
\midrule
500 & SOS1 + warm start & $-9.96316 \cdot 10^{3}$ & 10 & 3601.58 & 21.39\\
500 & big-M50 + warm start & $-9.96316 \cdot 10^{3}$ & 10 & 3602.06 & 21.39\\
500 & big-M20 + warm start & $-9.96316 \cdot 10^{3}$ & 10 & 3602.58 & 21.39\\
500 & big-M5 + warm start & $-9.96316 \cdot 10^{3}$ & 10 & 3600.48 & 21.28\\
500 & persp. + warm start & $-9.96316 \cdot 10^{3}$ & 10 & 3600.14 & 21.65\\
500 & eig. persp. + warm start & $-9.96316 \cdot 10^{3}$ & 10 & 3600.12 & 13.16\\
500 & Convex opt. persp. + warm start & $-9.96316 \cdot 10^{3}$ & 10 & 2145.98 & 16.65\\
500 & MSK persp. + warm start & $-9.96316 \cdot 10^{3}$ & 10 & 3600.03 & 11.45\\
500 & SubsetSelectCIO + warm start & $-7.48575 \cdot 10^{3}$ & 10 & 3793.20 & 61.91\\
500 & L0BnB & $-9.96316 \cdot 10^{3}$ & 10 & 1483.07 & 0.00\\
500 & ours & $-9.96316 \cdot 10^{3}$ & 10 & 3600.96 & 21.14\\
\bottomrule
\end{tabular}
\caption{Comparison of time and optimality gap on the synthetic datasets (p = 3000, k = 10, $\rho=0.1$, $\lambda_2=10.0$). All baselines use our beam search solution as a warm start whenever possible.
The results in the case of $p >> n$ show that \ourmethod{} is competitive with other MIP methods.
When the number of samples is 100, OKRidge is able to find a slightly better solution through the branch-and-bound process, but other solvers are stuck with the warm start solution.
l0bnb can solve the problem when $n=500$ but has difficulty in finding the desired sparsity level in the case of $n=100$ and $n=200$.
MOSEK sometimes gives the best optimality gap.
However, as we have shown in the previous subsections, MOSEK is not scalable to high dimensions.
Additionally, the results in differential equations show that MOSEK and l0bnb are not competitive in discovering differential equations (datasets have $n \geq p$), which is the main focus of this paper.
In the future, we plan to extend \ourmethod{} to develop a more scalable algorithm for the case $p >> n$.
}
\label{appendix_tab:expt3_rho=0.1_yes_warmstart_lambda2=10.0}
\end{table}

\clearpage
\subsection{Comparison between Fast Solve and ADMM}\label{appendix_subsec:fastsolve_vs_admm}

In this subsection, we provide the empirical evidence that the ADMM method is more effective than the fast solve method in terms of the computing a tighter lower bound, which helps speed up the certification process.

\begin{figure}[H]
    \centering
    \includegraphics[width=0.9\textwidth]{figures/NeurIPS_rebuttal/fastsolve_vs_admm_exp1_time_gap_warmstart.png}
    \vspace{-3mm}
    \caption{Comparison of running time (top row) and optimality gap (bottom row) between using fast solve (Equation~\eqref{eq:h(gamma)_decomposed_as_ridge_plus_regret_lambda}) alone and using fast solve and ADMM (Equations~\eqref{eq:ADMM_x_update}-~\eqref{eq:ADMM_z_update}) to calculate the lower bounds in the BnB tree with three correlation levels $\rho=0.1, 0.5, 0.9$ ($n=10000, k=10$).}
    \label{fig:fastsolve_vs_admm_expt1}

    \bigskip

    \centering
    \includegraphics[width=0.9\textwidth]{figures/NeurIPS_rebuttal/fastsolve_vs_admm_exp2_time_gap_warmstart.png}
    \vspace{-3mm}
    \caption{Comparison of running time (top row) and optimality gap (bottom row) between using fast solve (Equation~\eqref{eq:h(gamma)_decomposed_as_ridge_plus_regret_lambda}) alone and using fast solve and ADMM (Equations~\eqref{eq:ADMM_x_update}-~\eqref{eq:ADMM_z_update}) to calculate the lower bounds in the BnB tree with three correlation levels $\rho=0.1, 0.5, 0.9$ ($p=3000, k=10$).}
    \label{fig:fastsolve_vs_admm_expt2}
\end{figure}





\clearpage

\section{More Experimental Results on Dynamical Systems}\label{app:more_exp_results_dynamical_systems}

\subsection{Results Summarization}

\paragraph{Comparison with MIOSR}
We first show a direct comparison between \ourmethod{} and MIOSR (based on SOS1 formulation).
The results are in Appendix~\ref{appendix_subsec:comparison_with_MIOSR}, where we show \ourmethod{} outperforms MIOSR in terms of both solution quality and running time.

\paragraph{Comparison with Different MIP Formualtions}
Next, we compare \ourmethod{} with other MIP formulations (SOS1, big-M, perspective, and eigen-perspective) as done on the synthetic benchmarks.
These formulations have never been applied to the differential equation experiments.
Similar to the synthetic benchmarks, we show that \ourmethod{} obtain better solution quality and smaller running time.
The results are shown in Appendix~\ref{appendix_subsec:comparison_with_MIPs}.

\paragraph{Comparison between Beam Search and More Heuristics}
In addition, we also compare our beam search method with many more heuristic methods from the sparse regression literature.
Beam search outperforms all methods in terms of solution qualities in presence of highly correlated features.
Block Decomposition algorithm also achieves high-quality solutions, but is much slower than beam search.
The results are shown in Appendix~\ref{appendix_subsec:comparison_with_heuristics}

\paragraph{Comparison with Different MIP Formulations Warm Started by Beam Search}
Due to the superiority of beam search, we give beam search solutions as warm starts to Gurobi with different MIP formulations and repeat the experiments.
Our warm start improve the solution quality and running time, but \ourmethod{} still outperforms other MIP formulations in terms of running time.
The results are shown in Appendix~\ref{appendix_subsec:comparison_with_MIPs_warmstart}.

\paragraph{Comparison with Other MIP Solvers and Formulations}
Lastly, we solve the same problem using the MOSEK solver, l0bnb, and SubsetSelectionCIO.
For this application, l0bnb has a time limit of 10 seconds for each $\ell_0$ regularization and 60 seconds for the total time limit.
All other algorithms have a total time limit of 30 seconds as mentioned in the experimental setup subsection.
Although they are not as competitive as Gurobi as we have shown on the synthetic benchmarks, we still conduct the experiments for the purpose of thorough comparison.
Results in Appendix~\ref{appendix_subsec:comparison_with_other_MIPs} show that \ourmethod{} outperforms these MIP solvers/formulations.

\clearpage
\subsection{Direct Comparison of \ourmethod{} with MIOSR}
\label{appendix_subsec:comparison_with_MIOSR}

In the appendix, we directly compare MIOSR (based on the SOS1 formulation) and our method on the differential equation experiments. The results are shown in Figure~\ref{fig:differential_MIOSR_vs_ours_appendix}.

Although MIOSR is faster than \ourmethod{} on the Lorenz system, \ourmethod{} is able to find better solutions.
See Appendix~\ref{appendix_subsec:comparison_with_MIPs} and Figure~\ref{fig:differential_MIPs_vs_ours_appendix} where the eigen-perspective formulation doesn't have this problem and can produce high-quality solutions.

On the Hopf system, MIOSR is faster than \ourmethod{}.
However, the number of features is small ($p=21$), and the running time is negligible compared to Lorenz and MHD systems.

On the MHD system ($p=462$), \ourmethod{} outperforms MIOSR in terms of both solution qualities and running times.
This shows that \ourmethod{} is much more scalable to high dimensions than MIOSR.

\begin{figure*}[h]
    \centering
    \includegraphics[width=0.95\textwidth]{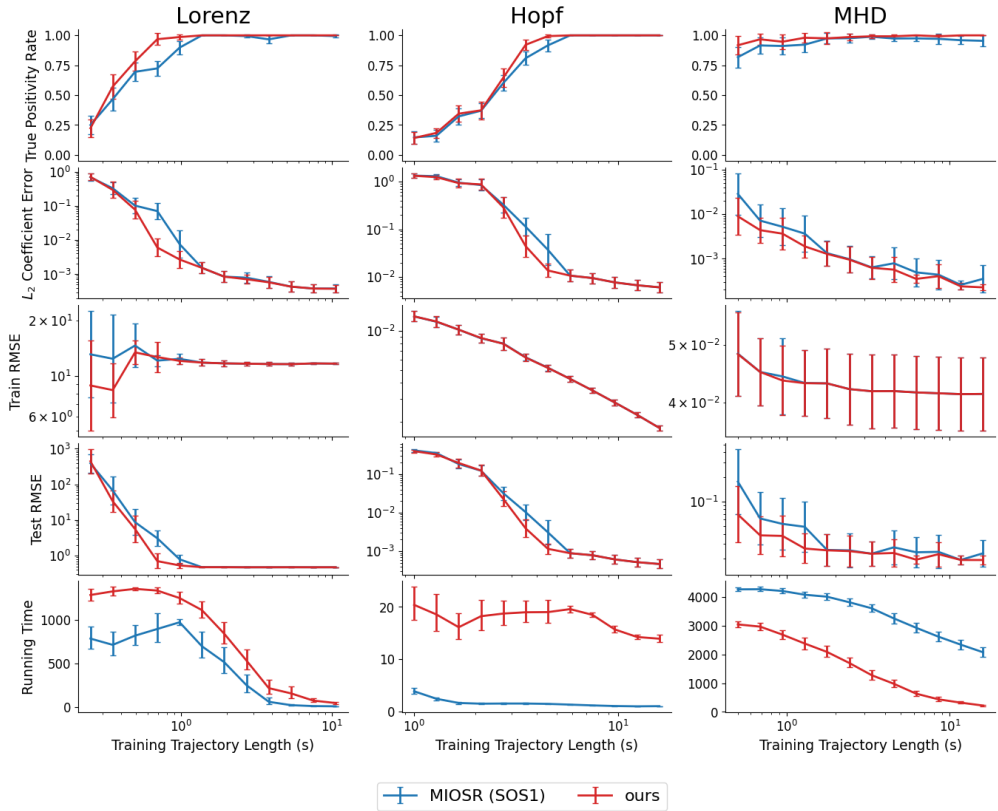}
    \caption{Comparison between our method and MIOSR on the experiments of discovering sparse differential equations. We obtain better performance on the true positivity rate, $L_2$ coefficient error, and test RMSE. }    \label{fig:differential_MIOSR_vs_ours_appendix}
\end{figure*}

\clearpage
\subsection{Comparison of \ourmethod{} with SOS1, Big-M, Perspective, and Eigen-Perspective Formulations}
\label{appendix_subsec:comparison_with_MIPs}
Besides the MIOSR baseline (which is based on the SOS1 formualtion), we also compare with other MIP formulations, including Big-M ($M=50$), perspective, and eigen-perspective formulations. The results are shown in Figure~\ref{fig:differential_MIPs_vs_ours_appendix}.
The big-M method produces errors on the Lorenz system, so its result is not shown on the left column.
We have discussed about the SOS1 formulation in the last subsection.
On the Lorenz system, \ourmethod{} is faster than the perspective and eigen-perspective formulations.
On the Hopf system ($p=21$), \ourmethod{} is slower than Gurobi, but \ourmethod{} takes only 20 seconds.
On the high-dimensional MHD system, \ourmethod{} is significantly faster than all MIP formulations.

\begin{figure*}[h]
    \centering
    \includegraphics[width=0.95\textwidth]{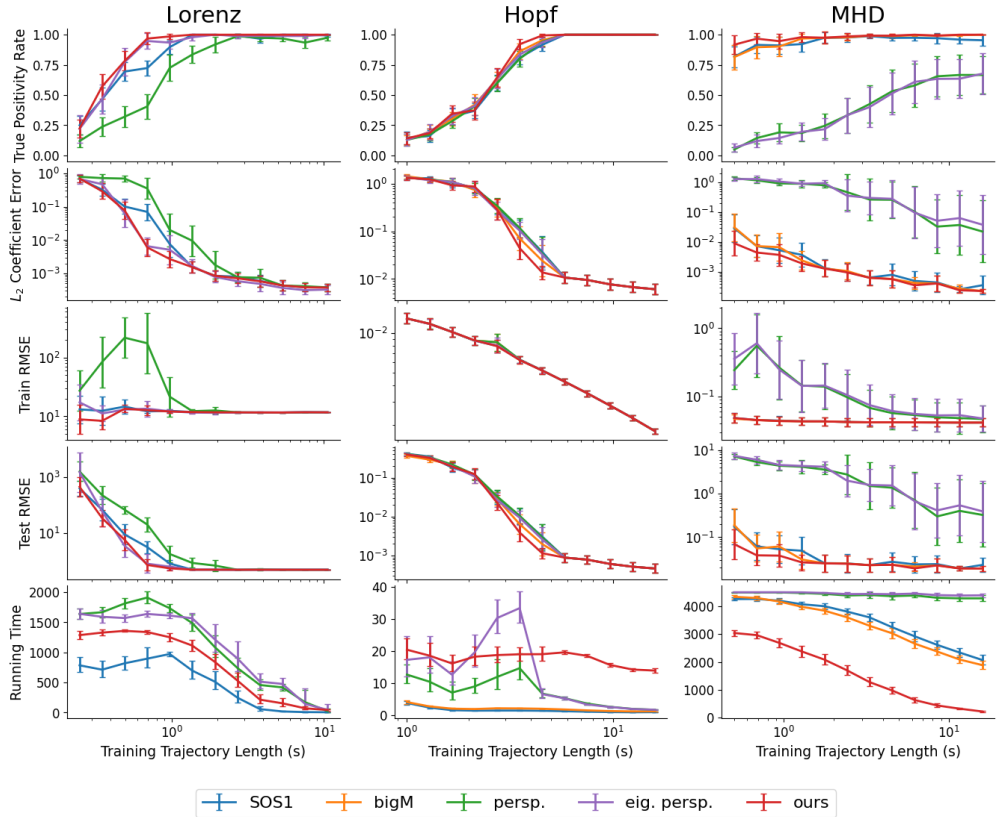}
    \caption{Comparison between our method and other MIP formulations solved by Gurobi on the experiments of discovering sparse differential equations. We obtain better performance on the true positivity rate, $L_2$ coefficient error, and test RMSE. SOS1 is faster than ours on the Lorenz system, but the solution quality is worse. \ourmethod{} is slower than Gurobi on the Hopf system (p=21), but the time difference is not significant. On the high-dimensional MHD system, \ourmethod{} is significantly faster than Gurobi with all exisiting MIP formulations.}
    \label{fig:differential_MIPs_vs_ours_appendix}
\end{figure*}

\clearpage
\subsection{Comparison of Our Beam Search with More Heuristic Baselines}
\label{appendix_subsec:comparison_with_heuristics}
Next, we compare our proposed beam search method with more heuristic methods from the sparse regression literature.
The results are shown in Figure~\ref{fig:differential_heuristics_vs_ours_appendix}.
Since many heuristic method implementations do not consider the ridge regression, we set $\ell_2$ regularization for all algorithms to be $0$ for a fair comparison.
Since the features are highly correlated, most of the existing heuristic methods do not produce high-quality solutions except the Block Decomposition algorithm.
However, our beam search method is significantly faster than the Block Decomposition algorithm.

Moreover, for the purpose of completeness, we also compare with LASSO, the classical sparse learning method.
Other researchers have found that LASSO performs poorly in this context.
We confirm this claim experimentally.
The result is shown in Figure~\ref{fig:lasso_differential_full}.

\begin{figure*}[h]
    \centering
    \includegraphics[width=0.95\textwidth]{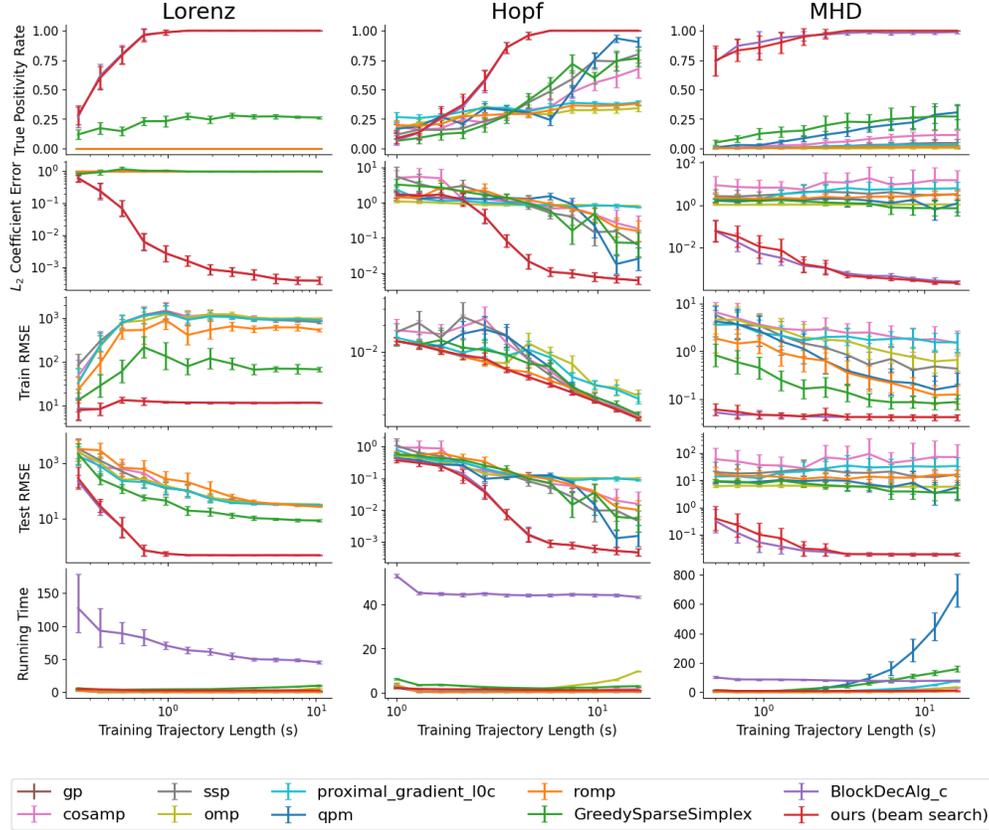}
    \caption{Comparison between our method and other heuristic methods on the experiments of discovering sparse differential equations. We obtain better performance on the true positivity rate, $L_2$ coefficient error, and test RMSE. The Block Decomposition algorithm also produces high-quality solutions but is significantly slower than our beam search.}    \label{fig:differential_heuristics_vs_ours_appendix}
\end{figure*}

\begin{figure}[h]
    \centering
    \includegraphics[width=.9\textwidth]{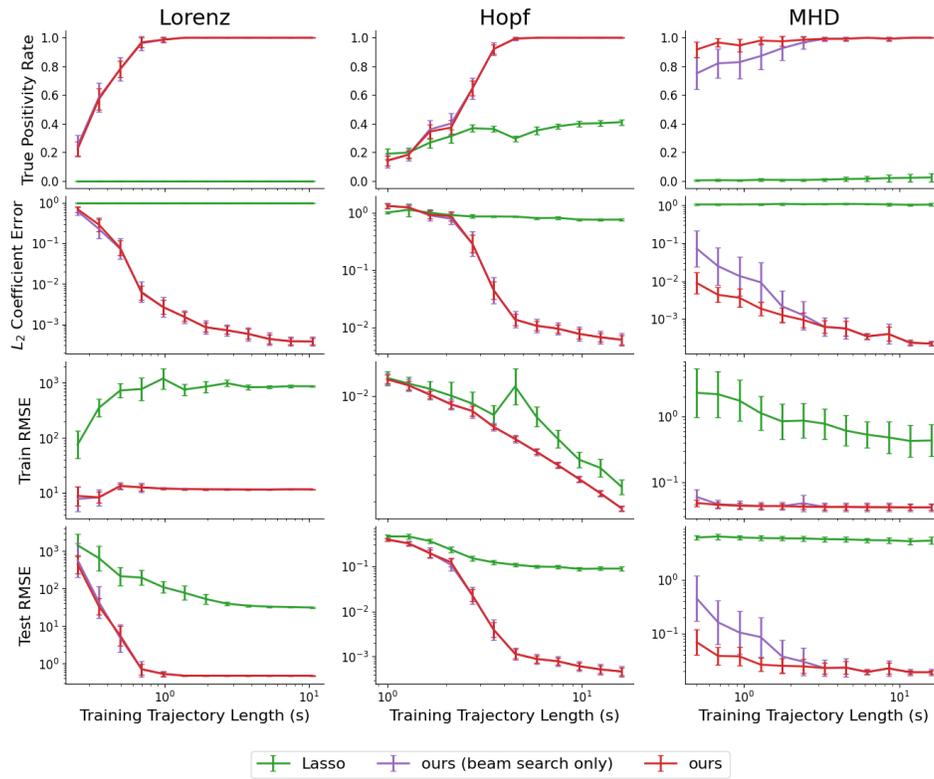}
    \vspace{-3mm}
    \caption{Results on discovering sparse differential equations.
    Lasso does not do well in this application.
    Beam search lags behind OKRidge on the MHD system.
    This shows that beam search alone is not sufficient, and it is necessary to use BnB to obtain the optimal solution.
    }
    \label{fig:lasso_differential_full}
\end{figure}

\clearpage
\subsection{Comparison of \ourmethod{} with MIPs Warmstarted by Our Beam Search}
\label{appendix_subsec:comparison_with_MIPs_warmstart}
A natural question to ask is whether Gurobi would be faster than \ourmethod{} in terms of running speed if we give the high-quality solutions from last subsection as warm starts to Gurobi .
Here, we compare with other formulations warmstarted by our beam search solutions.
The results are shown in Figure~\ref{fig:differential_MIPs_warmstart_vs_ours_appendix}.
By comparing Figure~\ref{fig:differential_MIPs_vs_ours_appendix} with Figure~\ref{fig:differential_MIPs_warmstart_vs_ours_appendix}, we see that our beam search solutions, if used as warm starts, improve Gurobi's speed.
However, \ourmethod{} is still significantly faster than all MIP formulations on the high-dimensional MHD system.

\begin{figure*}[h]
    \centering
    \includegraphics[width=0.95\textwidth]{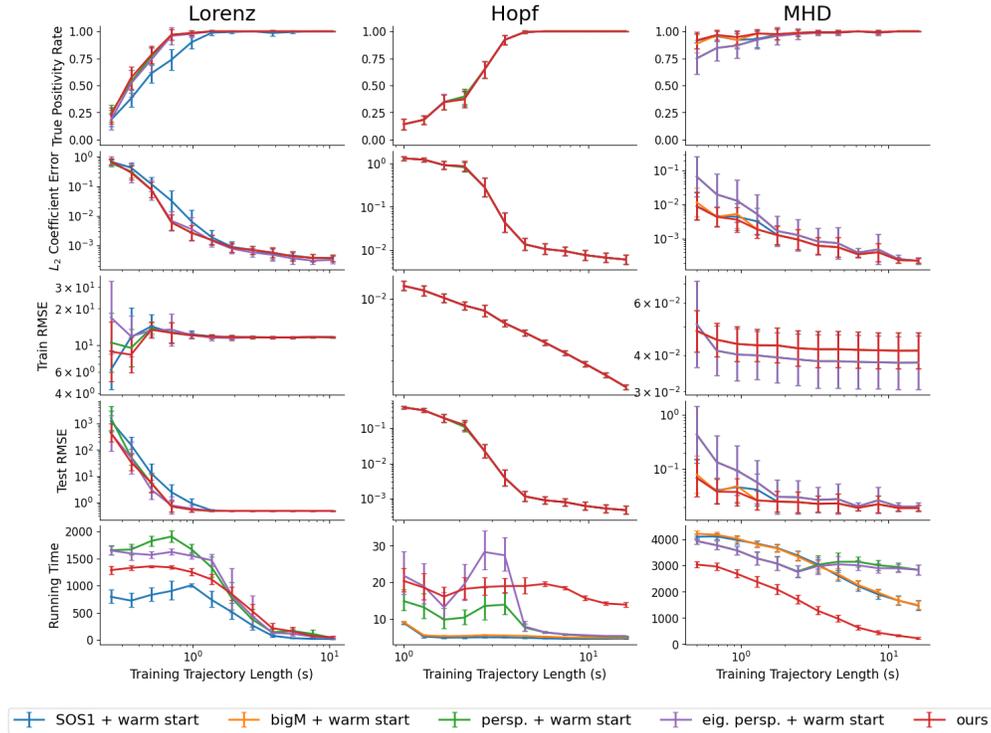}
    \caption{Comparison between our method and other MIP formulations solved by Gurobi, which was warmstarted by our beam search solutions on the experiments of discovering sparse differential equations. \ourmethod{} is slower than Gurobi on the Hopf system ($p=21$), but the running time is only 20 seconds. On the Lorenz and MHD systems, \ourmethod{} is faster than other MIP formulations, especially on the high-dimensional MHD system ($p=462$).}    \label{fig:differential_MIPs_warmstart_vs_ours_appendix}
\end{figure*}

\clearpage
\subsection{Comparison of \ourmethod{} with MOSEK solver, SubsetSelectionCIO, and L0BNB}
\label{appendix_subsec:comparison_with_other_MIPs}
Lastly, we compare with other MIP solvers/formulations, including the MOSEK solver, SubsetSelectionCIO, and l0bnb.
The results are shown in Figure~\ref{fig:differential_additional_MIPs_vs_ours_appendix}.

MOSEK (perspective formualtion) produces good solutions on the Hopf dataset, but could not produce optimal solutions on the Lorenz system.
MOSEK (optimal formulation) produes suboptimal solutions on the Lorenz and Hopf systems.
However, we are using MOSEK to solve the relaxed convex SDP problem.
This demonstrates why we need to tackle the NP hard problem instead of the relaxed convex problem.
MOSEK runs out of memory on the MHD dataset, so we don't show the results on the MHD dataset.

SubsetSelectionCIO doesn't do well on the Lorenz and MHD systems.
On many instances, SubsetSelectionCIO would have errors.

l0bnb performs poorly on these datasets.
One major reason is that l0bnb could not retrieve the exact support size ($k=1,2,3,4,5$) because the sparsity is controlled implicitly through $\ell_0$ regularization.
It takes a long time to run these differential experiments.
l0bnb couldn't find a good $\ell_0$ regularization which produces the right sparsity level within the time limit.

Please see Appendix~\ref{appendix_subsec:results_summarization_synthetic_benchmark} for a detailed discussion of the drawbacks on using MOSEK, SubsetSelectionCIO, and l0bnb.

\begin{figure*}[h]
    \centering
    \includegraphics[width=0.95\textwidth]{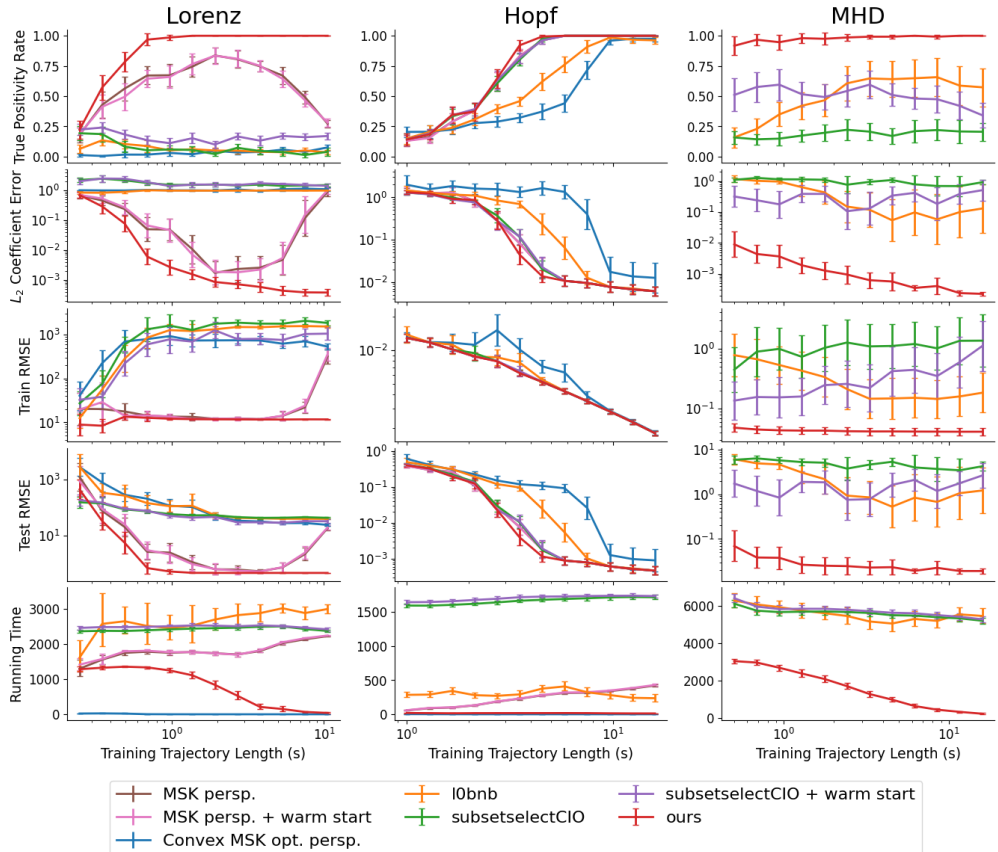}
    \caption{Comparison between our method and other MIP solvers or formulations on the experiments of discovering sparse differential equations. We obtain better performance on the true positivity rate, $L_2$ coefficient error, and test RMSE.}    \label{fig:differential_additional_MIPs_vs_ours_appendix}
\end{figure*}

\end{document}